\DeclarePairedDelimiter\ceil{\lceil}{\rceil}
\DeclarePairedDelimiter\floor{\lfloor}{\rfloor}
\DeclareMathOperator\supp{supp}
\newcommand{\reals}{\ensuremath{\mathbb{R}}}
\newcommand{\naturals}{\ensuremath{\mathbb{N}}}
\newcommand{\ints}{\ensuremath{\mathbb{Z}}}
\newcommand{\defeq}{\vcentcolon=}
\newcommand{\eqdef}{=\vcentcolon}
\newcommand{\prob}{\ensuremath{\mathbb{P}}}
\newcommand{\cN}{\ensuremath{\mathcal{N}}}
\newcommand{\cC}{\ensuremath{\mathcal{C}}}
\newcommand{\cX}{\ensuremath{\chi}}
\newcommand{\cU}{\ensuremath{\mathcal{U}}}
\newcommand{\cO}{\ensuremath{\mathcal{O}}}
\newcommand{\cS}{\ensuremath{\mathcal{S}}}
\newcommand{\cP}{\ensuremath{\mathcal{P}}}
\newcommand{\cE}{\ensuremath{\mathcal{E}}}
\newcommand{\cQ}{\ensuremath{\mathcal{Q}}}
\newcommand{\cT}{\ensuremath{\mathcal{T}}}
\newcommand{\va}{\mathbf a}
\newcommand{\vz}{\mathbf z}
\newcommand{\vx}{\mathbf x}
\newcommand{\vy}{\mathbf y}
\newcommand{\vb}{\mathbf b}
\newcommand{\vh}{\mathbf h}
\newcommand{\vr}{\mathbf r}
\newcommand{\vw}{\mathbf w}
\newcommand{\vt}{\mathbf t}
\newcommand{\vq}{\mathbf q}
\newcommand{\vu}{\mathbf u}
\newcommand{\vv}{\mathbf v}
\newcommand{\mW}{\ensuremath{\mathbf{W}}}
\newcommand{\mX}{\ensuremath{\mathbf{X}}}
\newcommand{\mY}{\ensuremath{\mathbf{Y}}}
\newcommand{\mB}{\ensuremath{\mathbf{B}}}
\newcommand{\mG}{\ensuremath{\mathbf{G}}}
\newcommand{\mA}{\ensuremath{\mathbf{A}}}
\newcommand{\mR}{\ensuremath{\mathbf{R}}}
\newcommand{\mV}{\ensuremath{\mathbf{V}}}
\newcommand{\mP}{\ensuremath{\mathbf{P}}}
\newcommand{\mU}{\ensuremath{\mathbf{U}}}
\newtheorem{theorem}{Theorem}[section]
\newtheorem{corollary}{Corollary}[theorem]
\newtheorem{lemma}[theorem]{Lemma}
\newtheorem{definition}{Definition}[section]
\title{Encoder Blind Combinatorial Compressed Sensing}
\author{
 Michael Murray, Jared Tanner\\
  Mathematical Institute, University of Oxford, UK\\
  \& The Alan Turing Institute, London, UK\\
  \texttt{[murray,tanner]@maths.ox.ac.uk}\\
  }
\begin{document}
\maketitle

\begin{abstract}
In its most elementary form, compressed sensing studies the design of decoding algorithms to recover a sufficiently sparse vector or code $\vx \in \reals^n$, from a linear measurement vector $\vy = \mA \vx \in \reals^m$, normally in the context that $m \ll n$. Typically it is assumed that the decoder has access to the encoder matrix $\mA \in \reals^{m \times n}$, which in the combinatorial case is sparse and binary. In this paper we consider the problem of designing a decoder to recover a set of sparse codes from their linear measurements alone, that is without access to $\mA$. To this end we study the matrix factorisation task of recovering $\mA$ and $\mX$ from $\textbf{Y} = \textbf{A}\textbf{X}$, where $\mY \in \reals^{m \times N}$ is a set of $N$ measurement vectors, $\textbf{A}$ is an $m \times n$ sparse, binary matrix which we are free to design, and $\textbf{X}$ is an $n \times N$ real matrix whose columns are $k$-sparse. The contribution of this paper is a computationally efficient decoding algorithm, Decoder-Expander Based Factorisation (D-EBF), with strong performance guarantees. In particular, under mild assumptions on the sparse coding matrix and by deploying a novel random encoder matrix, we prove that D-EBF recovers both the encoder and sparse coding matrix at the optimal measurement rate with high probability in $n$, from a near optimal number $N =\Omega \left(\frac{n}{k}\log^2(n)\right)$ of measurement vectors. In addition, our experiments demonstrate the efficacy and computational efficiency of D-EBF in practice. Beyond compressed sensing our results may be of interest for researchers working in areas such as linear sketching, coding theory and matrix compression.
\end{abstract}

\section{Introduction}\label{sec:problem}

Compressed sensing (CS) \cite{1580791,4016283,c2005decoding,quant_uncertainty,  stable_sig_recovery,Donoho2004Compressed}, in its simplest setting, studies the design of sensing or encoder matrices $\mA \in \reals^{m \times n}$ and decoder algorithms in order to recover a sparse vector $\vx \in \reals^{n}$ from $m\ll n$ linear measurements, $\vy = \mA \vx \in \reals^m$. A primary motivation for compressed sensing is the recovery and reconstruction of sparse signals using a number of measurements well below that suggested by the Nyquist-Shannon sampling theorem. In particular, instead of sampling a signal at a high rate and then performing compression, which can be wasteful, one can directly sample or sense the data in a compressed form, thereby reducing the number of measurements required for reconstruction. We refer the reader to \cite{elad_book,intro_CS} for an in-depth overview and survey of the topic. Typically when analysing and designing compressed sensing algorithms it is assumed that the decoder has access to both the measurement vector and encoder matrix. Here we instead study the design of an encoder matrix and decoder algorithm to perform compressed sensing under the assumption that the decoder does not have access to the encoder matrix. In this context we refer to the decoder algorithm as being `blind' to the specific encoder matrix used to generate the linear measurements available to it \footnote{In order to avoid confusion we emphasise the distinction here with the problem studied in \cite{blind_cs_yonina}, in which the decoder has knowledge of the encoder matrix, but does not have access to the sparsifying basis of the data in question.}.

Our motivation for studying this problem, and intuition as to how it may be solved, arises from observations on combinatorial compressed sensing \cite{4797639}. In particular, consider an encoder matrix which is sparse, binary, has a fixed number $d \ll m$ ones per column and is the adjacency matrix of $(k,\epsilon,d)$-expander graph: such graphs satisfy the unique neighbour property, defined and discussed in detail in Section \ref{subsec:ccs_expanders}, which bounds the overlap in support between any subset of $k$ columns of $\mA$. With $\vy = \mA \vx$, then this property implies that there exists a threshold on the number of times a sum of two or more entries of $\vx$ can appear in $\vy$. As a result, and if sums over different sets of nonzeros in $\vx$ result in different values, then nonzero entries in $\vx$ can trivially be recovered by identifying the entries in $\vy$ whose frequency of appearance is above this threshold. Furthermore, the location of a nonzero of $\vx$ can then be recovered by finding the column of $\mA$ whose support maximally overlaps the set row locations of $\vy$ in which the nonzero appears. Once a nonzero value of $\vx$ and its location have been recovered, then its contribution can be removed from the measurement vector, thereby allowing for the identification of further nonzeros and their locations. As long as the overlap in support in each subset of $k$ columns of $\mA$ is sufficiently small, then this iterative, peeling approach will fully decode $\vy$, we refer the reader to \cite{ER_paper} for further details.

Consider now the same setting but assume that the decoder is not granted access to $\mA$: can the same principles and techniques be applied? Observe that although it may be possible to recover certain nonzeros of $\vx$ in the same manner as before, without access to $\mA$ it is not possible to recover the locations of these nonzeros, or remove their contribution from $\vy$. Without the ability to peel away the contributions of the recovered nonzeros from $\vy$ then it may not be possible to recover all the nonzeros in $\vx$. To apply the same iterative, peeling technique as before it therefore seems necessary to also recover the encoder matrix. Ignoring for now the recovery of the locations, then identifying a nonzero value of $\vx$ in $\vy$, which does not require access to $\mA$, facilitates the recovery of at least part of the support of a column of $\mA$. If we have access to multiple measurements vectors, and can identify nonzeros in each corresponding to the same column of $\mA$, then by unioning the locations in which they appear it seems possible that a full column of $\mA$ could be recovered. We therefore study the matrix factorisation problem of recovering both the encoder $\mA \in \reals^{m \times n}$ and the $N$ $k$-sparse column codes $\mX \in \reals^{n \times N}$ from $\mY = \mA \mX$. We remark that to recover all of the columns of $\mA$ it is necessary that $N \geq n/k$. This problem is daunting, indeed, before even considering computational feasibility and efficiency, the conditions ensuring uniqueness of such a factorisation are far from clear. In particular, for any permutation matrix $\mP \in \cP^{n \times n}$ \footnote{Note here that $\cP^{n \times n}$ is used to denote the set of $n \times n$ permutation matrices.}, as $\mY = \mA \mX = \mX \mP^T \mP \mA$ then, regardless of the choice of $\mA$, without any information in addition to $\mY$ the decoder algorithm can at best only hope to recover the sparse codes up to row permutation, meaning the location information is lost. We therefore allow the decoder and encoder to agree on a unique ordering of the columns of $\mA$, discussed in detail in Section \ref{subsec:summary_contributions}, in advance, as under this assumption recovery up to permutation suffices to achieve exact recovery.

The purpose of the rest of this section is to chart a course towards designing an encoder matrix and a decoding algorithm which, under certain assumptions on the sparse coding matrix, can solve the aforementioned matrix factorisation problem with high probability. We start by providing a background on combinatorial compressed sensing and expander graphs in Section \ref{subsec:ccs_expanders}. Building on prior work \cite{DBLP:journals/corr/abs-1804-09212}, in Section \ref{subsec:psb_model} we first introduce a random, binary matrix with the following properties: there are a fixed number $d$ ones per column, there are at most $\ceil{n/ \floor{d/m}}$ ones per row and with high probability in $n$ this random matrix is the adjacency matrix of an expander graph. We describe how to sample an encoder matrix from the distribution of this random matrix in Algorithm \ref{alg:generate_A}. We then discuss assumptions on the sparse coding matrix which we place in order to aid us in deriving uniqueness guarantees and a computationally efficient decoding algorithm: in particular, we assume that the supports of each column have cardinality $k$, are mutually independent and chosen uniformly at random across all $\binom{n}{k}$ possible supports, and that the nonzero entries in each column are dissociated. This last condition, borrowed from the field of additive combinatorics, is fairly mild and, for example, holds almost surely for mutually independent entries sampled from any continuous distribution. The distributions placed on the encoder and sparse coding matrices in turn place a distribution on the measurement matrix, we refer to this distribution as the Permuted Striped Block (PSB) model, defined in Definition \ref{def_data_model}. In Section \ref{subsec:summary_contributions} we present and discuss the key theoretical result of this paper, Theorem \ref{theorem:main}. This theorem, which we prove in Section \ref{sec:theory}, provides performance guarantees for the Decoder-Expander Based Factorisation (D-EBF) algorithm, defined and described in detail in Section \ref{sec:alg}. The key takeaway of this theorem is that D-EBF will successfully recover both the encoder and sparse coding matrices from a measurement matrix drawn from the PSB model with high probability in $n$ as long as $N=\Omega\left(\frac{n}{k}\log^2(n) \right)$. Note that this sample complexity is optimal up to additional logarithmic factors, which arise due to a variant of the coupon collector problem, encountered as a result of the distribution placed on the support of the sparse coding matrix. In addition to our theoretical analysis, in Section \ref{sec:EBF_numerics} we demonstrate the computational efficiency and efficacy of D-EBF in practice on synthetic, mid-sized problems. Finally, we remark that the problem studied in this paper has similarities with those studied in dictionary learning \cite{elad_book, 1710377, Lee07efficientsparse, 10.5555/2981780.2981909,Olshausen97sparsecoding} and in particular dictionary recovery \cite{2013arXiv1309.1952A, DBLP:journals/corr/AroraBGM14, DBLP:journals/corr/AroraGM13, DBLP:journals/corr/BarakKS14, DBLP:journals/corr/abs-1206-5882, DBLP:journals/corr/SunQW15b, DBLP:journals/corr/SunQW15c}. We discuss these connections in detail in Section \ref{subsec:related_work}.

\subsection{Compressed sensing and expander graphs} \label{subsec:ccs_expanders}
In compressed sensing many of the theorems guaranteeing the recovery of the sparse code rely on the encoder matrix satisfying one or more properties, in particular the nullspace property \cite{best_k_approx,959265}, a condition on the mutual coherence \cite{959265,Donoho2197} or the restricted isometry property (RIP) \cite{simple_proof_RIP, sharp_RIP, c2005decoding}. Although constructing matrices which satisfy a condition on the mutual coherence can be accomplished in a computationally efficient manner, it is NP-hard to compute both the nullspace constant (NSC) and restricted isometry constant (RIC) of a matrix \cite{6658871}. On the other hand, it is recognised that stronger conditions can be proved via the nullspace property and RIP \cite{intro_CS}. A perhaps surprising result is that a number of random matrix constructions, notably Gaussian, Bernoulli and partial Fourier, satisfy RIP with probability approaching one exponentially fast in the problem size \cite{intro_CS}. Furthermore, not only are these random constructions popular from the perspective of deriving improved recovery guarantees, but also play a key role in applications: for instance, Gaussian matrices achieve the optimal measurement rate $m= \cO(k \log (n/k))$ with high probability while partial Fourier matrices are the natural sensing mechanism in Tomography \cite{cs_mri}. 

Combinatorial compressed sensing (CCS) \cite{4797639} analyses the compressed sensing problem in the setting where the encoder matrix is sparse and binary. Not only are these types of encoder matrix the relevant sensing mechanism in certain applications, e.g., the single pixel camera \cite{single_pixel}, but they also have benefits in terms of reduced computation and memory overheads compared with dense alternatives. Encoder matrices of this form are often interpreted as the adjacency matrices of an unbalanced bipartite graph. Furthermore, in \cite{4797639} it was shown that when the encoder matrix is the adjacency matrix of a $(k,\epsilon,d)$-expander graph, then the optimal measurement rate can be achieved.
\begin{definition} [\textbf{(k, $\epsilon$, d)-expander  graph}] \label{def_exp}
	Consider a left d-regular bipartite graph $G=([n], [m], E)$, for any $\cS \subseteq [n]$ let $\cN(\cS) \defeq \{j \in [m]: \exists l \in \cS \; s.t. \; (l,j) \in E \}$ be the subset of nodes in $[m]$ connected to a node in $\cS$. With $\epsilon \in (0,1)$, then $G$ is a $(k, \epsilon, d)$-expander graph iff
	\begin{equation} \label{equation_exp}
	| \cN(\cS) | > (1-\epsilon)d |\cS| \quad \forall \quad \cS \in [n]^{(\leq k)}.
	\end{equation}
\end{definition}
Here $[n]^{(\leq k)}$ denotes the set of subsets of $[n]$ with cardinality at most $k$ and $\epsilon$ is the expansion parameter, which plays a role analogous to that of the RIC of a dense matrix in proving recovery guarantees \cite{4797639}. In this paper we define the \textit{adjacency matrix  of a} $(k, \epsilon, d)$\textit{-expander graph} $G = ([n], [m], E)$ as an $m \times n$ binary matrix $\mA$, where $a_{j,l} = 1$ iff there is an edge between node $j \in [m]$ and $l \in [n]$ and is $0$ otherwise\footnote{\noindent We note that the adjacency matrices of graphs are often defined to describe the connectivity between all nodes in the graph, however, as we only consider bipartite graphs, in the definition adopted here the edges between nodes in the same group are not set to zero but rather are omitted entirely.}. In all that follows we will use $\cE^{m \times n}_{k, \epsilon, d} \subset \{0,1\}^{m \times n}$ to denote the set of $(k, \epsilon, d)$-expander graph adjacency matrices of dimension $m \times n$. The existence of optimal expanders, in the sense of achieving the optimal measurement rate  $m= \cO(k \log (n/k))$, was proved in \cite{zbMATH03513685,capalbo2002randomness}. With regard to constructing such encoder matrices however, the current best deterministic constructions only achieve a measurement rate of $m= \cO(k^{1+\gamma})$ for some constant $\gamma>0$ \cite{deterministic_construction_expanders}. As a result it is common to resort to random constructions, sampling $A$ from a distribution so that with high probability $A \in \cE^{m \times n}_{k, \epsilon, d}$. A popular and natural construction in this regard is to sample $A$ so that its columns are mutually independent and identically distributed, with the support of each column being chosen uniform at random from all possible supports of size $d$. Current state of the art bounds on the probability that this particular random matrix is a $(k, \epsilon, d)$-expander satisfying the optimal measurement rate can be found in \cite{DBLP:journals/corr/abs-1804-09212}. In particular, we highlight the following result.

\begin{lemma}[\textbf{\cite[Lemma 3.1]{DBLP:journals/corr/abs-1804-09212}}]
Let $A$ be an $m \times n$ random, binary matrix with mutually independent, identically distributed columns, whose supports are drawn uniform at random across all possible supports of cardinality $d$. For any $\epsilon \in (0,1/2)$, suppose as $(k,m,n) \rightarrow \infty$ that $k/m \rightarrow \rho$ and $m/n \rightarrow \alpha_2$, where $\rho, \alpha_2 \in (0,1)$ are constants. As long as $\rho < (1-\gamma) \rho_{BT}(\alpha_2,\epsilon, d)$ for some constant $\gamma \in (0,1)$, then the probability that $A \in \cE^{m \times n}_{k, \epsilon, d} $ approaches one exponentially fast in $n$.
\end{lemma}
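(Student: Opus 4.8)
The plan is to establish this large-deviation bound --- the assertion that the adjacency matrix of a random left-$d$-regular bipartite graph is an expander with high probability --- by the first-moment (union-bound) argument, made quantitative with Stirling-type estimates.

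\textbf{Reduction to a union bound.} By Definition~\ref{def_exp}, $A \notin \cE^{m\times n}_{k,\epsilon,d}$ precisely when some $\cS \subseteq [n]$ of size $s \le k$ satisfies $|\cN(\cS)| \le (1-\epsilon)ds$. Set $t_s \defeq \floor{(1-\epsilon)ds}$; since every column has exactly $d$ ones, $s=1$ never witnesses failure ($d > t_1$), and since $\epsilon<1/2$ one checks that $t_s \ge d$ for all $s\ge 2$. For a fixed $\cS$ of size $s \ge 2$, the event $|\cN(\cS)| \le t_s$ forces $\cN(\cS) \subseteq T$ for some $T \subseteq [m]$ with $|T| = t_s$; as the $s$ columns indexed by $\cS$ are independent and each uniform over the $\binom md$ supports of size $d$, the probability that all of them lie inside a fixed such $T$ equals $(\binom{t_s}{d}/\binom{m}{d})^{s}$. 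A union bound over $T$, then over $\cS$, then over $s$, therefore gives
\begin{equation*}
\prob\!\left(A \notin \cE^{m\times n}_{k,\epsilon,d}\right) \;\le\; \sum_{s=2}^{k} \binom{n}{s}\binom{m}{t_s}\left(\frac{\binom{t_s}{d}}{\binom{m}{d}}\right)^{\!s}.
\end{equation*}

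\textbf{Extracting the exponent.} Writing $\sigma = s/m$ and applying $\binom{a}{b} \le e^{aH(b/a)}$ ($H$ the binary entropy) to the two outer binomials, together with the exact identity $\binom{t_s}{d}/\binom{m}{d} = \prod_{i=0}^{d-1}\tfrac{t_s-i}{m-i}$ for the inner ratio, I would rewrite the $s$-th summand as $\exp(m\,\Psi(\sigma;\alpha_2,\epsilon,d) + o(m))$ for an explicit limiting exponent $\Psi$, using $k/m\to\rho$ and $m/n\to\alpha_2$. By definition $\rho_{BT}(\alpha_2,\epsilon,d)$ is the largest $\rho$ for which $\Psi(\,\cdot\,;\alpha_2,\epsilon,d)<0$ throughout $(0,\rho]$ (in particular $\rho_{BT} < 1/((1-\epsilon)d)$, which is also what keeps $t_s < m$). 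Given $\rho < (1-\gamma)\rho_{BT}$, fix a small $\sigma_0>0$: continuity and strict negativity of $\Psi$ on the compact set $\{\sigma:\sigma_0\le\sigma\le\rho\}$ give a uniform bound $\Psi\le -c(\sigma_0)<0$ there, so those terms contribute at most $k\,e^{-c(\sigma_0)m/2}$; the remaining $s\le\sigma_0 m$ terms would be handled with a sharper estimate of the inner ratio (see below). Summing the at most $k$ terms yields $\prob(A\notin\cE^{m\times n}_{k,\epsilon,d})\to 0$, and obtaining the precise exponential-in-$n$ rate claimed in the statement hinges on that sharper control.

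\textbf{The main obstacle.} The combinatorics above is routine; the real difficulty is the $s=o(m)$ range, where the crude factor $\binom{m}{t_s}$ is lossy and $\Psi(\sigma)$ vanishes only slowly as $\sigma\to0^+$ (at the rate $\sim\sigma\log(1/\sigma)$, with sign governed by $\epsilon d - 1$), so that no single uniform exponential bound covers the whole sum. This is exactly the regime in which hypotheses forcing $\epsilon d$ to be large enough --- implicit in the requirement $\rho_{BT}(\alpha_2,\epsilon,d)>0$, since for small $\epsilon d$ even pairs of columns over-collide with probability bounded away from zero --- become essential. Pushing the estimate of $\binom{t_s}{d}/\binom{m}{d}$ far enough to recover the exact threshold $\rho_{BT}$ (rather than a weaker sufficient condition) and to make the small-$s$ terms negligible at the claimed rate is the genuinely delicate part of the argument.
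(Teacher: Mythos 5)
This lemma is not proved in the paper at all: it is imported verbatim from \cite{DBLP:journals/corr/abs-1804-09212} (their Lemma 3.1), and the authors explicitly defer to that reference. So there is no in-paper proof to compare against; your proposal has to stand on its own, and as written it does not.

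Two genuine gaps. First, what you have written is a proof plan with the hard part deferred: you yourself flag that the $s=o(m)$ range is not covered by your uniform bound on the compact set $\{\sigma_0\le\sigma\le\rho\}$, and the ``sharper estimate of the inner ratio (see below)'' never materialises --- the final paragraph only explains why that regime is delicate. Since the statement asserts a probability tending to one \emph{exponentially fast in $n$}, and the small-$s$ terms (in particular $s=2$, where the summand is of size roughly $n^2\cdot m^{t_2}\cdot(t_2/m)^{2d}$ and its decay depends on the sign of $\epsilon d-1$) are exactly where that rate is decided, the argument is incomplete precisely where it matters. Second, and more structurally, you define $\rho_{BT}$ as ``the largest $\rho$ for which $\Psi<0$'' where $\Psi$ is the exponent produced by \emph{your} union bound over supersets $T\subseteq[m]$ with $|T|=t_s$. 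That is not the definition of $\rho_{BT}(\alpha_2,\epsilon,d)$: the curve is given by an explicit formula in the cited reference (their Equation 29), obtained from a strictly sharper analysis --- they bound the full probability mass function of $|\cN(\cS)|$ via a dyadic splitting of $\cS$ rather than the crude containment event $\cN(\cS)\subseteq T$. The naive first-moment exponent you write down is lossy (you concede the $\binom{m}{t_s}$ factor is wasteful), so even if you completed the small-$s$ analysis you would establish the conclusion only for $\rho$ below some threshold that is in general strictly smaller than $\rho_{BT}$. Identifying your threshold with $\rho_{BT}$ by fiat makes the claimed statement appear to follow when in fact only a weaker one would. To actually prove the lemma as stated you would need to reproduce (or cite) the refined large-deviation analysis of \cite{DBLP:journals/corr/abs-1804-09212}, not the classical Bassalygo-style union bound.
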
 \label{lemma:whp_expander}

Here $\rho_{BT}(\alpha_2,\epsilon, d)$ denotes the phase transition curve, which is computed numerically, see \cite[Equation 29]{DBLP:journals/corr/abs-1804-09212}. Following \cite{4797639}, a series of iterative, greedy CCS algorithms were proposed: Sparse Matching Pursuit (SMP) \cite{inproceedingsSMP}, Sequential Sparse Matching Pursuit (SSMP) \cite{inproceedingsSSMP}, Left Degree Dependent Signal Recovery (LDDSR) \cite{inproceedingsLDDSR}, Expander Recovery (ER) \cite{ER_paper} and $\ell_0$-decode \cite[Algorithms 1 \& 2]{mendoza-smith2017a}. These algorithms are specifically designed for the CCS setting and utilise certain key properties of expander graphs, namely the unique neighbour property.
\begin{theorem}[\textbf{Unique neighbour property} \textbf{\cite[Lemma 1.1]{capalbo2002randomness}}]\label{theorem_unp}
	Suppose that $G=([n], [m], E)$ is an unbalanced, left d-regular bipartite graph. Let $\cS \in [n]^{(\leq k)}$ and define
	\[
	\cN_1(\cS) = \{ j\in \cN(\cS): |\cN(j) \cap \cS | = 1\},
	\]
	where $\cN(j)$ is the subset of nodes in $[n]$ connected to the node $j\in [m]$. If $G$ is a $(k, \epsilon, d)$-expander then
	\begin{equation} \label{equation_unp}
	| \cN_1(\cS)| > (1-2\epsilon)d|\cS| \quad \forall \quad \cS \in [n]^{(\leq k)}.
	\end{equation}
\end{theorem}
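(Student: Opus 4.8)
The plan is to prove this by a double-counting argument on the edges incident to $\cS$, i.e.\ a variant of the handshake lemma. Since $G$ is left $d$-regular, the total number of edges with an endpoint in $\cS$ is exactly $d|\cS|$, and every such edge has its right endpoint in $\cN(\cS)$. I would partition $\cN(\cS)$ into the set of unique neighbours $\cN_1(\cS)$ and the complement $\cN(\cS) \setminus \cN_1(\cS)$; by the definition of $\cN_1(\cS)$, each vertex $j$ in the complement satisfies $|\cN(j) \cap \cS| \geq 2$, while each $j \in \cN_1(\cS)$ satisfies $|\cN(j) \cap \cS| = 1$. Counting the edges between $\cS$ and $[m]$ by grouping them according to their right endpoint then gives
\[
d|\cS| = \sum_{j \in \cN(\cS)} |\cN(j) \cap \cS| \;\geq\; |\cN_1(\cS)| + 2\bigl(|\cN(\cS)| - |\cN_1(\cS)|\bigr) \;=\; 2|\cN(\cS)| - |\cN_1(\cS)|.
\]

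Rearranging this inequality yields $|\cN_1(\cS)| \geq 2|\cN(\cS)| - d|\cS|$. I would then invoke the hypothesis that $G$ is a $(k,\epsilon,d)$-expander: since $\cS \in [n]^{(\leq k)}$, the expansion condition \eqref{equation_exp} provides the strict bound $|\cN(\cS)| > (1-\epsilon)d|\cS|$. Substituting this into the previous line gives
\[
|\cN_1(\cS)| \;\geq\; 2|\cN(\cS)| - d|\cS| \;>\; 2(1-\epsilon)d|\cS| - d|\cS| \;=\; (1-2\epsilon)d|\cS|,
\]
which is precisely \eqref{equation_unp}.

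I do not expect any real obstacle here: the entire argument is a one-step edge count combined with the defining property of the expander. The only point requiring mild care is that the strictness of the inequality is inherited correctly — it is, because we are adding the strict bound on $|\cN(\cS)|$ to an otherwise non-strict chain. Note also that the computation is uniform over all $\cS$ with $|\cS| \le k$, so no separate treatment of small versus large support sets is needed, and left $d$-regularity (rather than the full expander hypothesis) is all that is used to get the initial edge-count identity $d|\cS| = \sum_{j \in \cN(\cS)} |\cN(j) \cap \cS|$.
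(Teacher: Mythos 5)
Your proof is correct: the edge double-count $d|\cS| = \sum_{j \in \cN(\cS)} |\cN(j) \cap \cS| \geq 2|\cN(\cS)| - |\cN_1(\cS)|$ combined with the expansion bound $|\cN(\cS)| > (1-\epsilon)d|\cS|$ gives exactly \eqref{equation_unp}, with strictness correctly inherited. The paper does not reproduce a proof but defers to its references, and the argument given there is precisely this standard double-counting one, so your approach matches.
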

\noindent A proof of Theorem \ref{theorem_unp} in the notation used here is available in \cite[Appendix A]{mendoza-smith2017a}. The unique neighbour property is used to prove that expander encoders satisfy RIP in the $\ell_1$ norm \cite{4797639}. Moreover, and critical to our purpose, if $\mA \in \cE^{m \times n}_{k, \epsilon, d} $ then the unique neighbour property ensures that certain entries in $\vx$ appear repeatedly in the measurement vector $\vy = \mA \vx$. In Section \ref{subsec:singletons_partial_supports} we study sufficient conditions for identifying such entries, which, being the sum of a single nonzero entry in $\vx$, we refer to as singleton values. The unique neighbour property is exploited by CCS algorithms such as LDDSR, ER and $\ell_0$-decode to guarantee that at each iteration a contraction in $||\vy - \mA \hat{\vx} ||_0$ occurs, where here $\hat{\vx}$ denotes the reconstruction of $\vx$. We likewise leverage this fact in the design of D-EBF: in particular, the locations in which an entry of $\vx$ appears in $\vy$ also provides access to part of the support of a column of $\mA$. The key idea behind the D-EBF algorithm is to combine and cluster partial supports, extracted from different columns of $\mY$, so as to recover the columns of $\mA$. We define and discuss in detail the D-EBF algorithm in Section \ref{sec:alg}.

\subsection{The Permuted Striped Block (PSB) model} \label{subsec:psb_model}
In this section we introduce and motivate a particular distribution over a set of measurement matrices, which we call the Permuted Striped Block (PSB) model. This distribution arises from conditions placed on both the encoder matrix, which we have freedom free to design, as well as modelling assumptions placed on the sparse coding matrix.
 
Based on the discussion at the end of Section \ref{subsec:ccs_expanders}, a strong candidate encoder is one sampled from the distribution of a random binary matrix whose column supports are mutually independent and have a fixed cardinality of size $d$. With $k,m$ and $n$ such that Lemma \ref{lemma:whp_expander} is satisfied, then for sufficiently large problems such a matrix will with high probability be the adjacency matrix of a $(k, \epsilon, d)$-expander graph and satisfy the unique neighbour property. However, the downside of this construction is that it does not rule out the possibility of there existing a very dense, in terms of the number of ones, row. The approach we adopt in the design of D-EBF relies on eventually observing each one in a column of $\mA$ through the extraction of partial supports. As a result, if a row is sufficiently dense then the one entries in said row are unlikely to ever be identified. In fact, the proof of Lemma \ref{lemma:recon_from_L}, see Appendix \ref{appendix:supporting_lemmas}, which plays a key role in providing the guarantees for D-EBF summarised in Theorem \ref{theorem:main}, relies on the number of ones per row of $\mA$ being bounded. To this end we consider a variant on the random construction already discussed which maintains the same key properties while in addition bounding the number of ones per row. How to sample from the distribution of this random matrix is described by Algorithm \ref{alg:generate_A}.

\begin{algorithm}
	\caption{GENERATE-ENCODER$(m,n,d)$} \label{alg:generate_A}
	\begin{algorithmic}[1]
	\STATE $A \leftarrow \mathrm{zeros}(m,n)$
	\STATE $\alpha \leftarrow \floor{m/d}$
	\STATE $\beta \leftarrow \ceil{n/\alpha}$
	\STATE $l \leftarrow 1$
	\FOR{$i=1:\beta$}
	\STATE $Q \sim U(\pi([m]))$ 
	\STATE $\omega \leftarrow \min \{n-l+1, \alpha \}$
	\FOR{$j=1:\omega$}
	\STATE $c \leftarrow (j-1)d$
	\FOR{$r = 1:d$}
	\STATE $A_{Q_{c+r},l} \leftarrow 1$
	\ENDFOR
	\STATE $l \leftarrow l+1$
	\ENDFOR
	\ENDFOR
	\STATE \textbf{Return} $A$
	\end{algorithmic}
\end{algorithm}

Algorithm \ref{alg:generate_A} takes as inputs the dimensions of the encoder $m$ and $n$, the column sparsity $d$, and returns a binary matrix. The variable $\alpha$ stores the number of columns whose supports can be assigned using a single permutation of $[m]$, $\beta$ is the number of permutations that are needed to be drawn in order to assign a support of cardinality $d$ to each column of $A$, $\omega$ is the number of columns of $A$ to assign a support to using the current permutation of $[m]$, $c$ denotes the entry of the permutation vector from which to start assigning a support, and finally $l$ is the index of the column next in line to be assigned a support. On line 6 a permutation of the set $[n]$ is drawn uniformly at random, note here that $Q \in [n]^n$ is a random vector holding this permutation and $Q_i$ is the $i$th element of this vector. Note also that draws of different permutations are considered to be mutually independent. The for loop on lines 8-14 then uses the permutation $Q$ to assign a support to columns $l$ through to $l+ \omega -1$ of $A$. This is then repeated by drawing another permutation until all columns of $A$ have been assigned a support. As a result, encoder matrices generated using Algorithm \ref{alg:generate_A} have a fixed number $d$ ones per column and a maximum of $\beta$ ones per row. Again we emphasise here that this upper bound on the number of ones per row of $A$ will be crucial for proving our recovery results, in particular Lemma \ref{lemma:recon_from_L}. Note also that the support of each column of $A$ is dependant on the supports of at most $\alpha -1$ other columns, and that the intersection in support of these sets of dependent columns is empty by construction. We claim here that Lemma \ref{lemma:whp_expander} applies without modification to encoder matrices generated using Algorithm \ref{alg:generate_A}. This result follows in exactly the same manner as proved in \cite{DBLP:journals/corr/abs-1804-09212} by observing that any pair of columns of $A$ generated using Algorithm \ref{alg:generate_A} either have independently drawn supports, as considered in the original case in \cite{DBLP:journals/corr/abs-1804-09212}, or, if they are dependent, then they are disjoint by construction. For brevity we do not replicate the original proof in detail here, instead referring the reader to \cite{DBLP:journals/corr/abs-1804-09212} for further details.

Turning our attention now to modelling assumptions on the sparse coding matrix, we remark that multimeasurement vector compressed sensing \cite{4014378, 1453780,4553693} also studies the compressed sensing problem in the context of a set of $N>1$ measurement vectors. Typically in this context $\mX$ is structured such that sparse recovery of one column aids the recovery of others. For example, in the joint sparse setting the columns of $\mX$ share a common support. As our goal here is to use the different measurement vectors of $\mY$ to recover different parts of the encoder matrix we adopt a very different model: in particular we assume that the supports of each column are chosen uniformly at random over all possible supports of size $k$, and that the columns are mutually independent of one another. This distribution feels neither adversarial or overly sympathetic to our objective, and is adopted in order to capture a sense of how well an algorithm might typically perform at the desired factorisation task. In addition to the distribution placed on the support, we also assume the the nonzero coefficients in each column of the sparse coding matrix are dissociated.
\begin{definition}[\textbf{Dissociated vector, see Definition 4.32 of \cite{tao_vu_2006}}]\label{def_dissociated}
	A vector $\vx \in \reals^N$ is said to be dissociated, which we will denote as $\vx \in \cX^n_k$, iff for any pair of subsets $\cT_1, \cT_2 \subseteq supp(\vx)$ it holds that $\sum_{j \in \cT_1} x_j \neq \sum_{i \in \cT_2} x_i$. For $N>1$ then $\mX \in \cX^{n \times N}_k$ iff $\vx_i \in \cX^n_k$ for all $i \in [N]$.
\end{definition}\label{def:dissociated}
Although at first glance this condition may appear restrictive, it is fulfilled almost surely for isotropic vectors and more generally for any random vector whose nonzeros are drawn independently from a continuous distribution. This property plays a key role in the design of D-EBF as discussed in Section \ref{subsec:singletons_partial_supports}, and is also adopted in \cite{mendoza-smith2017a} where it plays a key role in enabling sparse recovery in the context of massive problems, i.e., $n=2^{26}$, $m/n = 10^{-3}$ and $k/n = 0.3$, in a matter of only seconds using standard computing infrastructure. We are now ready to introduce the PSB model.


\begin{definition}[\textbf{PSB model}] \label{def_data_model}
	Let $d,k,m,n, N\in \naturals$ such that $k<m<n$, $k/n= \alpha_1$ and $m/n = \alpha_2 \defeq \theta \alpha_1 \log(1/\alpha_1)$ where $\alpha_1, \alpha_2 \in (0,1)$ and $\theta \in \reals_{>0}$ are constants, and suppose there exists a constant $\gamma \in (0,1)$ such that $k/m < (1 - \gamma)\rho_{BT}(\alpha_2, 1/6, d)$. Consider now the following random matrices.
	\begin{itemize}
		\item $A \defeq [ A_1 \; A_2... \; A_n]$ is a random, binary matrix of size $m\times n$ whose distribution can be sampled from using Algorithm \ref{alg:generate_A}. The columns of $A$ each have exactly $d$ ones, while the rows have at most $\ceil{n/ \floor{d/m}}$ ones by construction.
		\item $X \defeq [X_1 \; X_2... \; X_N]$ is a random, real matrix of size $n\times N$, whose columns are mutually independent, have exactly $k$ nonzero entries, are dissociated and whose supports are chosen uniformly at random across all possible supports with cardinality $k$.
	\end{itemize}
	A random, real $m \times N$ matrix $Y$ is sampled from the PSB model, which we will denote as $Y \sim PSB(d,k,m,n,N)$, iff $Y \defeq AX$.
\end{definition}

With regard to the choice of parameters, observe that, under the assumptions listed, if an algorithm recovers the sparse codes from a measurement matrix drawn from the PSB model, then it does so at the optimal measurement rate $m = \cO(k \log(n/k))$. Second, as $k/n = \alpha _1$ then $m/n = -\theta \alpha_1 \log(\alpha_1) \eqdef \alpha_2$ and $k/m = -1/\theta\log(\alpha_1)$ are constants. Therefore, by construction, it follows that the conditions stated in Lemma \ref{lemma:whp_expander} are satisfied, and as a result the probability that $A \in \cE^{m \times n}_{k,\epsilon,d}$ with $\epsilon \leq 1/6$ approaches one exponentially fast as $n \rightarrow \infty$.

\begin{figure}[!htb]
	\centering
	\includegraphics[width=0.9\textwidth]{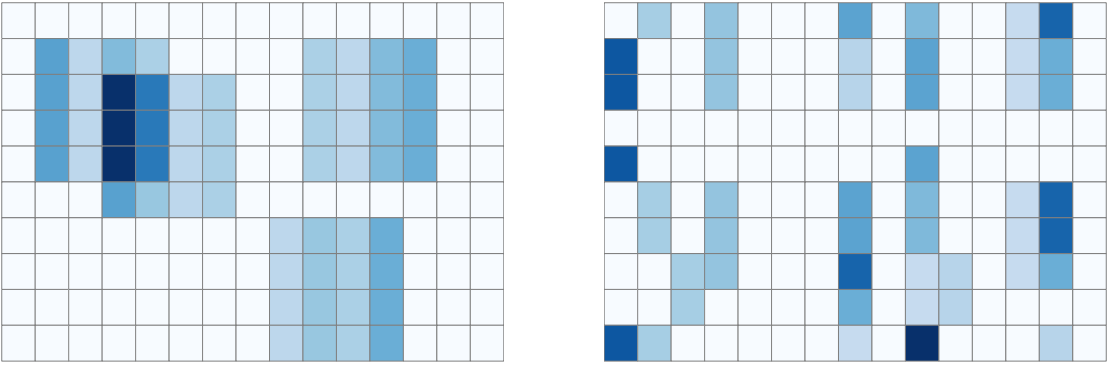}
	\caption{Visualisation of two examples of PSB matrices. In both cases the matrix consists of a sum of four rank 1 matrices, each with a support cardinality of $16$.  The left hand plot corresponds to the case where the support of each rank one matrix is arranged into a block. The right hand plot takes the same rank one matrices as in the left hand plot, but before summing them applies a random row and column permutation to each. The white squares indicate a zero entry and nonzero entries in the same column have the same coefficient value. As a result, aside from where there are overlaps, each column of a block is a single stripe of colour.}
	\label{fig:PSB_matrix}
\end{figure}

In order to explain the origin of its name, observe that matrices drawn from the PSB model can be expressed as the sum of $n$ rank one matrices. The supports of these matrices, under appropriate row and column permutations, can be organised into a single dense block. These permuted blocks are also striped in the sense that the entries in any given column of a block have the same value. We provide a visualisation in Figure \ref{fig:PSB_matrix}.

\subsection{Summary of contributions} \label{subsec:summary_contributions}
The contributions of this paper are twofold: first, we provide a novel algorithm, Decoder-Expander Based Factorisation (D-EBF), detailed in Algorithm \ref{alg:DEBF_detail} in Section \ref{subsec:EBF_detail}, which is designed to recover $\mA \in \cE_{k, \epsilon, d}^{m \times n}$ and $\mX \in \cX_k^{n \times N}$ up to permutation from the measurement matrix $\mY = \mA \mX$. Second, we analyse the performance of D-EBF in the context of measurement matrices sampled from the PSB model, proving that it recovers the matrix factors up to permutation with high probability. The theoretical guarantees for D-EBF are summarised in Theorem \ref{theorem:main}.

\begin{theorem} \label{theorem:main}
	Let $Y \sim PSB(d,k,m,n,N)$ as per Definition \ref{def_data_model}. Under the assumption that $ A \in \cE_{k, \epsilon, d}^{m \times n}$ with $\epsilon \leq 1/6$, consider the reconstructions of the matrix factors of $Y$ returned by D-EBF,
	\[
	\hat{A},\hat{X} \leftarrow  \mathrm{D}\text{-}\mathrm{EBF}(Y,m,n,N,\epsilon,d).
	\]
    Then the following statements are true.
	\begin{enumerate}
		\item \textbf{The reconstructions are accurate up to permutation:} there exists a random permutation $P \in \cP^{n \times n}$ such that $\supp(\hat{A} P^T) \subseteq \supp(A)$, $\supp(P \hat{X}) \subseteq \supp(X)$ and $\hat{x}_{j,i} = x_{j,i}$ for all $(j,i) \in \supp(\hat{X})$.
		\item \textbf{On the uniqueness of the factorisation:} if $Y = \hat{A} \hat{X}$ then this factorisation is unique up to permutation, i.e., there exists a random permutation $P \in \cP^{n \times n}$ such that $\hat{A}P^T = A$ and $P \hat{X} = X$.
		\item \textbf{D-EBF is successful with high probability in $n$:} Suppose in addition to the assumptions of the PSB model that $N \geq \nu \frac{8d}{3 \tau(n)} \frac{n}{k} (\ln^2(n) + \ln(n))$, where $\tau(n)= \cO(1)$ \footnote{ A definition of $\tau(n)$ is provided in Lemma \ref{lemma:recon_from_L}} and $\nu>1$ is a constant. Then the probability that there exists a permutation $P\in \cP^{n\times n}$ such that $\hat{A}P^T= A$ and $P\hat{X} = X$ is greater than $1 - \cO\left( n^{-\sqrt{\nu}+1} \log^2(n)\right)$
	\end{enumerate}
\end{theorem}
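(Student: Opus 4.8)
I would prove the three statements in sequence. Claims 1 and 2 are structural, following from what D-EBF commits to at each step together with the unique-neighbour property (Theorem \ref{theorem_unp}) and the dissociativity of the columns of $X$, with no probabilistic input. Claim 3 is where the random supports enter; I condition throughout on the fixed encoder $A\in\cE_{k,\epsilon,d}^{m\times n}$ (already assumed by the theorem), so that the $N$ mutually independent uniform $k$-subsets $S_i=\supp(X_i)$ are the only source of randomness. The organising object is the collection $L$ of \emph{partial supports} that D-EBF extracts from the singleton values of the columns of $Y$, and Lemma \ref{lemma:recon_from_L} is to be used as a black box: if $L$ is ``rich enough'' then the clustering stage of D-EBF returns $A$ up to a column permutation, its row-weight hypothesis being supplied by Algorithm \ref{alg:generate_A}. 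So Claim 3 reduces to showing that, with the stated probability, $L$ is rich enough and each $X_i$ is then recoverable from $(A,Y_i)$.

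\textbf{Claims 1 and 2.} For Claim 1, I would maintain a loop invariant for D-EBF: every value it accepts as a singleton equals some $x_{l,i}$ --- precisely the content of the singleton-detection analysis of Section \ref{subsec:singletons_partial_supports}, which upgrades ``appears above a frequency threshold in $y_i$'' to ``is a single nonzero of $x_i$'' using the unique-neighbour property of $A$ together with dissociativity --- and every partial support it attaches to a working cluster is a subset of $\cN(l)$ for the column $l$ that cluster represents. Since the clustering stage only unions partial supports within a cluster and never introduces new row indices, the returned $\hat A$ satisfies $\supp(\hat A P^T)\subseteq\supp(A)$ for the permutation $P$ extending the cluster-to-column assignment, and the subsequent decoding stage only ever assigns forced singleton entries of $\hat X$, giving $\supp(P\hat X)\subseteq\supp(X)$ with matching values. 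For Claim 2, assume additionally $Y=\hat A\hat X$ and write $B=\hat A P^T$, $Z=P\hat X$, so $AX=BZ$ with $\supp(B_l)\subseteq\supp(A_l)$, $\supp(Z_i)\subseteq\supp(X_i)$ and $Z_i$ agreeing with $X_i$ on its support. For each row $r$ and column $i$ the identity $(AX-BZ)_{r,i}=0$ reads $\sum_{l\in T_{r,i}}x_{l,i}=0$ for a subset $T_{r,i}\subseteq\supp(X_i)$, so dissociativity of $X_i$ forces $T_{r,i}=\emptyset$ for every $r$; since every column of $A$ has $d\ge1$ ones, this first forces $\supp(Z_i)=\supp(X_i)$, hence $Z_i=X_i$, and then $\supp(A_l)\subseteq\supp(B_l)$, hence $B_l=A_l$, for every column $l$ that occurs in some support (which, under the PSB assumptions, is every column with high probability). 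Therefore $\hat A P^T=A$ and $P\hat X=X$.

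\textbf{Claim 3.} Once $A$ has been reconstructed, recovering each $X_i$ from $(A,Y_i)$ is the standard combinatorial decoding which, for a $(k,\epsilon,d)$-expander with $\epsilon\le1/6$, succeeds by iteratively peeling identified singletons exactly as in ER / $\ell_0$-decode; this step is deterministic. The real work is a covering statement about $L$, driven by two estimates. First, for a fixed column $l$ and any $j\in\cN(l)$, conditioned on $l\in S_i$ the node $j$ is a \emph{private} neighbour of $l$ in $S_i$ (no other member of $S_i$ touches row $j$) with probability at least a constant $p_0$: row $j$ has at most $\lceil n/\lfloor m/d\rfloor\rceil-1=\Theta(dn/m)$ other incident columns, and the remaining $k-1$ elements of $S_i$ avoid all of them with probability $\big(1-\Theta(d/m)\big)^{k-1}=\exp\!\big(-\Theta(dk/m)\big)=\Theta(1)$, using $dk/m=\cO(1)$ under the PSB scaling; this is the one place the bounded row weight of Algorithm \ref{alg:generate_A} is indispensable. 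Second, it follows that, conditioned on $l\in S_i$, with at least constant probability $l$ has enough private neighbours to clear the singleton-detection threshold, and in that event the partial support D-EBF extracts for $l$ from $y_i$ is exactly its private-neighbour set in $S_i$, a random constant fraction of $\cN(l)$. I would then cast the richness of $L$ as a variant of the coupon-collector problem --- each round $i$ presents $k$ uniformly random columns, each ``useful'' one revealing a random constant fraction of its $d$ ones, and $L$ being rich enough once every column's neighbour set is covered by sufficiently many mutually linkable such reveals --- and control the failure probability by a union bound over the $\le nd=\cO(n)$ (column, neighbour) pairs together with binomial lower-tail estimates on the number of useful reveals of each pair, whose mean is $\Theta(Nk/n)$ under the hypothesised $N$. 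This yields the claimed bound $\cO(n^{-\sqrt\nu+1}\log^2 n)$; the appearance of $\log^2 n$ rather than the naive coupon-collector count $\tfrac nk\log n$ is the price of needing every column hit $\Omega(\log n)$ times \emph{and} each such hit to be useful.

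\textbf{Main obstacle.} The delicate part is exactly this last step: making precise the richness condition that Lemma \ref{lemma:recon_from_L} requires --- not merely that every $1$ of $A$ be revealed once, but that each column's extracted partial supports be numerous and overlapping enough to cluster together while remaining separated from those of other columns --- and then handling the dependencies among the events ``$l\in S_i$'', ``$j$ is private for $l$'' and ``$l$ has enough private neighbours'', which are correlated across the $d$ neighbours of a column and across the $N$ rounds. Extracting from the binomial tail bounds exactly the $n^{-\sqrt\nu+1}\log^2 n$ rate, with only a $\log^2 n$ overhead in $N$, is where the care goes; a secondary point is checking that the row-weight bound of Algorithm \ref{alg:generate_A} is tight enough to keep $p_0$ bounded below by an absolute constant, since a single over-dense row would leave some $1$ of $A$ essentially never privately revealed, and the proof of Lemma \ref{lemma:recon_from_L} would break.
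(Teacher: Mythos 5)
Your treatment of Statements 1 and 2 is essentially the paper's: both are immediate from Lemma \ref{lemma:accuracy_DEBF}, whose proof is exactly the loop invariant you describe (every accepted value is a true singleton by Lemma \ref{lemma:freq_non_singletons_residual}, every clustered partial support lands in the correct column by Lemma \ref{corollary_sort_ps}), and the uniqueness step via dissociativity --- each entry of $\mY-\hat{\mA}\hat{\mX}$ being a sum over a subset of $\supp(\vx_i)$ that must therefore be empty --- is the same argument the paper uses for Statement 3 of that lemma. These parts are fine.

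For Statement 3 there is a genuine gap, in two places. First, Lemma \ref{lemma:recon_from_L} is not a statement about D-EBF: it is a per-iteration statement about a surrogate algorithm, ND-EBF, which recovers exactly one column of $A$ per while-loop iteration from the single largest cluster of extracted partial supports. The paper introduces ND-EBF precisely because analysing D-EBF's clustering directly is intractable, and the bridge --- Lemma \ref{lemma:NDEBF_lower_bounds_DEBF}, showing by a coupling of the two residual sequences that ND-EBF's success implies D-EBF's --- is a substantial component that your plan omits entirely; you cannot invoke Lemma \ref{lemma:recon_from_L} ``as a black box'' for D-EBF's clustering without it. Second, your proposed union bound over the $\cO(n)$ (column, neighbour) pairs with binomial lower tails on ``useful reveals'' does not deliver what clustering actually requires: Lemma \ref{corollary_sort_ps} only certifies clustering for partial supports of cardinality greater than $(1-2\epsilon)d$ that pairwise overlap in at least $2\epsilon d$ positions, so it is not enough that every $1$ of $A$ be privately revealed often. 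The paper's resolution is a different two-stage decomposition: Lemma \ref{lemma:N_lb} (a coupon-collector bound, one $\log n$ factor) guarantees every row of $X$ has at least $\beta(n)=\Theta(\log n)$ nonzeros; conditional on this, Lemma \ref{lemma_existence_singletons} plus a pigeonhole argument guarantees that at each ND-EBF iteration some single unrecovered column contributes at least $L(n)=\Theta(\log n)$ large partial supports, and only then is the per-entry union bound with the private-neighbour probability applied, followed by the probability chain rule over the $n$ iterations (the second $\log n$ factor). Your ``main obstacle'' paragraph correctly identifies this as the missing machinery, but the proposal does not supply it, so Statement 3 is not closed. (Your private-neighbour estimate itself, via the row-weight bound $\ceil{n/\floor{m/d}}$ from Algorithm \ref{alg:generate_A}, does coincide with the computation inside the paper's proof of Lemma \ref{lemma:recon_from_L}.)
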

\noindent A few remarks regarding the D-EBF algorithm and Theorem \ref{theorem:main} are in order.
\begin{itemize}
    \item \textbf{Probability that $A$ is an expander:} recall from the discussion in Section \ref{subsec:psb_model} that, with $A$ defined as in the PSB model, then the probability that $A \in \cE_{k, \epsilon, d}^{m \times n}$ with $\epsilon \leq 1/6$ goes to one exponentially as $n \rightarrow \infty$. Furthermore, the probability of this event can be lower bounded using the bounds derived in \cite{DBLP:journals/corr/abs-1804-09212}. As a result, from the definition of conditional probability, upper bounds on the probability of all three statements of Theorem \ref{theorem:main} can easily be derived. It therefore also follows that the three statements of Theorem \ref{theorem:main} still hold with high probability even without conditioning on the event $A \in \cE_{k, \epsilon, d}^{m \times n}$ with $\epsilon \leq 1/6$. We have opted not to present Theorem \ref{theorem:main} in this form for two reasons: first in order to more clearly highlight the contributions of this paper and second because the lower bound on the probability that $A \in \cE_{k, \epsilon, d}^{m \times n}$ with $\epsilon \leq 1/6$ derived in \cite{DBLP:journals/corr/abs-1804-09212} is loose and overly pessimistic.
    \item \textbf{Sample complexity of D-EBF:} perhaps the most remarkable aspect of Theorem \ref{theorem:main} is the required sample complexity. Indeed, observing that $\nu$ and $d$ are constants and that $\tau(n)=\cO(1)$, $n/k = \cO(1)$, then as long as $N = \Omega(\frac{n}{k}\log^2(n))$ D-EBF succeeds in recovering $A$ and $X$ up to permutation with high probability. This is equal to the lower bound $N \geq \frac{n}{k}$ up to logarithmic factors. For measurement matrices drawn from the PSB model we hypothesise that this is likely to be optimal as one $\log(n)$ factor arises from a coupon collector argument, inherent to the way $X$ is sampled, and the second $\log(n)$ factor is needed to achieve the stated rate of convergence in probability.
    \item \textbf{Computational complexity of D-EBF:} in Section \ref{subsec:EBF_detail} we will ascertain that the per while loop iteration complexity cost of D-EBF is $\cO(k^2(n +N)N)$. A trivial asymptotic upper bound of $\cO(nk^2(n +N)N)$ follows from our method of proof, however this is highly pessimistic and analysing and bounding the number of iterations of the while loop of D-EBF in a meaningful manner is beyond the scope of Theorem \ref{theorem:main}. Our preliminary experimental investigations on moderately sized problems indicate that the number of iterations required in practice is not onerous and, as expected, reduces as $N$ increases. We leave a proper analysis to future work.
    \item \textbf{Input arguments required by D-EBF:} observe that D-EBF requires not only the product matrix $Y$ and its dimensions, but also certain parameters of the encoder matrix: namely the number of columns $n$, the expansion parameter $\epsilon$ and the column sparsity $d$. As will be discussed in Section \ref{sec:EBF_numerics}, in practice $n$ is not required. Indeed, ND-EBF can be run in an online fashion, processing input vectors as and when they are received and dynamically updating a list of column vectors representing $\hat{\mA}$. Although the the column sparsity $d$ is known in advance by the encoder, and can therefore be reasonably agreed upon in advance with the decoder, the same cannot be said of the expansion parameter $\epsilon$, as even with access to $\mA$ computing $\epsilon$ is an NP-complete problem \cite{alon_expanders}. However, as discussed and demonstrated empirically in section \ref{sec:EBF_numerics}, in practice ND-EBF can still function effectively given access only to an upper bound on $\epsilon$ instead. In our experiments we use $1/6$ for simplicity but tighter upper bounds on the expansion parameter can be computed with relative ease, see e.g., \cite{expansion_bipartite}.
\end{itemize} 

As shall be discussed in more detail in Section \ref{sec:EBF_numerics}, as long as $\mA \in \cE_{k, \epsilon, d}^{m \times n}$ with $\epsilon \leq 1/6$ and $N$ is sufficiently large, then in practice D-EBF requires only knowledge of the column sparsity $d$ in advance to compute the factors $\mA$ and $\mX$ of $\mY$ up to permutation. In terms of applications, then from the permuted sparse code one is still able to recover and compute key statistics concerning the original sparse codes: for example the average value of the nonzero entries, the maximum value or a histogram of the non-zero entries. As will also be discussed in Section \ref{sec:EBF_numerics}, D-EBF can also be adapted to operate in an online fashion. In this setting one could consider the columns of $\mY$ as linear measurement vectors of a system's state vector, which evolves in time and is sparse. Examples of such systems might include a network of sensors or the actions of users in a social network. Assuming that this system is measured using a suitable encoder matrix, which is fixed across time, then Theorem \ref{theorem:main} implies that given sufficient samples then the activation statistics of all samples to date, as well as all future ones, could be computed without ever having explicit access to the encoder matrix. This would be necessary in situations where the encoder matrix is unknown to or private from the would be observer of the system, and may be advantageous from an efficiency perspective if the encoder matrix is costly or difficult to transmit. 

In the context of compressed sensing, the fact that the D-EBF algorithm is only able to recover the sparse codes up to permutation may be problematic. However, if the encoder and decoder agree in advance on a protocol with which to identify the original encoder matrix then this issue can be overcome. One such protocol to this end, based a particular ordering of the columns, is as follows.
\begin{enumerate}
    \item The encoder generates an encoder matrix $A$ as per the PSB model and then, interpreting each column as a rational number written in binary, reorders the columns by the size of their corresponding rational number, for example, from largest to smallest. The specific ordering used is agreed in advance with the decoder.
    \item The encoder then generates $Y$ by multiplying $X$ by the ordered encoder matrix and then sends $Y$ and $d$ to the decoder.
    \item The decoder applies the D-EBF algorithm to recover the encoder matrix and sparse codes up to permutation. The decoder then interprets each of the columns of the reconstructed encoder matrix as a rational number written in binary, and reorders the columns of $\hat{A}$ and the corresponding rows of $\hat{X}$ by the size of their corresponding rational number.
\end{enumerate}
If $A \in \cE_{k, \epsilon, d}^{m \times n}$ with $\epsilon \leq 1/6$ then each column of $A$ maps to a unique rational number and therefore the columns of $A$ have a unique ordering. As a result, if the decoder knows the column ordering protocol, e.g., smallest to largest or largest to smallest, and sparsity $d$ in advance, then recovery up to permutation implies full recovery of the location information also.

\subsection{Related work} \label{subsec:related_work}
In regard to connections with other matrix factorisation problems and methods, the PSB model and associated factorisation task are most closely related to those in which sparsity is also a prominent feature, such as in dictionary learning and subspace clustering. Dictionary learning \cite{elad_book, 1710377, Lee07efficientsparse, 10.5555/2981780.2981909,Olshausen97sparsecoding} is a prominent matrix factorisation technique in data science, in which the columns of the observed data matrix are assumed to lie, at least approximately, on a union of low dimensional subspaces. This structure can be expressed as the product of an overcomplete matrix, known as a dictionary, and a sparse matrix or code. To be clear, given a data matrix $\mY$ dictionary learning methods seek to compute a dictionary $\mA$ and a sparse coding matrix $\mX$ such that $\mY \approx \mA \mX$. Compared with dictionary learning, subspace clustering \cite{5206547, Parsons2004SubspaceCF, Vidal_atutorial} adopts the additional structural assumption that the data lies on a union of low dimensional subspaces which are independent of one another.

More specifically, the literature most relevant to this work is that on dictionary recovery, a field aiming to provide recovery guarantees for dictionary learning. Compared with dictionary learning, dictionary recovery \cite{2013arXiv1309.1952A, DBLP:journals/corr/AroraBGM14, DBLP:journals/corr/AroraGM13, DBLP:journals/corr/BarakKS14, DBLP:journals/corr/abs-1206-5882, DBLP:journals/corr/SunQW15b, DBLP:journals/corr/SunQW15c} presupposes that $\mY \defeq \mA \mX$ with the goal being to recover $\mA$ and $\mX$ up to permutation. It is common in dictionary recovery to place further structural assumptions on the factor matrices so as to facilitate the development of strong guarantees, albeit at the expense of model expressiveness. Two popular structural assumptions are that the dictionary is complete \cite{DBLP:journals/corr/abs-1206-5882, DBLP:journals/corr/SunQW15b, DBLP:journals/corr/SunQW15c} or incoherent \cite{2013arXiv1309.1952A, DBLP:journals/corr/AroraGM13}. Similar to the PSB model, the sparse coding matrix is often assumed to have been drawn from a sparse distribution, e.g., Bernoulli-Gaussian \cite{2013arXiv1309.1952A, DBLP:journals/corr/AroraGM13} or Bernoulli-uniform \cite{DBLP:journals/corr/abs-1206-5882, DBLP:journals/corr/SunQW15b, DBLP:journals/corr/SunQW15c}. This work diverges from the majority of the literature on dictionary recovery in regard to the construction of the dictionary: in particular, while prior work uses incoherence or completeness to design algorithms and derive guarantees, we instead operate under the assumption that the encoder matrix is the adjacency matrix of a $(k, \epsilon,d)$-expander graph. We therefore use a very different set of ideas and tools in our algorithmic approach and proof technique.

In the context of matrix factorisation and dictionary recovery, we are aware of only one other work in which $A$ is constructed so as to leverage the properties of expander graphs \cite{DBLP:journals/corr/AroraBGM13}. In this work the authors, guided by the notion of reversible neural networks, considered the learning of a deep network as a layerwise nonlinear dictionary learning problem. This work differs substantially from \cite{DBLP:journals/corr/AroraBGM13} in a number of respects. First, in terms of the problem setup, the primary difference is that in \cite{DBLP:journals/corr/AroraBGM13} the problem of recovering $A$ and $X$ from $Y=\sigma \left( A X \right)$ is studied, where $\sigma$ denotes the elementwise unit step function and both $X$ and $Y$ are binary. Second, in regard to algorithmic approach, in this prior work $A$ is recovered up to permutation using a non-iterative approach, involving first the computation of all row wise correlations of $Y$ and then applying a clustering technique adapted from the graph square root problem. By contrast, and as described in detail in Section \ref{subsec:EBF_detail}, our method iteratively recovers parts of $A$ and $X$ directly from the columns of the residual by simultaneously leveraging both the unique neighbour property of $A$ and the fact that the columns of $X$ are dissociated.

\section{Decoder-Expander Based Factorisation (D-EBF)} \label{sec:alg}
In this section we present the Decoder-Expander Based Factorisation (D-EBF) algorithm as a method for factorising $\mY = \mA \mX$ where $\mA \in \cE_{k, \epsilon, d}^{m \times n}$ and $\mX \in \cX^{n \times N}_{k}$. In particular, in Section \ref{subsec:EBF_overview} we provide a high level overview of the D-EBF algorithm, before describing in detail the key steps in Sections \ref{subsec:singletons_partial_supports} and \ref{subsec:l0_decode}. Finally, we provide a full definition of D-EBF in Section \ref{subsec:EBF_detail}.


\subsection{Overview of the D-EBF algorithm}\label{subsec:EBF_overview}
The D-EBF algorithm iteratively computes a reconstruction of $\mA$ and $\mX$ by recovering parts of them from the observed matrix $\mY$, removing the contributions of these parts to form a residual, and then repeating these steps on said residual. The D-EBF algorithm starts each iteration by searching, independently, the columns of the residual for singleton values - nonzeros which are the sum of one nonzero in the corresponding column of $\mX$. In Section \ref{subsec:singletons_partial_supports} we prove, under certain assumptions, that singleton values can be identified by the number of times they appear in a given column of the residual. In addition, as the columns of $\mX$ are dissociated, then the locations in which a singleton value appears indicate part of the support of a column of $\mA$. Furthermore, if a singleton value appears in sufficiently many locations then the associated partial support is unique to one column of $\mA$. After identifying these frequently occurring singleton values from the columns of the residual, the D-EBF algorithm clusters the associated partial supports by their unique column of $\mA$. The union of the partial supports in each cluster is then used to reconstruct, at least partially, a column of $\mA$. The completely recovered columns, i.e., those whose supports have cardinality $d$, are then used to recover further entries of $\mX$ using a sparse decoding algorithm from the combinatorial compressed sensing literature \cite{inproceedingsSMP, inproceedingsSSMP,mendoza-smith2017a, inproceedingsLDDSR, ER_paper}. Finally, the contributions from the recovered parts of $\mA$ and $\mX$ are removed from $\mY$ to form a new residual. This process is then repeated on the subsequent residuals until either no additional entries from $\mA$ or $\mX$ are recovered, or $\mY$ is fully decoded. We summarise this high level approach in Algorithm \ref{alg:high_level_l0-EBF}. 

\begin{algorithm}
	\caption{High level overview of the D-EBF algorithm} \label{alg:high_level_l0-EBF}
	\begin{algorithmic}[1]
	    \STATE Initialise residual as observed data $\mY$
		\WHILE{\text{not converged}}
		\STATE Extract singleton values and associated partial supports from the residual
		\STATE Cluster the partial supports by their column of $\mA$
		\STATE Update the reconstructions of $\mA$ and $\mX$ using the clustered partial supports and the singleton values respectively
		\STATE Run a decode algorithm using the fully recovered columns of $\mA$ to recover more entries of $\mX$.
		\STATE Update residual by removing contributions from recovered parts of $\mA$ and $\mX$
		\ENDWHILE
	\end{algorithmic}
\end{algorithm}

In the following sections we proceed to define each of the high level steps of Algorithm \ref{alg:high_level_l0-EBF} in detail. In Section \ref{subsec:singletons_partial_supports} we study steps 3, 4 and 5, which concern the extraction and clustering of singleton values and partial supports. In Section \ref{subsec:l0_decode} we consider step 6, the decoding step, and review sparse decoding algorithms from the combinatorial compressed sensing literature. Finally, in Section \ref{subsec:EBF_detail} we present a detailed version of Algorithm \ref{alg:high_level_l0-EBF}.
\subsection{Singleton values and partial supports} \label{subsec:singletons_partial_supports}
As referenced to in Section \ref{subsec:ccs_expanders}, expander graphs play major role both in the design of D-EBF and the proof of Theorem \ref{theorem:main}. In Lemma \ref{lemma:adjmat} we summarise key properties of the adjacency matrices of $(k, \epsilon, d)$-expander graphs.

\begin{lemma}[\textbf{Properties of the adjacency matrix of a $(k, \epsilon, d)$-expander graph}] \label{lemma:adjmat}
	If $\mA \in \cE^{m \times n}_{k, \epsilon, d}$ then any submatrix $\mA_{\cS}$ of $\mA$, where $\cS \in [n]^{(\leq k)}$, satisfies the following.
	\begin{enumerate}
		\item There are more than $(1-\epsilon)d |\cS|$ rows in $\mA_{\cS}$ that have \textbf{at least one} non-zero.
		\item There are more than $(1-2\epsilon)d |\cS|$ rows in $\mA_{\cS}$ that have \textbf{only one} non-zero.
		\item The overlap in support of the columns of $\mA_{\cS}$ is upper bounded as follows,
		\[
		|\bigcap_{l \in \cS} \supp(\va_l)| < 2 \epsilon d.
		\]
	\end{enumerate}
\end{lemma}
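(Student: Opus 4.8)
The plan is to obtain all three items by unpacking the definitions of Section \ref{subsec:ccs_expanders} and then performing one short double-counting argument for the third. Throughout I fix $\cS \in [n]^{(\leq k)}$ and use the elementary dictionary between matrix and graph language: a row $j \in [m]$ of $\mA_\cS$ contains a nonzero iff $j \in \cN(\cS)$, it contains exactly one nonzero iff $j \in \cN_1(\cS)$ with $\cN_1$ as in Theorem \ref{theorem_unp}, and more generally the number of nonzeros in row $j$ of $\mA_\cS$ equals $|\cN(j) \cap \cS|$.

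Item 1 is then immediate: the number of rows of $\mA_\cS$ with at least one nonzero is $|\cN(\cS)|$, and the defining inequality \eqref{equation_exp} of a $(k,\epsilon,d)$-expander gives $|\cN(\cS)| > (1-\epsilon)d|\cS|$. Item 2 is equally immediate: the number of rows of $\mA_\cS$ with exactly one nonzero is $|\cN_1(\cS)|$, and the unique neighbour property \eqref{equation_unp}, which applies because $\mA$ is the adjacency matrix of a $(k,\epsilon,d)$-expander and hence of a left $d$-regular bipartite graph, gives $|\cN_1(\cS)| > (1-2\epsilon)d|\cS|$. The only care required in these two steps is the bookkeeping that identifies ``rows of $\mA_\cS$'' with ``neighbours of $\cS$''.

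For item 3 I would count the nonzero entries of $\mA_\cS$ in two ways. Since every column of $\mA$ has exactly $d$ ones, $\mA_\cS$ has exactly $d|\cS|$ nonzeros. By item 2, strictly more than $(1-2\epsilon)d|\cS|$ of these lie in rows containing a single nonzero, so the number of nonzeros lying in rows with two or more nonzeros is strictly less than $d|\cS| - (1-2\epsilon)d|\cS| = 2\epsilon d|\cS|$. Now put $T \defeq \bigcap_{l\in\cS}\supp(\va_l)$ and assume $|\cS| \geq 2$. Each row indexed by $T$ has a nonzero in every one of the $|\cS|$ columns of $\mA_\cS$, hence at least two nonzeros, so these rows are among those just bounded and contribute exactly $|\cS|\cdot|T|$ nonzeros. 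Therefore $|\cS|\cdot|T| < 2\epsilon d|\cS|$, i.e.\ $|T| < 2\epsilon d$. Equivalently, one can bypass item 2 and argue directly from item 1, bounding the total nonzero count below by $|\cN(\cS)| + (|\cS|-1)|T| \leq d|\cS|$ and combining $|\cN(\cS)| > (1-\epsilon)d|\cS|$ with $|\cS|/(|\cS|-1) \leq 2$ for $|\cS| \geq 2$ to get the same conclusion.

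I do not expect a genuine obstacle here; this is a definition-chasing lemma rather than a deep one. The single point needing a word of care is the degenerate regime of item 3: for $|\cS| \leq 1$ the intersection $\bigcap_{l\in\cS}\supp(\va_l)$ has cardinality $d$, which is not smaller than $2\epsilon d$ once $\epsilon < 1/2$, so in the write-up I would either state item 3 only for $2 \leq |\cS| \leq k$ or record explicitly that $|\cS| \geq 2$ is the intended and only interesting range. Beyond that one simply has to keep every inequality strict, which is automatic since both the expansion bound and the unique neighbour bound are themselves strict.
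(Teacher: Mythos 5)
Your proposal is correct. Items 1 and 2 are handled exactly as in the paper: both follow by translating ``rows of $\mA_\cS$ with at least one (resp.\ exactly one) nonzero'' into $|\cN(\cS)|$ and $|\cN_1(\cS)|$ and quoting the expansion inequality \eqref{equation_exp} and the unique neighbour property \eqref{equation_unp}. For item 3, however, you take a genuinely different route. The paper never touches the full set $\cS$: it applies the expansion bound to a two-element subset $\{l,h\}$, uses inclusion--exclusion together with $|\supp(\va_l)|=|\supp(\va_h)|=d$ to get $|\supp(\va_l)\cap\supp(\va_h)|<2\epsilon d$, and then observes that $\bigcap_{l\in\cS}\supp(\va_l)$ is contained in any single pairwise intersection. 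Your argument instead double-counts the $d|\cS|$ nonzeros of $\mA_\cS$ against item 2 (or, in your alternative, item 1), isolating the fewer than $2\epsilon d|\cS|$ nonzeros sitting in multiply-occupied rows and noting that the rows of the common intersection contribute $|\cS|\cdot|T|$ of them. Both are valid and both silently require $|\cS|\geq 2$ (the paper's proof restricts to this case explicitly, and your remark that the claim is vacuously false for $|\cS|\leq 1$ is a fair observation about the lemma's statement). The paper's pairwise argument is marginally more economical and, as a byproduct, yields the pairwise bound $\va_l^T\va_h<2\epsilon d$ that is what is actually invoked later (e.g.\ in Lemma \ref{corollary_sort_ps}); your counting argument works directly with the expansion of $\cS$ itself and would generalise more readily to bounding the number of rows covered $t$ or more times, but delivers only the bound on the full intersection as stated.
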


A proof of Lemma \ref{lemma:adjmat} is provided in Appendix \ref{appendix:singletons_partials}. The inspiration for studying singleton values arises from property 2 of Lemma \ref{lemma:adjmat}. Indeed, by conditioning on $\mA$ being the adjacency matrix of a $(k, \epsilon, d)$-expander graph, then the submatrix of $\mA$ associated with the support of a column of $\mX$ has potentially many rows with a single nonzero entry. This implies that there are potentially many entries in each column of $\mY$ which correspond directly to an entry in $\mX$. Denoting the $j$th row of $\mA$ as $\tilde{\va}_j$, we are now ready to provide the definition of a singleton value.

\begin{definition}[\textbf{Singleton value}] \label{def_singletons}
	Consider a vector $\vr = \mA \vz$ where $\mA \in \{0,1\}^{m \times n}$ and $\vz \in \reals^n$. A singleton value of $\vr$  is an entry $r_j\neq 0$ such that $|\supp(\tilde{\va}_j) \cap \supp(\vz)| = 1$, hence $r_j = x_l$ for some $l \in \supp(\vz)$.
\end{definition}
We call a binary vector whose nonzeros coincide with some subset of the nonzeros of a column of $\mA$ a partial support of that column.
\begin{definition}[\textbf{Partial support}]\label{def_ps}
	A partial support $\vw \in \{0,1\}^{m}$ of a column $\va_l \in \{0,1\}^{m}$ of $\mA \in \{0,1\}^{m \times n}$ is a binary vector satisfying $\supp(\vw) \subseteq \supp(\va_l)$. Furthermore, $\vw$ is said to originate from $\va_l$ iff  $\supp(\vw) \subseteq \supp(\va_l)$ and  $\supp(\vw) \nsubseteq \supp(\va_h)$ for all $h \in [n]\backslash\{ l\}$.
\end{definition}
We now introduce some useful notation for what follows.
\begin{itemize}
    \item Let the function $f: \reals \times \reals^m \rightarrow [m]$ count the number of times a real number $\alpha \in \reals$ appears in some vector $\vr \in \reals^m$, i.e., $f(\alpha, \vr) \defeq | \{j \in [m]: r_j = \alpha \}|$.
    \item Let the function $g: \reals \times \reals^m \rightarrow \{0,1 \}^m$ return a binary vector whose nonzeros correspond to the locations in which $\alpha \in \reals$ appears in a vector $\vr \in \reals^m$, i.e., with $\vw = g(\alpha, \vr)$ then $w_j = 1$ iff $r_j = \alpha$ and is $0$ otherwise.
\end{itemize}

Observe that if $\vz$ is dissociated then the locations in which a singleton value appears in $\vr$ defines a partial support of a column of $\mA$. Furthermore, under the assumption that $\mA$ is the adjacency matrix of a $(k, \epsilon, d)$-expander graph and that $\vz$ is dissociated, it is possible to derive the following sufficient condition with which to identify singleton values.

\begin{corollary}[\textbf{Sufficient condition for identifying singleton values (i)}] \label{corollary:suff_cond_singletons}
    Consider a vector $\vr = \mA \vz$ where $\mA \in \cE^{m \times n}_{k, \epsilon, d}$ and $\vz\in \cX^n_k$. With $\alpha \in \reals \backslash \{0\}$ if $f(\alpha, \vr)\geq 2 \epsilon d$ then there exists an $l \in \supp(\vx)$ such that $\alpha = x_l$.
\end{corollary}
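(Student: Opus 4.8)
The plan is to argue by contradiction that the only way a nonzero value $\alpha$ can occur at least $2\epsilon d$ times in $\vr$ is for it to be a singleton value. First I would set $\cS \defeq \supp(\vz)$, so that $|\cS| \le k$, and rewrite each coordinate as $r_j = \sum_{l \in \cN(j) \cap \cS} z_l$, where $\cN(j)$ denotes the support of the $j$th row of $\mA$. Since $\epsilon \in (0,1)$ and $d \ge 1$ we have $2\epsilon d > 0$, so the hypothesis $f(\alpha,\vr) \ge 2\epsilon d$ already forces the existence of at least one index $j_0$ with $r_{j_0} = \alpha$; I would then define $\cT_\alpha \defeq \cN(j_0) \cap \cS$, so that $\alpha = \sum_{l \in \cT_\alpha} z_l$.

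The key step is to invoke dissociativity to pin down $\cT_\alpha$. Because $\vz \in \cX^n_k$, distinct subsets of $\cS$ have distinct subset sums, so $\cT_\alpha$ is the \emph{unique} subset of $\cS$ whose entries of $\vz$ sum to $\alpha$. Consequently, for every index $j$ with $r_j = \alpha$ we must have $\cN(j) \cap \cS = \cT_\alpha$. Moreover, since $\alpha \neq 0$ is the sum over the empty set, $\cT_\alpha \neq \emptyset$.

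It then remains to rule out $|\cT_\alpha| \ge 2$. If $|\cT_\alpha| \ge 2$, then every index $j$ with $r_j = \alpha$ satisfies $j \in \supp(\va_l)$ for all $l \in \cT_\alpha$, i.e. $j \in \bigcap_{l \in \cT_\alpha} \supp(\va_l)$, whence $f(\alpha,\vr) \le |\bigcap_{l \in \cT_\alpha} \supp(\va_l)|$. Since $2 \le |\cT_\alpha| \le |\cS| \le k$ we have $\cT_\alpha \in [n]^{(\le k)}$, so property 3 of Lemma \ref{lemma:adjmat} gives $|\bigcap_{l \in \cT_\alpha} \supp(\va_l)| < 2\epsilon d$, contradicting $f(\alpha,\vr) \ge 2\epsilon d$. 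Hence $|\cT_\alpha| = 1$, say $\cT_\alpha = \{l\}$, and therefore $\alpha = z_l$ with $l \in \supp(\vz)$, which is the claimed conclusion.

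I expect the only delicate point to be the organisation of the dichotomy on $|\cT_\alpha|$: property 3 of Lemma \ref{lemma:adjmat} is useful only when $|\cT_\alpha| \ge 2$ (for a single column the relevant "intersection" is its entire support, of size $d > 2\epsilon d$), so the argument must be set up so that $|\cT_\alpha| = 1$ is precisely the desired outcome while $|\cT_\alpha| \ge 2$ is forbidden by the frequency hypothesis. Everything else is routine bookkeeping with the definitions of dissociated vectors, singleton values, and row supports.
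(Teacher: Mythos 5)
Your proof is correct and follows essentially the same route as the paper's: identify (via dissociativity) the unique subset $\cT_\alpha \subseteq \supp(\vz)$ whose entries sum to $\alpha$, observe that every location where $\alpha$ appears lies in $\bigcap_{l \in \cT_\alpha} \supp(\va_l)$, and rule out $|\cT_\alpha| \geq 2$ using property 3 of Lemma \ref{lemma:adjmat}. If anything, your version is slightly more careful than the paper's in making the role of dissociativity and the $|\cT_\alpha|=1$ versus $|\cT_\alpha|\geq 2$ dichotomy explicit.
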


A proof of Corollary \ref{corollary:suff_cond_singletons} is provided in Appendix \ref{appendix:singletons_partials}. Under the assumptions stated this corollary implies that any value which appears in $\vr$ at least $2\epsilon d$ times must be a singleton value. However, this corollary does not provide insight into whether or not there exist singletons values in $\vr$ appearing in at least $2 \epsilon d$ locations. To resolve this matter, in Lemma \ref{lemma_existence_singletons}, adapted from \cite[Theorem 4.6]{mendoza-smith2017a}, we show, under certain assumptions, that there always exist a positive number of singleton values which appear more than $(1-2 \epsilon)d$ times in $\vr$. This result is needed for the proof of Theorem \ref{theorem:main}.

\begin{lemma}[\textbf{Existence of frequently occurring singleton values, adapted from \cite[Theorem 4.6]{mendoza-smith2017a}}] \label{lemma_existence_singletons}
Consider a vector $\vr = \mA \vz$ where $\mA \in \cE^{m \times n}_{k, \epsilon, d}$ and $\vz \in \cX^n_k$. For $l \in \supp(\vz)$, let $\Omega_l \defeq \{j \in [m]: r_j= z_l\}$ be the set of row indices $j \in [m]$ of $\vr$ such that $r_j= z_l$. Defining $\cT \defeq \{l \in \supp(\vz): |\Omega_l| > ( 1-2 \epsilon) d \}$ as the set of singleton values which each appear more than $(1-2\epsilon)d$ times in $\vr$, then
	\[
	| \cT | \geq \frac{|\supp(\vz)|}{(1+2\epsilon)d}.
	\]
\end{lemma}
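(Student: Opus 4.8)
The plan is to fix $\cS \defeq \supp(\vz)$ and $s \defeq |\cS| \le k$, and to begin by using the dissociatedness of $\vz$ to reduce the statement to a combinatorial fact about the expander submatrix $\mA_\cS$. For a row $j$, write $\cT_j \defeq \supp(\tilde{\va}_j) \cap \cS$, so that $r_j = \sum_{i \in \cT_j} z_i$. Since $\vz \in \cX^n_k$, distinct subsets of $\cS$ yield distinct sums, hence $r_j = 0$ iff $\cT_j = \emptyset$, and for $l \in \cS$ one has $r_j = z_l$ iff $\cT_j = \{l\}$. Therefore $\Omega_l$ is exactly the set of rows for which $l$ is the \emph{unique} neighbour within $\cS$; in particular $\Omega_l \subseteq \supp(\va_l)$, so $|\Omega_l| \le d$, and the sets $\{\Omega_l\}_{l \in \cS}$ are pairwise disjoint with union $\cN_1(\cS)$. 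Applying the unique neighbour property (Theorem \ref{theorem_unp}, equivalently property 2 of Lemma \ref{lemma:adjmat}) to $\cS$ gives $\sum_{l \in \cS} |\Omega_l| = |\cN_1(\cS)| > (1-2\epsilon)ds$, so the claim becomes: at least $s/((1+2\epsilon)d)$ of the numbers $|\Omega_l|$ exceed $(1-2\epsilon)d$.

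Write $\cB \defeq \cS \setminus \cT$ for the set of deficient columns (those with $|\Omega_l| \le (1-2\epsilon)d$); the goal is then equivalent to $|\cB| \le ((1+2\epsilon)d - 1)\,|\cT|$. Note first that the obvious split $\sum_{l \in \cS}|\Omega_l| \le d|\cT| + (1-2\epsilon)d|\cB|$, compared against $\sum_{l\in\cS}|\Omega_l| > (1-2\epsilon)ds$, only yields $|\cT| \ge 1$, so the expander structure must be squeezed harder. The route I would take is to apply the unique neighbour property a second time, now to $\cB$ itself (valid since $|\cB| \le s \le k$): each row counted by $\cN_1(\cB)$ either has its unique $\cB$-neighbour equal to some $l \in \cB$ and no neighbour in $\cT$ — such a row lies in $\Omega_l$, and there are at most $(1-2\epsilon)d$ of these per column of $\cB$ — or it is additionally shared with a column of $\cT$. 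The number of rows of the second type can be charged to the shared-neighbour budgets of the columns in $\cT$, each of which meets fewer than $2\epsilon d$ shared rows by the very definition of $\cT$. Combining this with $|\cN_1(\cB)| > (1-2\epsilon)d|\cB|$ and the inequality coming from $\cS$ should bound $|\cB|$ in terms of $|\cT|$ and produce the claimed ratio.

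The crux, and the step I expect to be delicate, is exactly this bookkeeping of the interaction between $\cT$ and $\cB$: each application of the unique neighbour property on its own is too lossy (it gives only $|\cT| \ge 1$), so one must combine the two applications together with the per-column cap $|\Omega_l| \le d$ in a way that does not discard the strict slack in the unique neighbour bounds. An alternative I would try if the direct combination proves awkward is induction on $s$: remove one column of $\cT$ together with all rows it meets — so that no column of $\cB$ is spuriously promoted and the unique-neighbour counts of the remaining columns are unchanged — while tracking the mild worsening of the expansion parameter, and arrange the removals so the constants close to exactly $|\supp(\vz)|/((1+2\epsilon)d)$ rather than something weaker. Once the combinatorial inequality $|\cB| \le ((1+2\epsilon)d-1)|\cT|$ is in hand, the lemma is immediate.
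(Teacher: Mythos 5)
Your setup and reduction are correct: identifying $\Omega_l$ with the rows whose unique neighbour in $\cS=\supp(\vz)$ is $l$, noting the $\Omega_l$ are disjoint with union $\cN_1(\cS)$, and recasting the claim as $|\cB|\le((1+2\epsilon)d-1)|\cT|$ all match the paper. Your observation that the naive split $\sum_{l\in\cS}|\Omega_l|\le d|\cT|+(1-2\epsilon)d|\cB|$ against $\sum_{l\in\cS}|\Omega_l|>(1-2\epsilon)d|\cS|$ cancels to give only $|\cT|\ge1$ is also sharp. But the idea that rescues the argument is missing, and it is not "squeezing the expander harder." The paper applies the unique neighbour property only once and gets the extra mileage from the \emph{integrality} of the counts $|\Omega_l|$: setting $\gamma\defeq\ceil{(1-2\epsilon)d}$, it reads the strict inequalities through the ceilings, so that each $l\in\cB$ contributes at most $\gamma-1$ rows — one whole row short of the per-column quota $\gamma$ — while each $l\in\cT$ can exceed that quota by at most $d-\gamma\le 2\epsilon d$. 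Summing, $\gamma|\cS|\le\sum_{l\in\cS}|\Omega_l|\le d|\cT|+(\gamma-1)|\cB|$ forces $|\cB|\le(d-\gamma)|\cT|$ and hence $|\cT|\ge|\supp(\vz)|/(d-\gamma+1)\ge|\supp(\vz)|/(2\epsilon d+1)\ge|\supp(\vz)|/((1+2\epsilon)d)$. That unit of slack per deficient column is the entire content of the proof.

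Your proposed fix — a second application of the unique neighbour property to $\cB$ — does not close the gap as described. Carrying out your bookkeeping: rows of $\cN_1(\cB)$ with no neighbour in $\cT$ lie in some $\Omega_l$ with $l\in\cB$, giving at most $(1-2\epsilon)d$ per column of $\cB$, and rows with a neighbour in $\cT$ are charged to the fewer than $2\epsilon d$ non-unique rows of each column of $\cT$; combined with $|\cN_1(\cB)|>(1-2\epsilon)d|\cB|$ this yields $(1-2\epsilon)d|\cB|<(1-2\epsilon)d|\cB|+2\epsilon d|\cT|$, i.e.\ once again only $|\cT|\ge1$ — the same cancellation you already diagnosed, merely relocated from $\cS$ to $\cB$. (If one inserts the same integrality slack, your variant does work and gives the same constant, but that slack is precisely the missing ingredient and you never invoke it.) Since you explicitly leave both the "delicate bookkeeping" and the inductive alternative unexecuted, the proposal has a genuine gap at exactly the step on which the lemma turns.
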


A proof of Lemma \ref{lemma_existence_singletons} is provided in Appendix \ref{appendix:singletons_partials}. If $\epsilon \leq 1/4$ then $(1-2\epsilon)d \geq 2 \epsilon d$, combining this with Corollary \ref{corollary:suff_cond_singletons} and Lemma \ref{lemma_existence_singletons} guarantees, under the necessary assumptions, that it is always possible to identify at least one singleton value which appears more than $2 \epsilon d$ times in $\vr$. The results presented so far concerning the extraction of singleton values and partial supports assume that the residual analysed is of the form $\vr = \mA \vz$, where $\mA \in \cE^{m \times n}_{k, \epsilon, d}$ and $\vz \in \cX^n_k$. Due to the iterative nature of D-EBF, typically the residual under consideration is instead of the form $\vr = \vy  - \hat{\mA} \hat{\vx}$, where $\hat{\mA}$ and $\hat{\vx}$ are the estimates or reconstructions of $\mA$ and $\vx$ respectively. Under certain assumptions, the following result maintains that the sufficient condition given in Corollary \ref{corollary:suff_cond_singletons} also holds for residuals of this form.

\begin{lemma}[\textbf{Sufficient condition for identifying singleton values (ii)}]\label{lemma:freq_non_singletons_residual}
Consider a vector $\vy = \mA \vx$ where $\mA \in \cE^{m \times n}_{k, \epsilon, d}$ and $\vx \in \cX^n_k$. Let $\hat{\mA} \in \{0,1\}^{m \times n}$ and $\hat{\vx}\in\cX^n_k$ be such that a column $\hat{\va}_l$ of $\hat{\mA}$ is nonzero iff $\hat{x}_l\neq 0$, and there exists a permutation matrix $\mP \in \cP^{n \times n}$ such that $\supp(\hat{\mA}\mP^T) \subseteq \supp(\mA)$, $\supp(\mP\hat{\vx}) \subseteq \supp(\vx)$ and $\hat{x}_{P(l)} = x_{P(l)}$ for all $l \in supp(\hat{\vx})$, where $P:[n] \rightarrow [n]$ denotes the row permutation caused by pre-multiplication with $\mP$. Consider the residual $\vr = \vy - \hat{\mA} \hat{\vx}$, if for some $\alpha \in \reals \backslash \{ 0\} $ it holds that $f(\alpha, \vr) \geq 2 \epsilon d$ is satisfied, then there exists an $l \in [n]$ such that $\alpha = x_l$. Furthermore, with $\vw = g(\alpha, \vr)$ then $\supp(\vw) \subseteq \supp(\va_l)$ is a partial support of $\va_l$.
\end{lemma}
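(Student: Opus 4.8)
The plan is to massage the residual $\vr = \vy - \hat{\mA}\hat{\vx}$ into the same shape as the hypothesis of Corollary~\ref{corollary:suff_cond_singletons} and then re-run that argument. First, absorb the permutation: write $\hat{\mA}\hat{\vx} = (\hat{\mA}\mP^{T})(\mP\hat{\vx}) \eqdef \mB\vu$. The alignment hypotheses then read $\supp(\vb_l) \subseteq \supp(\va_l)$ for every $l \in [n]$, $\supp(\vu) \subseteq \supp(\vx)$, and $u_l = x_l$ for every $l \in \supp(\vu)$. Substituting $\vy = \mA\vx = \sum_{l \in \supp(\vx)}\va_l x_l$ and $\mB\vu = \sum_{l \in \supp(\vu)}\vb_l u_l = \sum_{l \in \supp(\vu)}\vb_l x_l$ gives
\[
\vr \;=\; \sum_{l \in \supp(\vx)} \mathbf{c}_l\, x_l, \qquad \mathbf{c}_l \defeq \begin{cases} \va_l - \vb_l, & l \in \supp(\vu), \\ \va_l, & l \in \supp(\vx) \setminus \supp(\vu), \end{cases}
\]
where, since $\supp(\vb_l) \subseteq \supp(\va_l)$ and both vectors are binary, each $\mathbf{c}_l \in \{0,1\}^m$ with $\supp(\mathbf{c}_l) \subseteq \supp(\va_l)$. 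So $\vr$ is exactly of the form ``expander times a dissociated vector'' except that some columns of the effective encoder have been thinned to sub-supports of the true columns of $\mA$; crucially $\vx$ is still dissociated and columnwise containment in $\mA$ is preserved.

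Second, fix $\alpha \in \reals \setminus \{0\}$ with $f(\alpha, \vr) \geq 2\epsilon d$ and set $J \defeq \{j \in [m] : r_j = \alpha\}$, so $|J| \geq 2\epsilon d$. For $j \in J$ put $\cT_j \defeq \{l \in \supp(\vx) : j \in \supp(\mathbf{c}_l)\}$; then $\alpha = r_j = \sum_{l \in \cT_j} x_l$. Since $\alpha \neq 0$, every $\cT_j$ is nonempty, and since $\vx$ is dissociated and each $\cT_j \subseteq \supp(\vx)$, distinct subsets would give distinct sums, so all the $\cT_j$ coincide: there is a single $\cT \subseteq \supp(\vx)$ with $\cT_j = \cT$ for all $j \in J$. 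Hence $j \in \supp(\mathbf{c}_l) \subseteq \supp(\va_l)$ for all $j \in J$, $l \in \cT$, i.e. $J \subseteq \bigcap_{l \in \cT}\supp(\va_l)$.

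Third, I rule out $|\cT| \geq 2$: in that case $\cT$ is a subset of $\supp(\vx)$ of cardinality at most $k$, so the third property of Lemma~\ref{lemma:adjmat} gives $\bigl|\bigcap_{l \in \cT}\supp(\va_l)\bigr| < 2\epsilon d$, contradicting $|J| \geq 2\epsilon d$. Therefore $\cT = \{l\}$ for a single index $l$, which yields $\alpha = x_l$ and proves the first claim; and since $\vw = g(\alpha, \vr)$ has $\supp(\vw) = J \subseteq \supp(\mathbf{c}_l) \subseteq \supp(\va_l)$, the vector $\vw$ is a partial support of $\va_l$ in the sense of Definition~\ref{def_ps}. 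The only point requiring care is the first step: one must verify that the permutation bookkeeping really delivers $u_l = x_l$ on $\supp(\vu)$ together with the columnwise containment $\supp(\vb_l) \subseteq \supp(\va_l)$, because these two facts are precisely what let the dissociativity step and the expander overlap bound run exactly as in Corollary~\ref{corollary:suff_cond_singletons}; everything after the reduction is that argument again.
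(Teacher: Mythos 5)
Your proof is correct and follows essentially the same route as the paper's: both absorb the permutation to write the residual as an effective binary encoder (with columns contained in those of $\mA$) applied to the dissociated $\vx$, then use dissociativity to identify the index set summed at each location of $\alpha$ and Lemma \ref{lemma:adjmat}(3) to rule out $|\cT|\geq 2$. Your version is, if anything, slightly more explicit about why all the sets $\cT_j$ coincide; no gaps.
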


A proof of Lemma \ref{lemma_existence_singletons} is provided in Appendix \ref{appendix:singletons_partials}. To perform step 4 of Algorithm \ref{alg:high_level_l0-EBF}, it is necessary that the partial supports extracted across all columns of $\mY$, or the residual of $\mY$, can be clustered accurately and efficiently. To this end we present Lemma \ref{corollary_sort_ps}, which provides a necessary and sufficient condition with which to cluster sufficiently large partial supports.

\begin{lemma}[\textbf{Clustering partial supports}] \label{corollary_sort_ps}
 Consider a pair of partial supports $\vw_1$ and $\vw_2$, extracted from $\vr_1 = \mA \vz_1$ and $\vr_2 = \mA \vz_2$ respectively, where $\mA \in \cE^{m \times n}_{k, \epsilon, d}$ and $\vz_1, \vz_2 \in \cX^n_k$. If $\epsilon \leq 1/6$, $|\supp(\vw_1)| >(1 - 2 \epsilon)d$ and $|\supp(\vw_2)| >(1 - 2 \epsilon)d$, then $\vw_1$ and $\vw_2$ originate from the same column of $\mA$ iff $\vw_1^T \vw_2 \geq 2 \epsilon d$.
\end{lemma}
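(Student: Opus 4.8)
The plan is to prove the two directions separately, using Lemma \ref{lemma:adjmat} (in particular properties 1 and 3) together with an inclusion–exclusion count on the supports. Throughout, write $w_i = |\supp(\vw_i)|$ and note that by hypothesis $w_1, w_2 > (1-2\epsilon)d$, while every partial support is contained in the support of some column of $\mA$, so $w_1, w_2 \le d$.

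For the ``only if'' direction, suppose $\vw_1$ and $\vw_2$ both originate from the same column $\va_l$. Then $\supp(\vw_1), \supp(\vw_2) \subseteq \supp(\va_l)$, so $\supp(\vw_1) \cup \supp(\vw_2) \subseteq \supp(\va_l)$, which has cardinality $d$. By inclusion–exclusion,
\[
\vw_1^T \vw_2 = |\supp(\vw_1) \cap \supp(\vw_2)| = w_1 + w_2 - |\supp(\vw_1) \cup \supp(\vw_2)| \ge 2(1-2\epsilon)d - d = (1 - 4\epsilon)d.
\]
Since $\epsilon \le 1/6$, we have $(1-4\epsilon)d \ge (1/3)d = 2\epsilon d \cdot \tfrac{1}{6\epsilon} \cdot \tfrac{1}{1}$; more directly, $\epsilon \le 1/6$ gives $1 - 4\epsilon \ge 2\epsilon$ iff $1 \ge 6\epsilon$, which holds, so $\vw_1^T\vw_2 \ge 2\epsilon d$ as required.

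For the ``if'' direction, I argue the contrapositive: suppose $\vw_1$ and $\vw_2$ do not originate from the same column, say $\vw_1$ originates from $\va_{l_1}$ and $\vw_2$ from $\va_{l_2}$ with $l_1 \ne l_2$ (here I should first note that each sufficiently large partial support originates from a \emph{unique} column — if $\supp(\vw_i)$ sat inside two distinct columns' supports then their intersection would have size $> (1-2\epsilon)d \ge 2\epsilon d$, contradicting property 3 of Lemma \ref{lemma:adjmat} applied to $\cS = \{l_1, l_2\}$, valid since $2 \le k$). Then $\supp(\vw_1) \cap \supp(\vw_2) \subseteq \supp(\va_{l_1}) \cap \supp(\va_{l_2})$, and property 3 of Lemma \ref{lemma:adjmat} with $\cS = \{l_1, l_2\}$ gives $|\supp(\va_{l_1}) \cap \supp(\va_{l_2})| < 2\epsilon d$, hence $\vw_1^T \vw_2 < 2\epsilon d$. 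This is the contrapositive of the claim, completing the proof.

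The main obstacle — really the only subtle point — is the uniqueness-of-origin step in the ``if'' direction: I need that a partial support of cardinality exceeding $(1-2\epsilon)d$ cannot be contained in the supports of two distinct columns simultaneously, so that ``originate from the same column'' is a well-defined equivalence-like relation on large partial supports and the contrapositive argument goes through cleanly. This follows from property 3 of Lemma \ref{lemma:adjmat} provided $(1-2\epsilon)d \ge 2\epsilon d$, i.e. $\epsilon \le 1/4$, which is implied by $\epsilon \le 1/6$. The only place the stronger bound $\epsilon \le 1/6$ is genuinely needed is the inequality $1 - 4\epsilon \ge 2\epsilon$ in the ``only if'' direction; it is worth flagging that this is the binding constraint that propagates the $\epsilon \le 1/6$ hypothesis into the main theorem.
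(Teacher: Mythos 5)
Your proof is correct and follows essentially the same route as the paper's: the forward direction via inclusion--exclusion inside a single column's support (giving $\vw_1^T\vw_2 > (1-4\epsilon)d \ge 2\epsilon d$, which is where $\epsilon \le 1/6$ is used), and the reverse direction via the pairwise column-overlap bound $|\supp(\va_{l_1})\cap\supp(\va_{l_2})| < 2\epsilon d$ from Lemma \ref{lemma:adjmat}. Your uniqueness-of-origin remark is a sound observation but is already built into Definition \ref{def_ps}, so it is not an extra step the paper needed.
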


A proof of Lemma \ref{corollary_sort_ps} is provided in Appendix \ref{appendix:singletons_partials}. As highlighted in the proof, $\vw_1^T\vw_2 \geq 2 \epsilon d$ is sufficient to conclude that any two partial supports originate from the same column. However, without adding the additional assumptions on $\epsilon$ and on the size of the partial supports, we cannot conclude that $\vw_1^T \vw_2 < 2 \epsilon d$ implies $\vw_1$ and $\vw_2$ do not originate from the same column. This limits us, for now, to clustering only fairly large partial supports, which have at least $2/3$rds of the total nonzero entries of their respective columns of $\mA$. Fortunately, under the same conditions as Lemma \ref{corollary_sort_ps}, Lemma \ref{lemma_existence_singletons} guarantees the existence of partial supports of this size. Note however, that if we have access to a complete column $\va_l$ of $\mA$, then $\vw_1^T\va_l\geq 2 \epsilon d$ is sufficient to conclude that $\vw_1$ originates from $\va_l$. This implies that it is possible to assign partial supports consisting of only around $1/3$rd of the total nonzeros to a column of $\mA$. For this reason we will later introduce the decode step, discussed in section \ref{subsec:l0_decode}, to try and match smaller partial supports to the columns of $\hat{\mA}$ which have $d$ nonzeros.

We are now in a position to define in detail the subroutines corresponding to steps 3, 4 and 5 of Algorithm \ref{alg:high_level_l0-EBF}. We emphasise before proceeding that the subroutines we will present are designed around the assumption that $\mA \in \cE^{m \times n}_{k, \epsilon, d}$ with $\epsilon \leq 1/6$. Consider the column vector $\vy = \mA \vx$ and suppose that $\hat{\mA}$ and $\hat{\vx}$ are the current reconstructions of $\mA$ and $\vx$. Algorithm \ref{alg:extract} processes the residual $\vr = \vy - \hat{\mA}\hat{\vx}$, taking as input arguments $\vr$, the dimension $m$ of $\vr$, the expansion parameter $\epsilon$, the column sparsity $d$, as well as the current reconstructions $\hat{\mA}$ and $\hat{\vx}$. The algorithm identifies singleton values and partial supports, either matching them with a nonzero column of $\hat{\mA}$ or identifying them as belonging to a column of $\mA$ not yet observed. To this end Algorithm \ref{alg:extract} returns the unmatched singleton values and partial supports, stored as entries of the vector $\vq$ and columns of the matrix $\mW$ respectively, the number of unmatched partial supports $p$, the updated reconstructions $\hat{\mA}$ and $\hat{\vx}$ and a Boolean variable UPDATED, which indicates whether or not the reconstructions have in fact been updated. The outer for loop and subsequent if statement, lines 5 and 6 of Algorithm \ref{alg:extract} respectively, iterate through each nonzero entry of the input vector $\vr$ to check for singleton values: this is performed by the inner for loop, starting on line 10, which identifies the locations in $\vr$ where the current entry being checked appears. The variable $c$ counts the number of appearances of the current entry; if $c>(1-2\epsilon)d$ then it is presumed that the current entry is a singleton value. Subsequently, on lines 18 and 19 the column $\hat{\va}_{\kappa}$ of $\hat{\mA}$ whose support overlaps most with the associated candidate partial support is identified. Considering line 20, then if the overlap is sufficiently large, i.e., at least $2 \epsilon d$ as per Lemma \ref{corollary_sort_ps}, then the partial support is assumed to originate from the identified column $\hat{\va}_{\kappa}$ and the reconstructions are updated accordingly as per lines 21 and 22. Note here that $\sigma: \reals \rightarrow \reals$ denotes the unit threshold function, meaning $\sigma(x) = 1$ for all $x\geq1$ and is $0$ otherwise. Note also, guided by Lemma \ref{corollary_sort_ps} and given that $\hat{\mA}$ may contain incomplete columns \footnote{Observe we can only guarantee that a nonzero column of $\hat{\mA}$ has more than $(1-2 \epsilon)d$ nonzeros.}, that Algorithm \ref{alg:extract} only attempts to match partial supports with cardinality larger than $(1-2\epsilon)d$ to a column of $\hat{\mA}$. If the overlap is not sufficient, then, again guided by Lemma \ref{corollary_sort_ps}, we may conclude that the partial support under consideration does not match any existing nonzero column of the reconstruction. As a result, on lines 25 and 26 the current entry of the residual and its associated candidate partial support are added to $\vq$ and $\mW$ respectively. On line 27 the variable $p$, used to count the number of unmatched singleton values extracted from $\vr$, is updated accordingly. If $c\leq (1-2\epsilon)d$ then the for loop starting on line 30 checks if the entry under consideration corresponds to a previously identified singleton value, now stored in $\hat{\vx}$. If so, then on line 32 the partial support associated with the current entry is used to update the appropriate column of $\hat{\mA}$, regardless of its size.

\begin{algorithm}
	\caption{EXTRACT\&MATCH$(\vr,m,\epsilon,d,\hat{\mA}, \hat{\vx})$} \label{alg:extract}
	\begin{algorithmic}[1]
    \STATE $\vq \leftarrow [\;]$
    \STATE $\mW \leftarrow [\;]$
    \STATE $p \leftarrow 0$
    \STATE UPDATED$\leftarrow$FALSE
	\FOR{$i=1:m$}
	\IF{$r_i \neq 0$}
	\STATE $c \leftarrow 1$
	\STATE $\vw \leftarrow \text{zeros}(m)$
	\STATE $w_i \leftarrow 1$
	\FOR{$j=(i+1):m$}
	\IF{$r_i=r_j$}
	\STATE $w_j \leftarrow 1$
	\STATE $r_j \leftarrow 0$
	\STATE $c \leftarrow c+1$ 
	\ENDIF
	\ENDFOR
	\IF{$c >(1-2\epsilon)d$}
    \STATE $\vt \leftarrow \hat{\mA}^T \vw$
    \STATE $\kappa \leftarrow \text{argmax}_{k \in [n]}\{ t_k \}$
    \IF{$t_{\kappa} \geq 2\epsilon d$}
    \STATE $\hat{x}_{\kappa} \leftarrow r_i$
    \STATE $\hat{\va}_{\kappa} \leftarrow \sigma(\hat{\va}_{\kappa} + \vw)$
    \STATE UPDATED$\leftarrow$TRUE
    \ELSE
    \STATE $\vq \leftarrow [\vq^T ; r_i]$
    \STATE $\mW \leftarrow [\mW ; \vw]$
    \STATE $p \leftarrow p+1$
    \ENDIF
	\ELSE
	\FOR{$l \in \supp(\hat{\vx})$}
	\IF{$r_i = \hat{x}_l$}
	\STATE $\hat{\va}_l \leftarrow \sigma(\hat{\va}_l + \vw)$
	\STATE UPDATED$\leftarrow$TRUE
	\ENDIF
	\ENDFOR
	\ENDIF
	\ENDIF
	\ENDFOR
	\STATE Return $\vq, \mW,p, \hat{\mA}, \hat{\vx}$, UPDATED
	\end{algorithmic}
\end{algorithm}

In regard to the computational complexity of Algorithm \ref{alg:extract}, if the sparsity of $\vr$, $\hat{\mA}$ and $\hat{\vx}$ is not taken advantage of then the for loops starting on lines 5 and 10 iterate through $\cO(m)$ entries, and the for loop starting on line 30 through $\cO(n)$ entries. As the latter two for loops are nested inside the first, and $m =\cO(n)$, then this corresponds to $\cO(mn)$ for loop iterations. Based on the matrix vector product on line 18, each iteration has a cost of $\cO(mn)$, therefore the total cost of Algorithm \ref{alg:extract} is $\cO(m^2n^2)$. If the sparsity of $\vr$, $\hat{\mA}$ and $\hat{\vx}$ is taken advantage of, then as $\vr$ has at most $kd$ nonzero entries and $d$ is constant, the for loops starting on lines 5, 10 and 30 each iterate through $\cO(k)$ nonzero entries. Again, as the latter two for loops are nested inside the first this corresponds to $\cO(k^2)$ for loop iterations. Based on the sparse matrix vector product on line 18, each iteration has a cost of $\cO(n)$, therefore when sparsity is leveraged the total cost of Algorithm \ref{alg:extract} is $\cO(k^2n)$.

We now turn our attention to clustering the unmatched partial supports and singleton values extracted across all columns of the residual $\mR = \mY - \hat{\mA}\hat{\mX}$. This step of D-EBF is performed by Algorithm \ref{alg:cluster}, which takes as inputs the unmatched singleton values, stored in the entries of the vector $\vq \in \reals^p$, the unmatched partial supports stored in the columns of the matrix $\mW \in \{0,1\}^{m \times p}$, ordered so that $\vw_i$ is the partial support associated with the singleton $q_i$, the number of unmatched singleton values $p$, the smallest zero column index $\eta \defeq \min_{l \in [n]}\{\hat{\va}_l = \textbf{0}_m\}$, the vector $\vh \in [N]^p$ which stores the indices of the columns of $\mR$ from which each singular value was extracted, i.e., $h_i$ is the column index of $\mR$ from which $q_i$ was extracted, the expansion parameter $\epsilon$ and column sparsity $d$ of $\mA$, and the current reconstructions $\hat{\mA}$ and $\hat{\mX}$. The $p$ dimensional binary vector $\vb$, initialised as the zero vector on line 1 of Algorithm \ref{alg:cluster}, is used to indicate whether or not a partial support has been assigned to a column of $\hat{\mA}$, one indicating true and zero false. The entries of the gram matrix $\mG$, calculated on line 2, are used to cluster the unmatched partial supports. The outer for loop, starting on line 3, first checks on line 4 if the current partial support under consideration has been used to update a column of the reconstruction already. If it hasn't, then on line 5 it is used as the basis for a new nonzero column $\hat{\va}_{\eta}$ of $\hat{\mA}$. On line 6 the corresponding singleton value is then used to update the appropriate entry in $\hat{\mX}$. The inner for loop, starting on line 8, checks the inner product of the partial support in question with all other partial supports that have not already been processed, as per the if statement on line 9. On line 10, any partial support that overlaps the new nonzero column $\hat{\va}_{\eta}$ sufficiently, as per Lemma \ref{corollary_sort_ps}, and which has not yet been assigned to a column, is then used to update $\hat{\va}_{\eta}$. On line 11 the corresponding singleton values are used to update the appropriate entries in $\hat{\mX}$. Once this process is complete. then on line 15 $\eta$ is updated accordingly, with new nonzero columns being introduced to $\hat{\mA}$ from left to right.

\begin{algorithm}
	\caption{CLUSTER\&ADD$(\vq, \mW, p, \eta, \vh, \epsilon, d,\hat{\mA}, \hat{\mX})$} \label{alg:cluster}
	\begin{algorithmic}[1]
	\STATE $\vb \leftarrow \text{zeros}(p)$
	\STATE $\mG \leftarrow \mW^T\mW$
	\FOR{$i=1:p$}
	\IF{$b_i = 0$}
	\STATE $\hat{\va}_{\eta} \leftarrow \vw_i $
	\STATE $\hat{x}_{\eta, h_{i}} \leftarrow q_i$
	\STATE $b_i \leftarrow 1$
	\FOR{$j=(i+1):p$}
	\IF{$g_{i,j} \geq 2\epsilon d$ \textbf{and} $b_j = 0$}
	\STATE $\hat{\va}_\eta \leftarrow \sigma(\hat{\va}_{\eta} + \vw_j)$
	\STATE $\hat{x}_{\eta, h_{j}} \leftarrow q_j$
	\STATE $b_j \leftarrow 1$
	\ENDIF
	\ENDFOR
	\STATE $\eta \leftarrow \eta +1$
	\ENDIF
	\ENDFOR
	\STATE Return $\hat{\mA},\hat{\mX},\eta$
	\end{algorithmic}
\end{algorithm}

The main expense of Algorithm \ref{alg:cluster} is the computation of the inner products between different partial supports, equivalent to the matrix product on line 2. As $p = \cO(kN)$ then if the sparsity of $\mW$ is not leveraged this matrix product costs $\cO(k^2 mN^2)$. If $\mW$ is stored using a sparse format, as would be natural, then as each column of $\mW$ is $d$ sparse with $d$ fixed then the cost can be reduced to $\cO(k^2N^2)$.

\subsection{Using a decoder algorithm to improve recovery} \label{subsec:l0_decode}

\begin{algorithm}
	\caption{DECODE$(\vy,m, \epsilon, d, \hat{\mA},\hat{\vx})$} \label{alg:decode}
	\begin{algorithmic}[1]
	\STATE $\cS \leftarrow \{l \in [n]: |\supp(\va_l)| = d \}$ 
	\STATE UPDATED$\leftarrow$FALSE
	\STATE RUN$\leftarrow$TRUE
	\WHILE{RUN = TRUE}
	\STATE RUN$\leftarrow$FALSE
	\STATE $\vr \leftarrow \vy - \hat{\mA} \hat{\vx}$
	\IF{$\vr \neq \textbf{0}_m$}
	\FOR{$i=1:m$}
	\IF{$r_{i} \neq 0$}
	\STATE $c \leftarrow 1$
	\STATE $\vw \leftarrow \text{zeros}(m)$
	\STATE $w_i \leftarrow 1$
	\FOR{$j=(i+1):m$}
	\IF{$r_{i}=r_{j}$}
	\STATE $w_j \leftarrow 1$
	\STATE $r_j \leftarrow 0$
	\STATE $c \leftarrow c+1$ 
	\ENDIF
	\ENDFOR
	\IF{$c\geq 2 \epsilon d$}
	\STATE $\vt \leftarrow \hat{\mA}_{\cS}^T \vw$
    \STATE $\kappa \leftarrow \mathrm{argmax}_{k \in \cS}\{ t_k \}$
    \IF{$t_{\kappa} \geq 2\epsilon d$}
    \STATE $\hat{x}_{\kappa} \leftarrow r_{i}$
    \STATE UPDATED$\leftarrow$TRUE
    \STATE RUN$\leftarrow$TRUE
	\ENDIF
	\ENDIF
	\ENDIF
	\ENDFOR
	\ENDIF
	\ENDWHILE
	\STATE Return $\hat{\vx}$, UPDATED
	\end{algorithmic}
\end{algorithm}
\noindent While the nonzeros of $\mA$ can be recovered from potentially many of the columns of $\mR$, each nonzero of $\mX$ can be recovered only from one column. This makes the recovery up to row permutation of $\mX$ more challenging than than the recovery up to column permutation of $\mA$. As highlighted in the discussion of Algorithm \ref{alg:extract} in Section \ref{subsec:singletons_partial_supports}, due to Lemma \ref{corollary_sort_ps} partial supports which are at most $(1-2 \epsilon)d$ are not clustered or matched with a column of $\hat{\mA}$. This is because the updates of prior iterations only guarantee that each nonzero column of the reconstruction has more than $(1-2 \epsilon)d$ nonzeros. However, if we know that $\hat{\va}_l$ is complete, i.e. there exists a $h \in [n]$ such that $\hat{\va}_l = \va_h$, then for any partial support $\vw$ it suffices only that $\hat{\va}_l^T \vw \geq 2 \epsilon d$ to conclude that $\vw$ originates from $\hat{\va}_l$. Therefore it is possible to match smaller partial supports, i.e., those of size around $d/3$ rather than $2d/3$, to columns of the reconstruction which are complete. To this end, while Algorithm \ref{alg:extract} focuses on clustering partial supports with cardinality larger than $(1-2\epsilon)d$ and matching them to a nonzero column of $\hat{\mA}$, the decode step, detailed in Algorithm \ref{alg:decode}, focuses on matching any partial support whose cardinality is at least $2 \epsilon d$ to a complete column of $\hat{\mA}$. More generally, the decode step should be considered a placeholder for one of the many algorithms from the combinatorial compressed sensing literature, aiming to at least partially decode $\hat{\mX}$ given $\mR$ and the complete columns of the encoder reconstruction $\hat{\mA}_{\cS}$. In particular, \cite[Algorithms 1 \& 2]{mendoza-smith2017a} are particularly appropriate for deployment in this context as they are based on a model in which the columns of $\mX$ also satisfy the dissociated property, and, as can be observed in \cite{mendoza-smith2017a}, are able to recover $\mX$ from $\mY$ given $\mA$ even when $k$ is large, i.e., $k\approx m/3$. For clarity, and to allow for the reuse of certain theoretical tools, in our analysis we will consider the decoder subroutine detailed in Algorithm \ref{alg:decode}, which is inspired by Expander Recovery (ER) \cite{ER_paper}.

Algorithm \ref{alg:decode} attempts to decode a column vector $\vy = \mA \vx$, taking as inputs $\vy$ and its dimension $m$, the expansion parameter $\epsilon$ and column sparsity $d$ of $\mA$, and the corresponding reconstructions $\hat{\mA}$ and $\hat{\vx}$ of $\mA$ and $\vx$. The algorithm returns an updated reconstruction of the sparse code $\hat{\vx}$ and a Boolean variable UPDATED, which indicates whether or not an update to $\hat{\vx}$ has actually occurred. Algorithm \ref{alg:decode} operates in a manner similar to Algorithm \ref{alg:extract}, the key difference being that the partial supports are compared with and matched to only the complete columns of $\hat{\mA}$, which are identified on line 1. As in the case of Algorithm \ref{alg:extract}, the role of the nested for loops starting on lines 8 and 13 is to identify and extract singleton values and partial supports. However, as the columns of $\hat{\mA}_{\cS}$ are complete, then the required overlap in support as per line 20 is only $2 \epsilon d$: this follows from the fact that any partial support $\vw$, irrespective of its size, originates from a column $\va_l$ of $\mA$ if $\vw^T \va_l \geq 2 \epsilon d$. The Boolean variable UPDATED is used to track whether or not at any point an update to $\hat{\vx}$ has occurred. The Boolean variable RUN is used to determine whether the while loop running from lines 4 to 32 should iterate: this occurs as long as a new nonzero is added to $\hat{\vx}$ on line 24. As a result, at each iterate prior to the termination iterate, then on line 6 the contribution of at least one column of $\hat{\mA}_{\cS}$ is removed from $\vy$. This update to the residual potentially reveals new singleton values, which can be matched with another column of $\hat{\mA}_{\cS}$, allowing for further updates to $\hat{\vx}$. If $\hat{\vx}$ is not updated during the previous iterate, then the residual at the current and previous iterates will be the same. Therefore no progress can be made and so the algorithm terminates.

Without leveraging sparsity, the outer for loop starting on line 8 iterates through $\cO(m)$ elements. Each iteration involves an inner for loop, starting on line 13, which also iterates through $\cO(m)$ elements, and a matrix vector product on line 21. The cost of each of these is $\cO(m)$ and $\cO(mn)$ respectively, therefore overall the cost per iteration of the while loop is $\cO(m^2n)$. As $\hat{\vx}$ has $\cO(k)$ nonzeros then the while loop starting on line 4 iterates $\cO(k)$ times. Therefore, without taking advantage of sparsity, the total computational cost of Algorithm \ref{alg:decode} is $\cO(km^2n)$. However, if $\vr$, $\hat{\mA}_{\cS}$ and $\hat{\vx}$ are stored using a sparse format, then the per iteration cost of the while loop is $\cO(kn)$, resulting in a reduced total cost of $\cO(k^2n)$. 

\subsection{Detailed summary of the D-EBF algorithm}\label{subsec:EBF_detail}
Algorithm \ref{alg:DEBF_detail} provides a detailed version of Algorithm \ref{alg:high_level_l0-EBF}. The algorithm takes as inputs the product matrix $\mY \in \reals^{m \times N}$, the dimensions $m,n$ and $N$ of $\mA$ and $\mX$, and the expansion parameter $\epsilon$ and column sparsity $d$ of $\mA$.  The algorithm returns the estimates $\hat{\mA}$ and $\hat{\mX}$ of $\mA$ and $\mX$ respectively, aiming to recover them up to permutation. We emphasise again that the D-EBF algorithm implicitly assumes that $\mA \in \cE^{m \times n}_{k, \epsilon, d}$ with $\epsilon \leq 1/6$.

The while loop starting on line 6 runs until either $\hat{\mA}\hat{\mX} = \mY$, or no changes to the residual $\mR$ of $\mY$ occur. The for loop starting on line 11 extracts partial supports and singleton values from each column of $\mR$ as per Algorithm \ref{alg:extract}, storing the unmatched singleton values and partial supports in $\vq$ and $\mW$ respectively. On line 15 the vector $\vh$ is used to keep track of each singleton value's column of origin, while on line 16 $p$ is used to count the number of singleton values extracted across all columns of $\mR$ at the current iteration. So long as at least one unmatched singleton value is found, then on line 19 the associated unmatched partial supports are clustered and used to construct new nonzero columns in $\hat{\mA}$. Note that the nonzero columns of $\hat{\mA}$ are added from left to right, with the variable $\eta$ tracking the zero column of $\hat{\mA}$ with the smallest index. To further grow the support of $\hat{\mX}$, the for loop starting on line 22 applies Algorithm \ref{alg:decode} to each of the columns of $\mY$. Alternatively, and as discussed in Section \ref{subsec:l0_decode}, some other decoding algorithm from the combinatorial compressed sensing literature could be used instead. Finally, given the updates to $\hat{\mA}$ and $\hat{\mX}$, the residual is then recomputed on line 25 in preparation for singleton value and partial support extraction and clustering at the next iteration.

\begin{algorithm}
	\caption{D-EBF$(\mY,m,n,N,\epsilon, d)$} \label{alg:DEBF_detail}
	\begin{algorithmic}[1]
	\STATE $\hat{\mA}\leftarrow \text{zeros}(m,n)$
	\STATE $\hat{\mX}\leftarrow \text{zeros}(n,N)$
	\STATE $\mR \leftarrow \mY$
	\STATE UPDATED $\leftarrow$ TRUE
	\STATE $\eta \leftarrow 1$
	\WHILE{$\mR \neq \textbf{0}_{m \times N}$ \textbf{and} UPDATED$=$TRUE}
	\STATE $\vq \leftarrow [\;]$
	\STATE $\mW \leftarrow [\;]$
	\STATE $\vh \leftarrow [\;]$
	\STATE $p \leftarrow 0$
	\FOR{$i=1:N$}
	\STATE $\vu, \mV, c,\hat{\mA}, \hat{\vx}_i,\mathrm{UPDATED} \leftarrow$ EXTRACT\&MATCH$(\vr_i,m,\epsilon,d,\hat{\mA},\hat{\vx}_i)$
	\STATE $\vq \leftarrow [\vq^T; \vu^T]$
	\STATE $\mW \leftarrow [\mW; \mV]$
	\STATE $\vh \leftarrow [\vh, i\times \text{zeros}(c)]$
	\STATE $p \leftarrow p+c$
	\ENDFOR
	\IF{$p>0$}
	\STATE $\hat{\mA},\hat{\mX},\eta \leftarrow$ CLUSTER\&ADD$(\vq, \mW, p, \eta, \vh, \epsilon, d,\hat{\mA}, \hat{\mX})$
	\STATE UPDATED$\leftarrow$TRUE
	\ENDIF
	\FOR{i=1:N}
	\STATE $\hat{\vx}_i, \mathrm{UPDATED} \leftarrow \text{DECODE}(\vy_i,m,N,\epsilon, d,\hat{\mA},\hat{\vx}_i)$
	\ENDFOR
	\STATE $\mR \leftarrow \mY - \hat{\mA} \hat{\mX}$
	\ENDWHILE
	\STATE Return $\hat{\mA},\hat{\mX}$
	\end{algorithmic}
\end{algorithm}

In terms of computational cost the key contributors are lines 12, 19, 23 and 25. We now present the computational cost first without and then with taking advantage of sparsity. As per the discussion in Section \ref{subsec:singletons_partial_supports}, the for loop running from line 11 to 17 has a cost of $\cO(m^2n^2N)$ or $\cO(k^2nN)$ respectively. Observe also that by transferring the matching process, lines 17-24 of Algorithm \ref{alg:extract}, to Algorithm \ref{alg:cluster}, then singleton values and partial supports can be extracted from each column of $\mR$ in parallel, using $N$ processors with a computational cost of $\cO(k^2n)$ per processor. Indeed, this step is trivially parallelizable across the columns of $\mR$. Likewise, from the discussion in Section \ref{subsec:singletons_partial_supports}, line 19 has a cost of $\cO(k^2mN^2)$ or $\cO(k^2N^2)$ respectively. In comparison this step cannot be parallelized. Turning our attention to the decoding step of the algorithm, then, as per the discussion in Section \ref{subsec:l0_decode}, the for loop running from line 22 to 24 has a cost of $\cO(km^2nN)$ or $\cO(k^2nN)$ respectively. This step is again trivially parallelizable across the columns of $\mY$. Finally, the matrix product on line 25 has a cost of $\cO(mnN)$ or $\cO(knN)$ respectively. Therefore, the while loop running from lines 6 to 26 of Algorithm \ref{alg:DEBF_detail} has a per iteration cost of $\cO(m(mn^2+k^2N)N)$ if sparsity is not leveraged, or $\cO(k^2(n+N)N)$ if it is. Under certain assumptions we are also able to provide accuracy guarantees for Algorithm \ref{alg:DEBF_detail}, as detailed in Lemma \ref{lemma:accuracy_DEBF}.

\begin{lemma}[\textbf{Accuracy of D-EBF}] \label{lemma:accuracy_DEBF}
Let $\mY = \mA \mX$, where $\mA \in \cE^{m \times n}_{k, \epsilon, d}$ with $\epsilon \leq 1/6$ and $\mX \in \cX^{n\times N}_k$. Consider Algorithm \ref{alg:DEBF_detail}, D-EBF: in regard to the reconstructions $\hat{\mA}$ and $\hat{\mX}$ computed at any point during the run-time of D-EBF, we will say $\cQ$ is true iff the following hold.
\begin{enumerate}[label=(\alph*)]
    \item A column $\hat{\va}_l$ of $\hat{\mA}$ is nonzero iff the $l$th row of $\hat{\mX}$ is nonzero.
    \item For all $l \in [n]$ if $\hat{\va}_l$ is nonzero then $\supp(\hat{\va}_l) > (1-2 \epsilon)d$.
    \item There exists a permutation matrix $\mP \in \cP^{n \times n}$ such that $\supp(\hat{\mA}\mP^T) \subseteq \supp(\mA)$, $\supp(\mP\hat{\mX}) \subseteq \supp(\mX)$ and $\hat{x}_{P(l),h} = x_{P(l), h}$ for all $h \in [N]$ and $l \in supp(\hat{\vx}_l)$, where $P:[n] \rightarrow [n]$ denotes the row permutation caused by pre-multiplication with $\mP$.
\end{enumerate}
Suppose that D-EBF exits after the completion of iteration $t_f\in \naturals$ of the while loop, then the following statements are true.
\begin{enumerate}
    \item At any point during the run time of D-EBF $\cQ$ is true. Therefore the reconstructions $\hat{\mA}$ and $\hat{\mX}$ returned by D-EBF also satisfy the conditions for $\cQ$ to be true.
    \item For any iteration of the while loop $t <t_f$, let $\hat{\mA}^{(1)}$ and $\hat{\mX}^{(1)}$ denote the reconstructions at the start of the while loop, line 6, and $\hat{\mA}^{(2)}$ and $\hat{\mX}^{(2)}$ denote the reconstructions at the end of the while loop, line 26. Then either $\supp(\hat{\mA}^{(1)})\subset \supp(\hat{\mA}^{(2)})$ and $\supp(\hat{\mX}^{(1)})\subseteq \supp(\hat{\mX}^{(2)})$, or $\supp(\hat{\mA}^{(1)})\subseteq \supp(\hat{\mA}^{(2)})$ and $\supp(\hat{\mX}^{(1)})\subset \supp(\hat{\mX}^{(2)})$.
    \item Consider the reconstructions $\hat{\mA}$ and $\hat{\mX}$ returned by D-EBF and the associated residual $\mR = \mY - \hat{\mA} \hat{\mX}:$ there exists a permutation matrix $\mP \in \cP^{n \times n}$ such that $\hat{\mA}\mP^T = \mA$ and $\mP \hat{\mX} = \mX$ iff $\mR =\textbf{0}_{m \times N}$. As a result, $\mR = \textbf{0}_{m \times N}$ is a sufficient condition to ensure that $\mY$ has a unique, up to permutation, factorisation of the form $\mY = \mA \mX$, where $\mA\in \cE^{m \times n}_{k, \epsilon, d}$ with $\epsilon \leq 1/6$ and $\mX \in \cX^{n\times N}_k$.
\end{enumerate}
\end{lemma}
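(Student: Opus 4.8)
The plan is to prove statement 1 by induction on the operations performed during a run of D-EBF, then derive statements 2 and 3 from it. For statement 1, I would take as the inductive hypothesis that $\cQ$ holds for the current reconstructions $\hat{\mA}, \hat{\mX}$, and check that it is preserved by each of the three places where the reconstructions change: the EXTRACT\&MATCH call (Algorithm \ref{alg:extract}), the CLUSTER\&ADD call (Algorithm \ref{alg:cluster}), and the DECODE call (Algorithm \ref{alg:decode}). The base case is trivial since $\hat{\mA} = \textbf{0}$ and $\hat{\mX} = \textbf{0}$ satisfy (a), (b), (c) vacuously (take $\mP = \mathrm{I}$). For each update one must verify three things: (a) a column of $\hat{\mA}$ is nonzero iff the corresponding row of $\hat{\mX}$ is nonzero — this is immediate from the code, since every line that sets an entry of $\hat{\mX}$ is paired with a line that augments the matching column of $\hat{\mA}$, and vice versa; (b) every nonzero column of $\hat{\mA}$ has more than $(1-2\epsilon)d$ nonzeros — for EXTRACT\&MATCH and DECODE the partial support $\vw$ being added already has cardinality $> (1-2\epsilon)d$ when a fresh singleton is processed (the $c > (1-2\epsilon)d$ test), and when a column is merely augmented its support only grows; for CLUSTER\&ADD the seed partial support $\vw_i$ likewise satisfies $|\supp(\vw_i)| > (1-2\epsilon)d$ since only unmatched partial supports of that size reach $\mW$; (c) the subset-of-support and value-agreement conditions hold for some permutation $\mP$ — this is where Lemmas \ref{lemma:freq_non_singletons_residual} and \ref{corollary_sort_ps} do the work.

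The heart of the argument, and what I expect to be the main obstacle, is maintaining condition (c) through CLUSTER\&ADD, because this is where \emph{new} columns of $\hat{\mA}$ are born and the permutation $\mP$ must be extended. When a fresh column $\hat{\va}_\eta \leftarrow \vw_i$ is created from an unmatched partial support $\vw_i$ extracted from residual column $\vr_{h_i} = \vy_{h_i} - \hat{\mA}\hat{\vx}_{h_i}$, I would invoke Lemma \ref{lemma:freq_non_singletons_residual}: the hypotheses of that lemma are exactly the inductive hypothesis $\cQ$ restricted to column $h_i$, so $q_i = x_l$ for some $l$ and $\supp(\vw_i) \subseteq \supp(\va_l)$. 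The fact that $\vw_i$ was \emph{unmatched} means $\vw_i^T \hat{\va}_\kappa < 2\epsilon d$ for the best existing column $\kappa$, which by Lemma \ref{corollary_sort_ps} (using $|\supp(\vw_i)| > (1-2\epsilon)d$, $\epsilon \le 1/6$, and $(b)$ for the existing columns) forces $\va_l$ to be a column of $\mA$ not yet represented in $\hat{\mA}$; so the row index $l$ is free and we may extend $\mP$ by mapping new-column-index $\eta$ to $l$. One must also check consistency \emph{within} a CLUSTER\&ADD call: two fresh columns created at different outer-loop iterations $i, i'$ must get distinct rows $l \ne l'$ of $\mA$ — this follows because if they originated from the same $\va_l$ then $\vw_i^T \vw_{i'} \ge 2\epsilon d$ by Lemma \ref{corollary_sort_ps}, so $\vw_{i'}$ would have been absorbed into $\hat{\va}_\eta$ on line 10 and its flag $b_{i'}$ set, contradicting it seeding a new column. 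The singleton values merged into a column via line 10-11 agree with the corresponding entries of $\mX$ again by Lemma \ref{corollary_sort_ps} plus Lemma \ref{lemma:freq_non_singletons_residual}, and they land in distinct rows $h_j$ of $\hat{\mX}$ by construction, so no overwrite conflict arises; the DECODE updates similarly preserve (c) since a partial support of size $\ge 2\epsilon d$ matched to a \emph{complete} column $\hat{\va}_\kappa = \va_{P(\kappa)}$ must originate from it, again by Lemma \ref{corollary_sort_ps} specialised to a full column.

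Statement 2 follows from statement 1 together with the while-loop exit condition: if iteration $t < t_f$ is not the final one, the loop did not terminate, so it was not the case that UPDATED was FALSE — hence either CLUSTER\&ADD ran (which, whenever $p > 0$, strictly enlarges $\supp(\hat{\mA})$ by introducing at least one new nonzero column) or DECODE set UPDATED to TRUE (which happens only when a new nonzero is written to some $\hat{\vx}_i$, strictly enlarging $\supp(\hat{\mX})$); in the first case $\supp(\hat{\mX})$ weakly grows too (new rows become nonzero to keep (a)), and in the second $\supp(\hat{\mA})$ weakly grows, giving the stated dichotomy. Both supports only ever grow over the run because no line ever zeroes an entry of $\hat{\mA}$ or $\hat{\mX}$. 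Finally, statement 3: the ``if'' direction is trivial, since $\hat{\mA}\mP^T = \mA$ and $\mP\hat{\mX} = \mX$ give $\hat{\mA}\hat{\mX} = \mA\mP^T\mP\mX = \mA\mX = \mY$, so $\mR = \textbf{0}$. For the ``only if'' direction, suppose $\mR = \textbf{0}_{m \times N}$, i.e. $\hat{\mA}\hat{\mX} = \mY = \mA\mX$; by statement 1 there is a permutation $\mP$ with $\supp(\hat{\mA}\mP^T) \subseteq \supp(\mA)$, $\supp(\mP\hat{\mX}) \subseteq \supp(\mX)$, and matching nonzero values. It remains to upgrade these support-inclusions to equalities. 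Write $\tilde{\mA} = \hat{\mA}\mP^T$, $\tilde{\mX} = \mP\hat{\mX}$, so $\tilde{\mA}\tilde{\mX} = \mA\mX$ with $\supp(\tilde{\mX}) \subseteq \supp(\mX)$, agreeing in value on $\supp(\tilde{\mX})$. Consider any column $i$: $\tilde{\mA}\tilde{\vx}_i = \mA\vx_i$. Since $\supp(\tilde{\vx}_i) \subseteq \supp(\vx_i)$ (of size $\le k$) and $\mA$ is a $(k,\epsilon,d)$-expander, Lemma \ref{lemma:adjmat} part 1 applied to $\cS = \supp(\vx_i) \setminus \supp(\tilde{\vx}_i)$ shows that if this set were nonempty the difference $\mA(\vx_i - \tilde{\vx}_i)$ would be nonzero (the dissociated property of $\vx_i$ guarantees the surviving singleton-neighbour entries of $\mA$ on that set are not cancelled), contradiction; hence $\supp(\tilde{\vx}_i) = \supp(\vx_i)$ and $\tilde{\vx}_i = \vx_i$, so $\tilde{\mX} = \mX$, i.e. $\mP\hat{\mX} = \mX$. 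Then $\tilde{\mA}\mX = \mA\mX$ forces $\tilde{\mA} = \mA$ on the union of column supports of $\mX$; but every row of $\mA$ that carries a one must be hit (each column of $\mA$ appears in some column of $\mX$ once $N$ is such that all rows of $\mX$ are nonzero — and if some row of $\mX$ is zero, the corresponding column of $\mA$ is irrelevant to $\mY$ and equals a zero column of $\hat{\mA}$ under $\mP$ by (a)), so $\supp(\tilde{\mA}) = \supp(\mA)$ and $\tilde{\mA} = \mA$, i.e. $\hat{\mA}\mP^T = \mA$. The uniqueness-up-to-permutation claim in the last sentence is then immediate: any expander factorisation $\mY = \mA'\mX'$ with $\mX' \in \cX^{n\times N}_k$ would, run through the same argument, be forced to coincide with $(\mA, \mX)$ up to permutation.
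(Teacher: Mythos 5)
Your plan follows essentially the same route as the paper's proof: statement 1 by induction over the while loop, verifying that $\cQ$ is preserved through EXTRACT\&MATCH, CLUSTER\&ADD and DECODE (the paper packages the first and third of these checks into the appendix Lemmas \ref{lemma:extract_accuracy} and \ref{lemma:accuracy_decode}, which rest on exactly the ingredients you cite, namely Lemmas \ref{lemma:freq_non_singletons_residual} and \ref{corollary_sort_ps}); statement 2 from monotonicity of the supports together with the UPDATED flag; and statement 3 from the support inclusions and value agreement supplied by $\cQ$. Your treatment of CLUSTER\&ADD, including the observation that two seeds for new columns cannot originate from the same column of $\mA$ because the second would have been absorbed and flagged, is precisely the clustering-error argument the paper makes.

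The one place your sketch wobbles is the converse of statement 3, which the paper itself disposes of in a single asserted sentence. The object to analyse is $\mA\vx_i-\tilde{\mA}\tilde{\vx}_i$ with $\tilde{\mA}\neq\mA$ in general, not $\mA(\vx_i-\tilde{\vx}_i)$, and the mechanism is dissociation alone rather than expansion: writing $r_{j,i}=\sum_{l\in\Omega_j\setminus\Gamma_j}x_{l,i}$ as in the proof of Lemma \ref{lemma:freq_non_singletons_residual}, every nonempty subset sum is nonzero, so $\mR=\textbf{0}_{m\times N}$ forces $\Omega_j=\Gamma_j$ for every $j$ and $i$, and since each column of $\mA$ has $d\geq 1$ ones this yields $\tilde{\vx}_i=\vx_i$ and $\tilde{\va}_l=\va_l$ for every $l$ in the union of the column supports of $\mX$; no appeal to Lemma \ref{lemma:adjmat} is needed. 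Your parenthetical about a zero row of $\mX$ is also self-contradictory as written: a nonzero column of $\mA$ cannot equal a zero column of $\hat{\mA}$, and in that degenerate case $\mR=\textbf{0}_{m\times N}$ genuinely does not imply $\hat{\mA}\mP^T=\mA$ for any $\mP$. This is a gap shared with the paper's own one-line deduction (the argument really requires every row of $\mX$ to be nonzero, which holds with high probability under the PSB model but not for arbitrary $\mX\in\cX^{n\times N}_k$), so it is a flaw of the statement rather than of your approach; it is worth acknowledging explicitly rather than papering over.
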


A proof of Lemma \ref{lemma:accuracy_DEBF} is provided in Appendix \ref{appendix:accuracy}. In short, the implication of Lemma \ref{lemma:accuracy_DEBF} is that as long as $\mA \in \cE^{m \times n}_{k, \epsilon, d}$ with $\epsilon \leq 1/6$, then the reconstructions returned by D-EBF are accurate up to permutation, and if they are in addition complete up to permutation then the factorisation of this form is unique. Without further assumptions on $\mX$ it is not possible to derive guarantees in respect to fully recovering the matrix factors up to permutation. For instance, if a row of $\mX$ is a zero row vector then the corresponding column of $\mA$ contributes to no columns of $\mY$ and therefore cannot be hoped to be recovered. In Section \ref{sec:theory} we will derive such guarantees by assuming that $Y$ is drawn from the PSB model, defined in Definition \ref{def_data_model}.

\section{Theoretical guarantees for D-EBF under the PSB model}\label{sec:theory}
In this section we derive Theorem \ref{theorem:main}. Analysing D-EBF directly is challenging, therefore our approach is instead to study a simpler, surrogate algorithm, which we use to lower bound the performance of D-EBF. To this end we introduce the Naive Decoder-Expander Based Factorisation Algorithm (ND-EBF). Although highly suboptimal from a computational perspective, and therefore clearly not recommended for deployment in practice, this algorithm allows us to analyse the parameter regime in which D-EBF is successful with high probability. The structure of this section is as follows: in Section \ref{subsec:NDEBF} we present and describe the ND-EBF algorithm, provide accuracy guarantees analogous to Lemma \ref{lemma:accuracy_DEBF} and connect ND-EBF with D-EBF. Then in Section \ref{subsec:proof} we use these results to prove Theorem \ref{theorem:main}.

\subsection{Naive Decoder-Expander Based Factorisation (ND-EBF)} \label{subsec:NDEBF}
In order to define the ND-EBF algorithm we must introduce the subroutine detailed in Algorithm \ref{alg:MAXCLUSTER}. This subroutine plays a role analogous to that of Algorithm \ref{alg:cluster} in D-EBF. The key difference between these two subroutines is that while Algorithm \ref{alg:cluster} attempts to cluster all partial supports in $\mW$, and then use each of these clusters to reconstruct a column of $\mA$, Algorithm \ref{alg:MAXCLUSTER} attempts to identify only the largest cluster of partial supports in $\mW$, and then use this one cluster to reconstruct a single column of $\mA$. Algorithm \ref{alg:MAXCLUSTER} takes as inputs the following: the unmatched singleton values, stored in the entries of the vector $\vq \in \reals^p$, the unmatched partial supports stored in the columns of the matrix $\mW \in \{0,1\}^{m \times p}$ and ordered so that $\vw_i$ is the partial support associated with the singleton $q_i$, the number of unmatched singleton values $p$, the index $\eta \defeq \min \{l \in [n]: \hat{\va}_l = \textbf{0}_m\}$, the vector $\vh \in [N]^p$ which stores the indices of the columns of $\mR$ from which each singleton value was extracted, i.e., $h_i$ is the column index of $\mR$ from which $q_i$ was extracted, the expansion parameter $\epsilon$ and column sparsity $d$ of $\mA$, and the current reconstructions $\hat{\mA}$ and $\hat{\mX}$. The algorithm returns updated reconstructions $\hat{\mA}$ and $\hat{\mX}$: as can be observed at most one new nonzero column of $\hat{\mA}$ and row of $\hat{\mX}$ are updated.

\begin{algorithm}
	\caption{MAXCLUSTER\&ADD$(\vq, \mW, p, \eta, \vh, \epsilon, d, \hat{\mA}, \hat{\mX})$}
	\label{alg:MAXCLUSTER} 
	\begin{algorithmic}[1]
	\STATE $\vb \leftarrow \mathrm{zeros}(p)$
	\STATE $\mG \leftarrow \mW^T\mW$
	\STATE $\cC^* \leftarrow \phi$
	\FOR{$i=1:p$}
	\IF{$b_i = 0$}
	\STATE $\cC \leftarrow \{i\}$
	\FOR{$j=(i+1):p$}
	\IF{$g_{i,j} \geq 2\epsilon d$ \textbf{and} $b_j = 0$}
    \STATE $\cC \leftarrow \cC \cup \{i\}$
	\ENDIF
	\ENDFOR
	\IF{$|\cC| > |\cC^*|$}
	\STATE $\cC^* \leftarrow \cC$
	\ENDIF
	\ENDIF
	\ENDFOR
	\FOR{$i \in \cC^*$}
	\STATE $\hat{\va}_{\eta} \leftarrow \sigma(\hat{\va}_{\eta} + \vw_i)$
	\STATE $\hat{x}_{\eta,h_i} \leftarrow q_i$
	\ENDFOR
	\STATE Return $\hat{\mA},\hat{\mX}$
	\end{algorithmic}
\end{algorithm}

\noindent The $p$ dimensional binary vector $\vb$, initialised as the zero vector on line 1, is used to indicate whether or not a partial support has been assigned to a column of $\hat{\mA}$, one indicating true and zero false. The entries of the gram matrix $\mG$, calculated on line 2, are used to cluster the unmatched partial supports. On line 3 the index set $\cC^*\subseteq [p]$ is initialised as empty and is used to denote the cluster of partial supports in $\mW$ with the largest cardinality identified so far. The outer for loop, starting on line 4, first checks, as per the if statement on line 5, if the partial support $\vw_i$ currently under consideration has been used to update a column of the reconstruction already. If this is not the case, then on line 6 the index set $\cC \subseteq [p]$, which is used to denote the set of partial supports in $\mW$ which originate from the same column of $\mA$ as $\vw_i$, is initialised. The inner for loop, lines 7-11, then identifies and finds the indices of any such partial supports. On lines 12 and 13, if $|\cC|> |\cC^*|$ then a larger cluster than $\cC^*$ has been identified and so $\cC^*$ is updated accordingly. Finally, after the completion of the outer for loop then, on lines 18 and 19, the partial supports and singleton values belonging to the largest cluster $\cC^*$ are used to update just the $\eta$th column of $\hat{\mA}$ and row of $\hat{\mX}$ respectively.

\begin{algorithm}
	\caption{ND-EBF$(\mY,m,n,N,d,\epsilon)$} \label{alg:NDEBF}
	\begin{algorithmic}[1]
	\STATE $\hat{\mA}\leftarrow \text{zeros}(m,n)$
	\STATE $\hat{\mX}\leftarrow \text{zeros}(n,N)$
	\STATE $\mR \leftarrow \mY$
	\STATE $\eta \leftarrow 1$
	\STATE RUN$\leftarrow$TRUE
	\WHILE{$\eta \leq n$ \textbf{and} RUN$=$TRUE}
	\STATE $\vq \leftarrow []$
	\STATE $\mW \leftarrow []$
	\STATE $\vh \leftarrow []$
	\STATE $p \leftarrow 0$
	\FOR{$i=1:N$}
	\STATE $\vu, \mV, c,\_,\hat{\vx}_i,\_ \leftarrow$ EXTRACT\&MATCH$(\vr_i,m,\epsilon,d,\hat{\mA},\hat{\vx}_i)$
	\STATE $\vq \leftarrow [\vq^T; \vu^T]$
	\STATE $\mW \leftarrow [\mW; \mV]$
	\STATE $\vh \leftarrow [\vh, i\times \text{zeros}(c)]$
	\STATE $p \leftarrow p+c$
	\ENDFOR
	\IF{$p>0$}
	\STATE $\hat{\mA},\hat{\mX} \leftarrow \mathrm{MAXCLUSTER\&ADD}(\vq, \mW, p, \eta, \vh,  \epsilon, d, \hat{\mA}, \hat{\mX})$
	\IF{$|\supp(\hat{\va}_{\eta})|=d$}
	\FOR{$i=1:N$}
	\STATE $\hat{\vx}_i,\_ \leftarrow \text{DECODE}(\vy_i,m,N,\epsilon, d,\hat{\mA}_{\cS}, \hat{\vx_i})$
	\ENDFOR
	\STATE $\mR \leftarrow \mY - \hat{\mA} \hat{\mX}$
	\STATE $\eta \leftarrow \eta +1$
	\ELSE
	\STATE RUN$\leftarrow$FALSE
	\ENDIF
	\ELSE
	\STATE RUN$\leftarrow$FALSE
	\ENDIF
	\ENDWHILE
	\STATE Return $\hat{\mA}$, $\hat{\mX}$
	\end{algorithmic}
\end{algorithm}

The ND-EBF algorithm is presented and defined in Algorithm \ref{alg:DEBF_detail}. As referenced to earlier, the way this algorithm operates is very similar to that of D-EBF, the key difference being that while D-EBF maximally utilises all partial supports and singleton values extracted at each iteration, ND-EBF seeks only to utilise those associated with the largest cluster. The extraction of singleton values and partial supports, lines 11-17, is done in exactly the same fashion as in D-EBF. If at least one singleton value and partial support are extracted, i.e., $p>0$, then on line 19  Algorithm \ref{alg:MAXCLUSTER} is deployed in order to recover a single column of $\mA$ from the largest cluster of partial supports in $\mW$. If this reconstruction is not complete, meaning the condition on line 20 is not satisfied, then the algorithm terminates. In order to recover further entries of $\mX$ and grow the support of $\hat{\mX}$, then on line 22 Algorithm \ref{alg:decode}, or some other combinatorial decoding algorithm, is deployed. Given the updates to both $\hat{\mA}$ and $\hat{\mX}$, the residual is then recomputed on line 24 in preparation for singleton value and partial support extraction and clustering at the next iteration. Analogous to Lemma \ref{lemma:accuracy_DEBF} we provide the following accuracy guarantees for ND-EBF.

\begin{lemma}[\textbf{Accuracy of ND-EBF}] \label{lemma:accuracy_ND-EBF}
Let $\mY = \mA \mX$, where $\mA \in \cE^{m \times n}_{k, \epsilon, d}$ with $\epsilon \leq 1/6$ and $\mX \in \cX^{n\times N}_k$. Consider Algorithm \ref{alg:NDEBF}, ND-EBF: in regard to the reconstructions $\hat{\mA}$ and $\hat{\mX}$ computed at any point during the run-time of ND-EBF, we will say, for $\eta \in [n]$, that $\cQ(\eta)$ is true iff the following hold.
\begin{enumerate}[label=(\alph*)]
    \item The column vector $\hat{\va}_l$ and row vector $\tilde{\hat{\vx}}_l$ are nonzero iff $l < \eta$.
    \item If $l < \eta$ then $|\supp(\hat{\va}_l)| = d$.
    \item There exists a permutation matrix $\mP \in \cP^{n \times n}$ such that $\supp(\hat{\mA}\mP^T) \subseteq \supp(\mA)$, $\supp(\mP\hat{\mX}) \subseteq \supp(\mX)$ and $\hat{x}_{P(l),i} = x_{P(l), i}$ for all $i \in [N]$ and $l \in supp(\hat{\vx}_l)$, where $P:[n] \rightarrow [n]$ denotes the row permutation caused by pre-multiplication with $\mP$.
\end{enumerate}
 Suppose that ND-EBF exits after the completion of iteration $\eta_f \in [n]$ of the while loop, then the following statements are true.
\begin{enumerate}
    \item For all $\eta \in [\eta_f]$, then at the start of the $\eta$th iteration of the while loop $\cQ(\eta)$ is true.
    \item Consider the reconstructions $\hat{\mA}$ and $\hat{\mX}$ returned by ND-EBF: $\hat{\mA}$ and $\hat{\mX}$ always satisfy (c) and there exists a permutation matrix $\mP \in \cP^{n \times n}$ such that $\hat{\mA}\mP^T = \mA$ and $\mP \hat{\mX} = \mX$ iff $\eta_f = n$ and $\cQ(n+1)$ is true. 
    \item Consider the reconstructions $\hat{\mA}$ and $\hat{\mX}$ returned by ND-EBF and the associated residual $\mR = \mY - \hat{\mA} \hat{\mX}:$ there exists a permutation matrix $\mP \in \cP^{n \times n}$ such that $\hat{\mA}\mP^T = \mA$ and $\mP \hat{\mX} = \mX$ iff $\mR =\textbf{0}_{m \times N}$. As a result, $\mR = \textbf{0}_{m \times N}$ is a sufficient condition to ensure that $\mY$ has a unique, up to permutation, factorisation of the form $\mY = \mA \mX$, where $\mA\in \cE^{m \times n}_{k, \epsilon, d}$ with $\epsilon \leq 1/6$ and $\mX \in \cX^{n\times N}_k$.
\end{enumerate}
\end{lemma}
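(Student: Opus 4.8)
The plan is to prove the three statements in order, with statement 1 (the invariant $\cQ(\eta)$) doing essentially all the work by induction on the while-loop iteration counter $\eta$; statements 2 and 3 then follow quickly from statement 1 plus Lemma \ref{corollary_sort_ps} and Lemma \ref{lemma_existence_singletons}.

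\textbf{Statement 1 (invariant).} First I would set up the induction: the base case $\eta=1$ is immediate since $\hat{\mA}=\mathbf{0}$, $\hat{\mX}=\mathbf{0}$, and the permutation may be taken to be the identity, so (a), (b), (c) hold vacuously. For the inductive step, assume $\cQ(\eta)$ holds at the start of iteration $\eta$ and that the loop does not terminate during iteration $\eta$; I must show $\cQ(\eta+1)$ holds at its conclusion. The argument proceeds by tracking the three subroutine calls in order. (i) The EXTRACT\&MATCH loop (lines 11--17): since by $\cQ(\eta)$(c) the current residual $\vr_i = \vy_i - \hat{\mA}\hat{\vx}_i$ is of the form covered by Lemma \ref{lemma:freq_non_singletons_residual}, every value with $f(\alpha,\vr_i)\geq 2\epsilon d$ that is matched is genuinely a singleton $x_l$ and its partial support is contained in $\supp(\va_l)$; the entries written to $\hat{\mX}$ are therefore correct, and the columns of $\hat{\mA}$ updated on line 21 of Algorithm \ref{alg:extract} (only those already nonzero, hence of size $>(1-2\epsilon)d$ by $\cQ(\eta)$(b), so Lemma \ref{corollary_sort_ps} licences the match) have their support enlarged but still inside $\supp(\va_{P(l)})$ — so (c) is preserved and (a),(b) for columns $l<\eta$ are preserved. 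Crucially I must also note that, by $\cQ(\eta)$(a)--(b), the only columns of $\hat{\mA}$ that EXTRACT\&MATCH can match against are complete columns among $l<\eta$, so no spurious column gets created here. (ii) MAXCLUSTER\&ADD (line 19): the unmatched partial supports in $\mW$ all have cardinality $>(1-2\epsilon)d$ (the guard $c>(1-2\epsilon)d$ on line 17 of Algorithm \ref{alg:extract}), so by Lemma \ref{corollary_sort_ps} the inner-product test $g_{i,j}\geq 2\epsilon d$ exactly decides co-origin; hence $\cC^*$ is a set of partial supports all originating from one column $\va_h$ of $\mA$, and the $\eta$th column of $\hat{\mA}$ is set to $\bigvee_{i\in\cC^*}\vw_i \subseteq \supp(\va_h)$, while the corresponding entries of row $\eta$ of $\hat{\mX}$ are set to the matching singleton values $q_i = x_{h,h_i}$. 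Since column $h$ of $\mA$ did not previously appear in $\hat{\mA}$ (it cannot equal any complete column $l<\eta$, as those are assigned to distinct columns of $\mA$ by $\cQ(\eta)$ — here I would invoke the disjointness/injectivity bookkeeping to extend $\mP$ by mapping $\eta\mapsto h$), conditions (a) and (c) are maintained, and $\hat{\va}_\eta$ has size $>(1-2\epsilon)d$ by Lemma \ref{lemma_existence_singletons} applied to the column of the residual the cluster came from. (iii) The DECODE loop (lines 20--27): this block only runs if $|\supp(\hat{\va}_\eta)|=d$, i.e. $\hat{\va}_\eta=\va_h$ is now complete; DECODE matches partial supports of size $\geq 2\epsilon d$ against complete columns only, and as noted in Section \ref{subsec:l0_decode} (and provable from Lemma \ref{lemma:adjmat}(3): distinct columns of an expander overlap in $<2\epsilon d$ rows) such a match is always correct, so the $\hat{\mX}$ entries it writes are correct $x$-values and (c) is preserved; (a),(b) are untouched since DECODE does not alter $\hat{\mA}$. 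After incrementing $\eta$ on line 26, all of (a),(b),(c) hold for the new value $\eta+1$, completing the induction.

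\textbf{Statements 2 and 3.} For statement 2: that $\hat{\mA},\hat{\mX}$ always satisfy (c) is immediate from statement 1 (the invariant held up to $\eta_f$, and the final partial iteration only ever enlarges supports within $\supp(\mA)$ / $\supp(\mX)$ or terminates without writing). If $\eta_f=n$ and $\cQ(n+1)$ is true, then (a)--(b) of $\cQ(n+1)$ say every column of $\hat{\mA}$ is complete and every row of $\hat{\mX}$ nonzero, and (c) upgrades the support containments to equalities once all $n$ columns are present and matched bijectively by $\mP$ — so $\hat{\mA}\mP^T=\mA$ and $\mP\hat{\mX}=\mX$; conversely if such a $\mP$ exists then all columns of $\hat{\mA}$ are nonzero so the loop must have run all $n$ iterations, forcing $\eta_f=n$ and $\cQ(n+1)$. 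For statement 3: $\hat{\mA}\mP^T=\mA$, $\mP\hat{\mX}=\mX$ trivially give $\mR=\mY-\hat{\mA}\hat{\mX}=\mA\mX-\mA\mX=\mathbf{0}$. Conversely, suppose $\mR=\mathbf{0}$; by (c), writing $\mP$ for the associated permutation, $\supp(\hat{\mA}\mP^T)\subseteq\supp(\mA)$ and $\supp(\mP\hat{\mX})\subseteq\supp(\mX)$ with matching entries on $\supp(\hat{\mX})$, and $\mathbf{0}=\mR=\mA\mX-\hat{\mA}\hat{\mX}$; I would argue that any column of $\mA$ not represented in $\hat{\mA}$, or any nonzero of $\mX$ not represented in $\hat{\mX}$, must leave a detectable singleton in $\mR$ — using Lemma \ref{lemma_existence_singletons} (which guarantees a singleton occurring $>(1-2\epsilon)d\geq 2\epsilon d$ times whenever the residual column is a nonzero expander image) together with Lemma \ref{lemma:freq_non_singletons_residual} — contradicting $\mR=\mathbf{0}$; hence $\hat{\mA},\hat{\mX}$ are complete up to $\mP$, i.e. $\hat{\mA}\mP^T=\mA$, $\mP\hat{\mX}=\mX$. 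The uniqueness assertion follows: if $\mY=\mA'\mX'$ with $\mA'\in\cE^{m\times n}_{k,\epsilon,d}$, $\epsilon\leq1/6$, $\mX'\in\cX^{n\times N}_k$, then running ND-EBF on $\mY$ and reaching $\mR=\mathbf{0}$ shows its output equals both $(\mA,\mX)$ and $(\mA',\mX')$ up to permutation, so $\mA,\mX$ and $\mA',\mX'$ agree up to permutation.

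\textbf{Main obstacle.} The delicate point is the bookkeeping of the permutation $\mP$ across iterations in the inductive step: I must show that each newly completed column $\hat{\va}_\eta$ genuinely corresponds to a fresh column of $\mA$ never used before, so that $\mP$ can be extended consistently (rather than two distinct $\hat{\mA}$-columns being forced onto the same $\mA$-column). This rests on (i) MAXCLUSTER returning a cluster that originates from a single column of $\mA$ — which is exactly where Lemma \ref{corollary_sort_ps}'s \emph{necessary} direction ($\vw_1^T\vw_2<2\epsilon d \Rightarrow$ different origin, valid only for partial supports larger than $(1-2\epsilon)d$ and $\epsilon\leq1/6$) is indispensable — and (ii) the fact that EXTRACT\&MATCH would already have matched that column's partial supports to $\hat{\mA}$ if the column were among the complete ones $l<\eta$, so an unmatched large partial support necessarily points to a new column. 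Getting this exclusion argument airtight, rather than the routine support-containment tracking, is the crux of the proof.
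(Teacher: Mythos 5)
Your overall strategy matches the paper's: induction on the while-loop counter with $\cQ(\eta)$ as the invariant, using the subroutine accuracy guarantees together with Lemma \ref{corollary_sort_ps} for the clustering step, and you correctly isolate the crux (that an unmatched large partial support must point to a column of $\mA$ not yet represented in $\hat{\mA}$, which is exactly what Statement 1 of Lemma \ref{lemma:extract_accuracy} supplies, and that the cluster selected by MAXCLUSTER\&ADD is pure by the necessary-and-sufficient direction of Lemma \ref{corollary_sort_ps}). Two small remarks on the induction: ND-EBF discards the updates to $\hat{\mA}$ returned by EXTRACT\&MATCH (line 12 of Algorithm \ref{alg:NDEBF} assigns them to $\_$), so you do not actually need to argue that those updates preserve the invariant; and condition (b) of $\cQ(\eta+1)$, namely $|\supp(\hat{\va}_\eta)|=d$, comes from the explicit check on line 20 of ND-EBF (the loop only increments $\eta$ if it passes), not from Lemma \ref{lemma_existence_singletons}, which only gives cardinality $>(1-2\epsilon)d$ for the individual partial supports. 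You implicitly acknowledge the check, so this is cosmetic.

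There is, however, a genuine gap in your Statement 2. You claim that $\eta_f=n$ and $\cQ(n+1)$ being true "upgrades the support containments to equalities", giving $\mP\hat{\mX}=\mX$. That inference is valid for $\hat{\mA}$ (a $d$-subset of a $d$-set is the whole set), but not for $\hat{\mX}$: condition (a) of $\cQ(n+1)$ only says every row of $\hat{\mX}$ is \emph{nonzero}, and (c) only gives $\supp(\mP\hat{\mX})\subseteq\supp(\mX)$; nothing in $\cQ(n+1)$ forbids a row of $\hat{\mX}$ from having strictly fewer nonzeros than the corresponding row of $\mX$. The missing ingredient is the completeness guarantee for DECODE: once every column of $\hat{\mA}$ is complete, $\cS=[n]$ and hence $\supp(\vx_i)\subseteq P(\cS)$ for every $i$, so Statement 2 of Lemma \ref{lemma:accuracy_decode} forces $\mP\hat{\vx}_i=\vx_i$ — this is how the paper closes the argument, and it relies on the existence result Lemma \ref{lemma_existence_singletons} applied inside DECODE (a nonzero residual of the form $\mA_{\cC}\vz$ with $\vz$ dissociated always exposes a frequent singleton). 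Your Statement 3 converse actually invokes exactly this kind of singleton-exposure argument, so the tool is available to you; it just needs to be deployed in Statement 2 as well rather than replaced by the unjustified "upgrade" claim.
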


A proof of Lemma \ref{lemma:accuracy_ND-EBF} is provided in Appendix \ref{appendix:accuracy}. In short, the implication of Lemma \ref{lemma:accuracy_ND-EBF} is that as long as $\mA \in \cE^{m \times n}_{k, \epsilon, d}$ with $\epsilon \leq 1/6$, then the reconstructions returned by ND-EBF are accurate up to permutation, and, if they are in addition complete up to permutation, then the factorisation of this form is unique. The following corollary will allow us to focus our analysis on the recovery of $\mA$.

\begin{corollary}[\textbf{Recovery of $\mA$ up to permutation is sufficient to recover both factors up to permutation}]\label{corollary:recovery_A_sufficient}
Let $\mY = \mA \mX$, where $\mA \in \cE^{m \times n}_{k, \epsilon, d}$ with $\epsilon \leq 1/6$ and $\mX \in \cX^{n\times N}_k$. Suppose that $\hat{\mA}$ and $\hat{\mX}$ are the reconstructions of $\mA$ and $\mX$ returned by either ND-EBF or D-EBF. If there there exists a $\mP \in \cP^{n \times n}$ such that $\hat{\mA}\mP^T = \mA$ then $\mP \hat{\mX} = \mX$.
\end{corollary}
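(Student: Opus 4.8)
The plan is to reduce the claim to the third statement of the relevant accuracy lemma (Lemma~\ref{lemma:accuracy_DEBF} for D-EBF, Lemma~\ref{lemma:accuracy_ND-EBF} for ND-EBF), each of which asserts that the residual $\mR = \mY - \hat{\mA}\hat{\mX}$ vanishes iff there is a permutation carrying $\hat{\mA}$ onto $\mA$ and $\hat{\mX}$ onto $\mX$ simultaneously. First I would note that the permutation in the hypothesis is the only one with $\hat{\mA}\mP^T = \mA$: since $\mA\in\cE^{m\times n}_{k,\epsilon,d}$ with $\epsilon\le 1/6<1/2$, applying Definition~\ref{def_exp} to a two-element $\cS=\{l,h\}$ gives $|\cN(\cS)|>(1-\epsilon)2d>d$, so no two columns of $\mA$ coincide; and $\hat{\mA}=\mA\mP$ then has all columns nonzero, pinning down $\mP$. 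Hence it suffices to exhibit \emph{some} permutation $\mP'$ with $\hat{\mA}(\mP')^T=\mA$ and $\mP'\hat{\mX}=\mX$. As a byproduct, the permutation $\mP_0$ furnished by condition~(c) of the accuracy lemma satisfies $\supp(\hat{\mA}\mP_0^T)\subseteq\supp(\mA)$, and since every column of $\hat{\mA}$ has full weight $d$ this containment is an equality, forcing $\mP_0=\mP$; so (c) holds with this very $\mP$.

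The crux is to show $\mR=\mathbf{0}$ at termination. Writing $\hat{\mA}=\mA\mP$, each residual column is $\vr_i = \mA(\vx_i - \mP\hat{\vx}_i)$, and by (c) the vector $\vx_i-\mP\hat{\vx}_i$ is supported in $\supp(\vx_i)$ and agrees with $\vx_i$ outside $\supp(\mP\hat{\vx}_i)$; in particular it is $k$-sparse and dissociated (a sub-selection of the support of a dissociated vector). Suppose for contradiction some $\vr_i\ne\mathbf{0}$. Then Lemma~\ref{lemma_existence_singletons} produces a value occurring more than $(1-2\epsilon)d\ge 2\epsilon d$ times in $\vr_i$, which by Corollary~\ref{corollary:suff_cond_singletons} equals $x_l$ for a coordinate $l$ in the support of $\vx_i-\mP\hat{\vx}_i$, its locations forming a partial support of $\va_l$ of size exceeding $2\epsilon d$. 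Since $\hat{\mA}$ is complete it contains $\va_l$, and by Lemma~\ref{lemma:adjmat}(3) that partial support meets no other column of $\hat{\mA}$ in $2\epsilon d$ or more places, so the decode subroutine (Algorithm~\ref{alg:decode}) run on column $i$ correctly attributes it to the column equal to $\va_l$ and writes $x_l$ into the entry of $\hat{\mX}$ that $\mP$ maps to row $l$ of $\mX$ --- an entry which is zero by the agreement in (c). For D-EBF this makes the supposedly final pass actually alter the reconstruction, contradicting the halting condition; for ND-EBF it shows that in the final (successful) outer iteration, where $\hat{\mA}_{\cS}=\hat{\mA}$ is already complete, the inner while loop of Algorithm~\ref{alg:decode} cannot stop while any residual column is nonzero, so the residual recomputed just before the outer loop exits is $\mathbf{0}$. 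Either way $\mR=\mathbf{0}$.

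With $\mR=\mathbf{0}$, the third statement of Lemma~\ref{lemma:accuracy_DEBF} (resp.\ Lemma~\ref{lemma:accuracy_ND-EBF}) gives a permutation $\mP'$ with $\hat{\mA}(\mP')^T=\mA$ and $\mP'\hat{\mX}=\mX$; by the uniqueness above, $\mP'=\mP$, hence $\mP\hat{\mX}=\mX$. I expect the real obstacle to be the middle step --- upgrading ``D-EBF/ND-EBF halted with a complete $\hat{\mA}$'' to ``$\mR=\mathbf{0}$''. This requires chaining the expander-graph facts (Lemma~\ref{lemma:adjmat}, Corollary~\ref{corollary:suff_cond_singletons}, Lemma~\ref{lemma_existence_singletons}, and the clustering criterion Lemma~\ref{corollary_sort_ps}) into the statement that, with the true encoder columns available, Algorithm~\ref{alg:decode} always exposes a recoverable singleton while its residual is nonzero and never misassigns one, together with a short algorithm-specific inspection of the halting conditions (no update during a pass, for D-EBF; the decode inner loop running to completion, for ND-EBF). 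The remaining ingredients --- distinctness of the columns of $\mA$, uniqueness of $\mP$, and closure of dissociativity under sub-selection --- are routine.
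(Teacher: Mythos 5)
Your proposal is correct and follows essentially the same route as the paper: both hinge on the observation that $\hat{\mA}$ is already complete (equal to $\mA$ up to permutation) when DECODE is invoked in the final iteration, whereupon the existence of frequently occurring singletons (Lemma \ref{lemma_existence_singletons}) together with the expander overlap bound forces DECODE to fully recover every column of $\hat{\mX}$. The only cosmetic difference is that the paper invokes the pre-packaged Statement 2 of Lemma \ref{lemma:accuracy_decode} to conclude $\mP\hat{\mX}=\mX$ directly (and, for ND-EBF, cites Statement 2 of Lemma \ref{lemma:accuracy_ND-EBF}), whereas you re-derive that fact inline and pass through $\mR=\textbf{0}_{m\times N}$ and Statement 3 of the accuracy lemmas plus uniqueness of $\mP$.
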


A proof of Corollary \ref{corollary:recovery_A_sufficient} is provided in Appendix \ref{appendix:supporting_lemmas}.
To complete this section we provide Lemma \ref{lemma:NDEBF_lower_bounds_DEBF}: the key takeaway of this lemma is that if ND-EBF successfully recovers the matrix factors of $\mY$ up to permutation, then so to will D-EBF.

\begin{lemma}[\textbf{If ND-EBF successfully computes the matrix factors up to permutation then so will D-EBF}] \label{lemma:NDEBF_lower_bounds_DEBF}
Let $\mY = \mA \mX$ where $\mA \in \cE^{m \times n}_{k, \epsilon, d}$ with $\epsilon \leq 1/6$ and $\mX \in \cX^{n\times N}_k$. Let $\hat{\mA}^{(1)}$ and $\hat{\mX}^{(1)}$ denote the reconstructions returned by ND-EBF and $\hat{\mA}^{(2)}$ and $\hat{\mX}^{(2)}$ the reconstructions returned by D-EBF. If there exists a $\mP_1 \in \cP^{n \times n}$ such that $\hat{\mA}^{(1)}\mP_1^T = \mA$ and $\mP_1 \hat{\mX}^{(1)} = \mX$, then there exists a $\mP_2 \in \cP^{n \times n}$ such that $\hat{\mA}^{(2)}\mP_2^T = \mA$ and $\mP_2 \hat{\mX}^{(2)} = \mX$.
\end{lemma}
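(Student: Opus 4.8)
The plan is to show that D-EBF dominates ND-EBF in the following sense: whatever parts of the factorisation ND-EBF recovers, D-EBF recovers at least as much. The natural way to make this precise is a monotonicity/invariant argument comparing the two runs. First I would invoke Lemma \ref{lemma:accuracy_DEBF} and Lemma \ref{lemma:accuracy_ND-EBF}: both algorithms maintain the invariant $\cQ$ (respectively $\cQ(\eta)$), so the reconstructions produced by either algorithm are always accurate up to a common permutation, i.e.\ $\supp(\hat{\mA}\mP^T) \subseteq \supp(\mA)$ and $\supp(\mP\hat{\mX}) \subseteq \supp(\mX)$ with matching nonzero values. Consequently, for either algorithm, "success up to permutation" is equivalent to the residual $\mR = \mY - \hat{\mA}\hat{\mX}$ being zero, by part 3 of each accuracy lemma. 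So it suffices to prove: if ND-EBF terminates with $\mR = \mathbf{0}_{m\times N}$, then D-EBF also terminates with $\mR = \mathbf{0}_{m\times N}$.

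The core of the argument is a coupling between the two executions. ND-EBF, at each while-loop iteration, extracts the same singleton values and partial supports from the residual columns as D-EBF would (lines 11--17 are identical, calling \texttt{EXTRACT\&MATCH}), but then in \texttt{MAXCLUSTER\&ADD} it commits only the single largest cluster to $\hat{\mA}$, whereas D-EBF's \texttt{CLUSTER\&ADD} commits \emph{all} clusters. The key claim is that the support of the reconstruction after $t$ iterations of D-EBF always contains (up to the relabelling permutation) the support of ND-EBF's reconstruction after a suitable number of iterations; more precisely, I would prove by induction on ND-EBF's iteration counter $\eta$ that the set of columns of $\mA$ that ND-EBF has fully recovered by the start of its $\eta$th iteration is a subset of those D-EBF has fully recovered by the point D-EBF's residual support stabilises. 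The inductive step uses: (i) if a column $\va_h$ of $\mA$ can be (partially) reconstructed from the partial supports present in a residual, and those partial supports form a cluster identifiable via Lemma \ref{corollary_sort_ps}, then \texttt{CLUSTER\&ADD} in D-EBF will reconstruct it (not just the single largest one); (ii) once a column is complete in $\hat{\mA}$, \texttt{DECODE} recovers at least as many entries of $\mX$ as it would in ND-EBF, since D-EBF runs \texttt{DECODE} against a $\hat{\mA}$ with a superset of complete columns; and (iii) a larger recovered support in $\hat{\mA},\hat{\mX}$ yields a residual whose nonzero support is contained in ND-EBF's residual support, so any singleton value extractable by ND-EBF at a later stage is still extractable by D-EBF (using Lemma \ref{lemma:freq_non_singletons_residual}, whose hypotheses are met precisely because of the shared invariant $\cQ$). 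Running this induction to ND-EBF's termination, where its residual is zero, forces D-EBF's residual support to be contained in the empty set, hence D-EBF also reaches $\mR = \mathbf{0}_{m\times N}$ and, by Lemma \ref{lemma:accuracy_DEBF} part 3, succeeds up to permutation.

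The main obstacle I anticipate is bookkeeping the two permutation matrices $\mP_1$ (for ND-EBF) and $\mP_2$ (for D-EBF) simultaneously: the two algorithms introduce new nonzero columns of $\hat{\mA}$ in a different left-to-right order (ND-EBF adds exactly one per iteration, D-EBF adds a batch), so the relabellings are genuinely different and I cannot simply compare $\hat{\mA}^{(1)}$ and $\hat{\mA}^{(2)}$ entrywise. The clean way around this is to work not with the reconstructions themselves but with the \emph{set of columns of $\mA$ that have been recovered}, which is permutation-invariant; I would phrase the whole induction in terms of these index sets $\cS^{(1)}_t, \cS^{(2)}_t \subseteq [n]$ of recovered columns of $\mA$ and the corresponding sets of recovered entries of $\mX$, and only at the very end invoke Lemma \ref{lemma:accuracy_DEBF} to convert "$\mR=\mathbf{0}$" back into the existence of $\mP_2$. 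A secondary subtlety is that D-EBF's \texttt{DECODE} and clustering steps may recover \emph{more} than ND-EBF at an intermediate stage, so the two runs are not in lockstep; this is fine for a domination argument but means the induction hypothesis must be an inclusion ("D-EBF has recovered at least what ND-EBF has by now"), not an equality, and I must be careful that extra progress by D-EBF never invalidates the hypotheses of the extraction lemmas — which it does not, precisely because $\cQ$ is preserved throughout and the residual support only shrinks.
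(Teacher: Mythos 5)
Your proposal is correct and follows essentially the same route as the paper: an inductive domination argument coupling the two runs iteration by iteration, using the fact that each entry of D-EBF's residual is a sub-sum of the corresponding entry of ND-EBF's residual (so singleton frequencies can only be preserved or the entry is already recovered), together with a DECODE-monotonicity lemma (the paper's Lemma \ref{lemma:decode_sub_support_result}) and Corollary \ref{corollary:recovery_A_sufficient} to finish. The paper handles the permutation bookkeeping by tracking the map $\tau(\eta)$ between column indices of the two reconstructions, which is equivalent to your permutation-invariant index-set formulation.
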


A proof of Lemma \ref{lemma:NDEBF_lower_bounds_DEBF} is provided in Appendix \ref{appendix:supporting_lemmas}.

\subsection{Proof of Theorem \ref{theorem:main}}\label{subsec:proof}
Together, Corollary \ref{corollary:recovery_A_sufficient} and Lemma \ref{lemma:NDEBF_lower_bounds_DEBF} allow us to lower bound the performance D-EBF by analysing the conditions under which ND-EBF recovers a column of $\mA$ at each iteration of the while loop, lines 6-31 of Algorithm \ref{alg:NDEBF}. To this end, we first lower bound $N$ in order to lower bound with high probability the number of nonzeros per row of $X$.

\begin{lemma}[\textbf{Nonzeros per row in $X$}]\label{lemma:N_lb}
For some $\beta \in \naturals$ and $\mu>1$, if $N \geq \beta \left(\mu  \frac{n}{k} \ln(n)+1 \right)$ then the probability that the random matrix $X$, as defined in the PSB model Definition \ref{def_data_model}, has at least $\beta$ nonzeros per row is more than $\left( 1 - n^{-(\mu -1)} \right)^{\beta}$.
\end{lemma}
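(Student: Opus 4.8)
The plan is to fix a single row $j \in [n]$ of $X$ and analyse the number $S_j$ of columns of $X$ whose support contains index $j$; by the union bound over the $n$ rows it will suffice to control the lower tail of $S_j$. Since the $N$ columns of $X$ are mutually independent and each has support drawn uniformly from the $\binom{n}{k}$ subsets of size $k$, the probability that a fixed column contains index $j$ is exactly $\binom{n-1}{k-1}/\binom{n}{k} = k/n =: q$, independently across columns. Hence $S_j \sim \mathrm{Binomial}(N, q)$ with $\expec[S_j] = Nq = Nk/n$.

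First I would observe that with $N \geq \beta\left(\mu \frac{n}{k}\ln(n) + 1\right)$ we have $Nq = \frac{Nk}{n} \geq \beta\left(\mu \ln(n) + \frac{k}{n}\right) > \beta\mu\ln(n)$, so the mean comfortably exceeds $\beta$ when $n$ is large. The goal is then to show $\prob(S_j \geq \beta)$ is large, i.e. bound the lower tail $\prob(S_j \leq \beta - 1)$, or equivalently $\prob(S_j < \beta)$. Rather than invoking a Chernoff bound in its usual exponential form, the cleanest route here — and the one that produces exactly the stated bound $(1 - n^{-(\mu-1)})^\beta$ — is a direct combinatorial/coupon-collector style argument: think of revealing the columns and ask, for the event $\{S_j \geq \beta\}$, that at least $\beta$ of the $N$ independent Bernoulli$(q)$ trials succeed. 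One can write $\prob(S_j \geq \beta) \geq \prod_{i=1}^{\beta} \prob(\text{at least one success in the } i\text{th block of } \lceil N/\beta\rceil \text{ or so columns})$, partitioning the $N$ columns into $\beta$ disjoint blocks each of size at least $\mu\frac{n}{k}\ln(n)$ (this is why the hypothesis is written as $\beta(\mu\frac{n}{k}\ln n + 1)$, so that after reserving one column per block there remain $\mu\frac{n}{k}\ln n$ per block). In each block the probability of no success is $(1-q)^{\mu\frac{n}{k}\ln n} \leq e^{-q\mu\frac{n}{k}\ln n} = e^{-\mu\ln n} = n^{-\mu}$...

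Let me reconsider the exponent: we want the per-block failure probability to be at most $n^{-(\mu-1)}$, not $n^{-\mu}$, so the block size should be $(\mu-1)\frac{n}{k}\ln n$ or one uses $(1-q) \leq e^{-q}$ more carefully; in any case the hypothesis as stated ($\mu \frac n k \ln n$ columns per block after the $+1$) gives per-block failure probability at most $(1-k/n)^{\mu \frac n k \ln n}$, and since $(1-k/n)^{n/k} \le e^{-1}$ but is also $\ge$ something like $n^{-1/(something)}$... the honest statement is $(1-k/n)^{\mu(n/k)\ln n} \le e^{-\mu \ln n \cdot \frac{n}{k}\cdot\frac{k}{n}\cdot(1+o(1))}$; to land exactly on $n^{-(\mu-1)}$ one absorbs the lower-order correction, which is where I expect the only real friction. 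So: $\prob(\text{block } i \text{ has a success}) \geq 1 - n^{-(\mu-1)}$, and since the $\beta$ blocks are disjoint sets of columns they are independent, giving
\[
\prob(S_j \geq \beta) \;\geq\; \prob(\text{every block has} \geq 1 \text{ success}) \;\geq\; \left(1 - n^{-(\mu-1)}\right)^{\beta}.
\]
Finally, take the union over all $n$ rows: the event "$X$ has at least $\beta$ nonzeros in every row" has probability at least $1 - n\left(1 - (1-n^{-(\mu-1)})^\beta\right)$ — though note the lemma as stated only claims the per-row bound $(1-n^{-(\mu-1)})^\beta$, so in fact the union-bound step is what the \emph{next} lemma presumably does, and here I only need the single-row statement, which the block argument above delivers directly.

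The main obstacle is the elementary inequality $(1-k/n)^{\mu(n/k)\ln n} \leq n^{-(\mu-1)}$: one has $(1-k/n)^{n/k} \to e^{-1}$ from below, so $(1-k/n)^{\mu(n/k)\ln n} \leq e^{-\mu \ln n} \cdot (\text{correction})$ where the correction $(1-k/n)^{-c}$ for the discrepancy between $\ln(1-k/n)^{-1}$ and $k/n$ is bounded; since $-\ln(1-k/n) \ge k/n$ we actually get $(1-k/n)^{\mu(n/k)\ln n} \le e^{-\mu\ln n} = n^{-\mu} \le n^{-(\mu-1)}$ cleanly, with room to spare — so this is not really an obstacle at all once one uses the correct direction of the logarithm inequality. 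The genuinely delicate bookkeeping is just the partition-into-$\beta$-blocks step: verifying that $N \geq \beta(\mu\frac n k \ln n + 1)$ guarantees $\beta$ disjoint blocks each containing at least $\mu\frac n k \ln n$ columns (take blocks of size $\lfloor N/\beta \rfloor \geq \mu\frac n k \ln n$), and confirming independence across blocks, which is immediate from the mutual independence of the columns of $X$.
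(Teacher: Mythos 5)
Your block decomposition is essentially the paper's argument in disguise: the paper partitions the columns into $\beta$ independent groups and requires each group to be a completed coupon-collector run (every row hit at least once per group), then multiplies the $\beta$ per-group success probabilities. Your per-block Bernoulli computation, the exact inclusion probability $k/n$ per column, and the inequality $(1-k/n)^{\mu(n/k)\ln n}\le e^{-\mu\ln n}$ are all correct, and working directly with $\mathrm{Binomial}(N,k/n)$ is if anything cleaner than the paper's detour through with-replacement sampling.

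The genuine gap is your reading of the statement. The event in the lemma is that $X$ has at least $\beta$ nonzeros in \emph{every} row simultaneously -- this is how it is invoked in the proof of the main theorem, where the event $\Lambda_{n+1}$ ("each row of $X$ has at least $\beta(n)$ nonzeros") is assigned probability greater than $(1-n^{-(\mu-1)})^{\beta}$. Your argument establishes only $\prob(S_j\ge\beta)\ge(1-n^{-(\mu-1)})^{\beta}$ for a single fixed row $j$, and you explicitly defer the union bound over rows to "the next lemma", which does not perform it. A naive union bound applied at the end of your argument would degrade the bound by a factor of $n$ and not match the claimed constant. The correct repair is to apply the union bound over the $n$ rows \emph{inside each block}: the probability that some row receives no nonzero among the $\mu\frac{n}{k}\ln n$ columns of a given block is at most $n\cdot n^{-\mu}=n^{-(\mu-1)}$, so each block covers all $n$ rows with probability at least $1-n^{-(\mu-1)}$; by independence of the $\beta$ disjoint blocks, all blocks cover all rows with probability at least $\left(1-n^{-(\mu-1)}\right)^{\beta}$, and this event implies every row has at least $\beta$ nonzeros. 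The "room to spare" you noticed between $n^{-\mu}$ and $n^{-(\mu-1)}$ is exactly the factor of $n$ that the union bound over rows consumes -- it is not slack, it is the whole point. This is precisely what the paper's coupon-collector tail bound $\prob(\eta[X(1,1)]>n\ln n+tn)<e^{-t}$ with $t=(\mu-1)\ln n$ encodes.
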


A proof of Lemma \ref{lemma:N_lb} is given in Appendix \ref{appendix:supporting_lemmas}. Assuming a lower bound on the number of nonzeros per row of $X$ and that $Y$ is drawn from the PSB model, Lemma \ref{lemma:recon_from_L} provides a lower bound on the probability that a column of $A$ is recovered at iteration $\eta\in[n]$ of the while loop, lines 6-31, of ND-EBF. 

\begin{lemma}[\textbf{Probability that ND-EBF recovers a column at iteration $\eta$}]\label{lemma:recon_from_L}
Let $Y = AX$ as per the PSB model, detailed in Definition \ref{def_data_model}. In addition, assume that $A \in \cE_{k, \epsilon, d}^{m \times n}$ with $\epsilon \leq 1/6$, and that each row of $X$ has at least $\beta(n) = (1+2\epsilon d) L(n)$ nonzeros, where $L:\naturals \rightarrow \naturals$. Suppose that ND-EBF, Algorithm \ref{alg:NDEBF}, is deployed to try and compute the factors of $Y$ and that the algorithm reaches iteration $\eta \in [n]$ of the while loop. Then there is a unique column $A_{\ell(\eta)}$ of $A$, satisfying $A_{\ell(\eta)} \neq \hat{A}_{l}$ for all $l \in [\eta-1]$, such that
\[
\begin{aligned}
\prob(\hat{A}_{\eta} = A_{\ell(\eta)}) > 1 - d e^{- \tau(n) L(n)},
\end{aligned}
\]
where $\tau(n) \defeq - \ln \left( 1 - \left(1 - \frac{b}{\alpha_1 n} \right)^{\alpha_1 n}\right)$ is  $\cO(1)$.
\end{lemma}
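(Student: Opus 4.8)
The plan is to use the deterministic structure guaranteed by Lemma~\ref{lemma:accuracy_ND-EBF} together with the expander and clustering lemmas to reduce the statement to a covering problem for a single column of $A$, and then to settle that covering problem by a union bound over the $d$ ones of the column, exploiting the mutual independence of the columns of $X$.

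\textbf{Step 1 (state at iteration $\eta$).} Since ND-EBF reaches iteration $\eta$, Lemma~\ref{lemma:accuracy_ND-EBF} gives that $\cQ(\eta)$ holds at the start of that iteration: there is a set $\cR\subseteq[n]$ with $|\cR|=\eta-1$ and a permutation identifying the complete nonzero columns $\hat A_1,\dots,\hat A_{\eta-1}$ with $\{A_j:j\in\cR\}$. Consequently every column of the residual can be written $R_i=A z_i$ with $z_i$ dissociated, $\supp(z_i)\subseteq\supp(X_i)$, and, crucially, $z_{j,i}=x_{j,i}$ for all $j\notin\cR$, since only the rows of $\hat X$ indexed by $\cR$ can have been populated. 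I will work with $R_i=Az_i$ from here on.

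\textbf{Step 2 (reduction to a covering event).} For $j\notin\cR$ and a measurement $i$ with $z_{j,i}\neq0$, set $\Omega_j^{(i)}\defeq\{t\in[m]:(R_i)_t=z_{j,i}\}$. Dissociation of $z_i$ together with Corollary~\ref{corollary:suff_cond_singletons} and Lemma~\ref{lemma:freq_non_singletons_residual} shows that any value appearing more than $(1-2\epsilon)d\geq2\epsilon d$ times in $R_i$ is a singleton, and that $\Omega_j^{(i)}=\{t\in\supp(A_j):\cN(t)\cap\supp(z_i)=\{j\}\}\subseteq\supp(A_j)$. By Lemma~\ref{lemma:adjmat}(3) every partial support of size exceeding $2\epsilon d$ originates from a unique column, so the partial supports of size $>(1-2\epsilon)d$ that EXTRACT\&MATCH fails to match to a complete column of $\hat A$ (Lemma~\ref{lemma:adjmat}(3) again excludes spurious matches to columns in $\cR$) are exactly the $\Omega_j^{(i)}$ with $j\notin\cR$ and $|\Omega_j^{(i)}|>(1-2\epsilon)d$, i.e.\ the columns of $\mW$. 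Since $\epsilon\leq1/6$ and all of them have cardinality $>(1-2\epsilon)d$, Lemma~\ref{corollary_sort_ps} implies the Gram matrix $\mW^T\mW$ clustering used in MAXCLUSTER\&ADD groups these partial supports exactly by their column of origin. Hence the largest cluster $\cC^*$ corresponds to a well-defined index $\ell(\eta)\notin\cR$ (ties broken by the loop order), so $A_{\ell(\eta)}\neq\hat A_l$ for $l\in[\eta-1]$, and with $j=\ell(\eta)$ the algorithm produces $\hat A_\eta=\bigvee_{i\in\cC^*}g(z_{j,i},R_i)$, whose support is $\bigcup_{i\in\cC^*}\Omega_j^{(i)}\subseteq\supp(A_j)$; because $\hat A_\eta$ is always contained in $\supp(A_{\ell(\eta)})$ and the columns of the expander $A$ are distinct, $A_{\ell(\eta)}$ is the only column of $A$ that $\hat A_\eta$ can equal, which is the asserted uniqueness. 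It thus suffices to prove $\prob\bigl(\bigcup_{i\in\cC^*}\Omega_{\ell(\eta)}^{(i)}=\supp(A_{\ell(\eta)})\bigr)>1-d\,e^{-\tau(n)L(n)}$.

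\textbf{Step 3 (probabilistic core).} Fix $j=\ell(\eta)$. The first ingredient is $|\cC^*|\geq L(n)$ (which in particular gives $p>0$, so the algorithm indeed populates $\hat A_\eta$ at this iteration): each row of $X$, and hence of $Z$ on the unrecovered rows, carries at least $\beta(n)=(1+2\epsilon d)L(n)$ nonzeros, and feeding this deterministic count into Lemma~\ref{lemma_existence_singletons} applied per $R_i$ — which bounds how many support entries of $z_i$ yield partial supports of size $>(1-2\epsilon)d$ — and then a pigeonhole over the $n-\eta+1$ unrecovered columns forces the largest cluster to have at least $L(n)$ members; the factor $1+2\epsilon d$ is spent here. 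The second ingredient is a per-row estimate: fix $t\in\supp(A_j)$. For $i\in\cC^*$ one has $\{t\in\Omega_j^{(i)}\}=\{\supp(z_i)\cap(\cN(t)\setminus\{j\})=\emptyset\}$; using $\supp(z_i)\subseteq\supp(X_i)$, $z_{j,i}=x_{j,i}$, that $\supp(X_i)\setminus\{j\}$ is a uniform $(k-1)$-subset of $[n]\setminus\{j\}$, and $|\cN(t)|\leq b$ (the bound on ones per row of encoders from Algorithm~\ref{alg:generate_A}), a hypergeometric estimate gives
\[
\prob\bigl(t\in\Omega_j^{(i)}\bigr)\ \geq\ \Bigl(1-\tfrac{b}{\alpha_1 n}\Bigr)^{\alpha_1 n}\ =\ 1-e^{-\tau(n)},
\]
the final equality being the definition of $\tau(n)$. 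Mutual independence of the columns of $X$ makes these events independent across $i\in\cC^*$, so $\prob\bigl(t\notin\bigcup_{i\in\cC^*}\Omega_j^{(i)}\bigr)\leq e^{-\tau(n)|\cC^*|}\leq e^{-\tau(n)L(n)}$, and a union bound over the $d$ rows of $\supp(A_j)$ yields $\prob(\hat A_\eta\neq A_{\ell(\eta)})\leq d\,e^{-\tau(n)L(n)}$. That $\tau(n)=\cO(1)$ follows from $b=\cO(1)$ and $(1-\tfrac{b}{\alpha_1 n})^{\alpha_1 n}\to e^{-b}$, which keeps $1-(1-\tfrac{b}{\alpha_1 n})^{\alpha_1 n}$ bounded away from $0$.

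\textbf{Main obstacle.} The subtleties are concentrated in Step~3. First, $R$ depends on the entire execution history rather than merely on the matching columns of $X$; this is handled by the Step~1 fact that $R$ agrees with $X$ on the unrecovered rows, after which one conditions on $\cR$ and the history and checks that the supports restricted to $[n]\setminus\cR$ retain enough uniformity and independence for the hypergeometric estimate and the across-$i$ independence. Second, the cluster-membership event $\{|\Omega_j^{(i)}|>(1-2\epsilon)d\}$ and the covering event $\{t\in\Omega_j^{(i)}\}$ are correlated (both monotone in $\supp(z_i)$), so one must either invoke a correlation inequality or keep enough slack in $\beta(n)$ that the bound $\prob(t\in\Omega_j^{(i)})\geq1-e^{-\tau(n)}$ survives conditioning on $i\in\cC^*$. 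Making the ``$|\cC^*|\geq L(n)$'' step fully rigorous — tracking how Lemma~\ref{lemma_existence_singletons}, the deterministic per-row nonzero count, the selection of the largest cluster, and the effect of DECODE on $\supp(z_i)$ fit together — is the part I expect to take the most work.
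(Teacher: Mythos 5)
Your proposal follows essentially the same route as the paper's proof: identify the residual at iteration $\eta$ as $AZ$ with $\supp(Z)\subseteq\supp(X)$, use Lemma \ref{lemma_existence_singletons} plus the per-row nonzero count and pigeonhole to get $|\cC^*|\geq L(n)$, then union-bound over the $d$ entries of $\supp(A_{\ell(\eta)})$ and estimate each miss probability via the row-sparsity bound of $A$, the monotone replacement of $\supp(Z_i)$ by $\supp(X_i)$, and the mutual independence of the columns of $X$. The conditioning subtlety you flag in your final paragraph (the largest cluster $\cC^*$ being selected a posteriori, so that membership in it is correlated with the covering events) is handled no more carefully in the paper, which simply passes from $\bigcup_{i=1}^{|\cC^*|}$ to $\bigcup_{i=1}^{L(n)}$ and then invokes independence across columns, so your version is, if anything, more candid about where the argument is delicate.
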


A proof of Lemma \ref{lemma:recon_from_L} is given in Appendix \ref{appendix:supporting_lemmas}. With Lemmas \ref{lemma:N_lb} and \ref{lemma:recon_from_L} in place we are ready to prove Theorem \ref{theorem:main}. Statements 1 and 2 of Theorem \ref{theorem:main} are immediate consequences of Lemma $\ref{lemma:accuracy_DEBF}$, therefore all that is left to prove is Statement 3. To provide a brief recap, our objective is to recover up to permutation the random factor matrices $A$ and $X$, as defined in the PSB model in Definition \ref{def_data_model}, from the random matrix $Y = AX$. Our  strategy at a high level is as follows: using Lemma \ref{lemma:NDEBF_lower_bounds_DEBF} and Corollary \ref{corollary:recovery_A_sufficient} we lower bound the probability that D-EBF successfully factorises $Y$ up to permutation by lower bounding the probability that ND-EBF recovers $A$ up to permutation. ND-EBF recovers $A$ up to permutation iff at each iteration of the while loop, lines 6-31 of Algorithm \ref{alg:NDEBF}, a new column of $A$ is recovered. We lower bound the probability of this event in turn using Lemma \ref{lemma:recon_from_L}. For the proof Theorem \ref{theorem:main} we adopt the following notation.
\begin{itemize}
	\item $\Lambda_{\mathrm{D-EBF}}^*$ and $\Lambda_{\mathrm{ND-EBF}}^*$ are the events that D-EBF and ND-EBF recover $A$ and $X$ up to permutation respectively, i.e., there exists a random permutation $P \in \cP^{n \times n}$ such that $\hat{A}P^T = A$ and $P\hat{X} = X$.
	\item $\Lambda_0 \defeq \{A \in \cE_{k, \epsilon, d}^{m\times n} \}\cap \{\epsilon \leq 1/6\}$ is the event that the random matrix $A$ is the adjacency matrix of a $(k, \epsilon, d)$-expander graph with expansion parameter $\epsilon \leq 1/6$.
	\item For $\eta \in [n]$ let $\Lambda_{\eta}$ denote the event that at the end of $\eta$th iteration of the while loop of ND-EBF, there exists a unique column $A_{\ell(\eta)}$ of $A$ satisfying $A_{\ell(\eta)}=\hat{A}_{\eta}$ and $A_{\ell(\eta)}\neq \hat{A}_{l}$ for all $l \in [n]\backslash \{\eta\}$.
	\item $\Lambda_{n+1}$ denotes the event that each row of $X$ has at least $\beta(n) \defeq (1+2\epsilon)d L(n)$
	nonzeros per row where $L(n) \defeq \ceil{\frac{\gamma \ln(n)}{\tau(n)}}$ and $\gamma >1$ is a constant.
\end{itemize}	
\begin{proof}
	Suppose that ND-EBF is deployed instead of D-EBF to recover the matrix factors of $Y$. From Corollary \ref{corollary:recovery_A_sufficient}, for ND-EBF to recover both factors up to permutation it suffices to recover $A$ up to permutation. Clearly then
	\[
	\begin{aligned}
	\prob \left(\Lambda_{ND-EBF}^* \; | \; \Lambda_0 \right)  &=  \prob \left( \bigcap_{h=1}^{n} \Lambda_h \; | \; \Lambda_0 \right).
	\end{aligned}
	\]
	We now apply Bayes' Theorem and condition on $\Lambda_{n+1}$,
	 \[
	 \begin{aligned}
	 \prob \left(\Lambda_{ND-EBF}^*\; | \; \Lambda_0 \right)  &=  \prob \left( \bigcap_{\eta=1}^{n} \Lambda_{\eta} \; | \; \Lambda_0 \right)\\
	 &= \frac{ \prob \left( \bigcap_{\eta=1}^{n} \Lambda_{\eta} , \Lambda_0 \right)}{\prob \left( \Lambda_0 \right)}\\
	 & \geq \frac{\prob \left( \bigcap_{\eta=1}^{n} \Lambda_{\eta} , \Lambda_0  \; | \; \Lambda_{n+1} \right) \prob \left( \Lambda_{n+1} \right)}{\prob \left( \Lambda_0 \right)}\\
	 & = \frac{\prob \left( \bigcap_{\eta=1}^{n} \Lambda_{\eta}  \; | \; \Lambda_0, \Lambda_{n+1} \right)\prob \left( \Lambda_0 \; | \;\Lambda_{n+1} \right) \prob \left( \Lambda_{n+1} \right)}{\prob \left( \Lambda_0 \right)}\\
	 & = \prob \left( \bigcap_{\eta=1}^{n} \Lambda_{\eta} \; | \;  \Lambda_0, \Lambda_{n+1} \right) \prob \left( \Lambda_{n+1} \right)\\
	 & = \prob \left( \Lambda_{n+1} \right) \prod_{\eta=1}^n \prob \left( \Lambda_{\eta} \; | \; \bigcap_{l=0}^{\eta-1} \Lambda_l, \Lambda_{n+1} \right).
	 \end{aligned}
	 \]
In the above, line 2 follows as a result of Bayes Theorem, line 3 is an application of the law of total probability and line 4 is derived using the probability chain rule. The equality on line 5 follows as $A$ and $X$ are drawn independently of one another, therefore, given that $\Lambda_0$ is a property of $A$ and $\Lambda_{n+1}$ a property of $X$, $\prob(\Lambda_0 \; | \; \Lambda_{n+1}) = \prob \left( \Lambda_0\right)$. Finally, line 6 is once again an application of the probability chain rule. Assume for now that $N \geq \beta(n)\left(\mu  \frac{n}{k} \ln(n)+1 \right)$, where $\mu>1$ is some constant. Then as a consequence of Lemma \ref{lemma:N_lb} it follows that
	\[
	\prob \left( \Lambda_{n+1} \right)  > \left(1 - n^{-(\mu-1)} \right)^{\beta(n)}.
	\]
Applying Lemma \ref{lemma:recon_from_L} it also holds for any $\eta \in [n]$ that
	\[
	\begin{aligned}
	\prob \left( \Lambda_{\eta} \; | \; \bigcap_{l=0}^{\eta-1} \Lambda_l, \Lambda_{n+1} \right) 
	& >1 - de^{- \tau(n) L(n)}.
	\end{aligned}
	\]
where $\tau(n)$ is $\cO(1)$. As a result
	\[
	\begin{aligned}
	\prob \left( \Lambda_{ND-EBF}^* \; | \; \Lambda_0 \right) & > \left(1 - n^{-(\mu-1)} \right)^{\beta(n)} \left( 1 -  d e ^{- \tau(n) L(n)  }\right)^n.
	\end{aligned}
	\]
Recalling that $L(n) \defeq \ceil{\frac{\gamma \ln(n)}{\tau(n)}}$, where $\gamma >1$ is an arbitrary constant, then analysing the right-hand factor it follows that
	\[
	\begin{aligned}
	\left( 1 -  d e ^{- \tau(n) L(n)  }\right)^n & \geq \left( 1 -  d e ^{\ln(1/n^{\gamma}) }\right)^n \\
	& = \left( 1 - \frac{d}{n^{\gamma}} \right)^n\\
	& = \sum_{i =0}^{\infty} \binom{n}{i} (-1)^i \left( \frac{d}{n^{\gamma}} \right)^i\\
	& = 1 - \frac{d}{n^{\gamma -1}} + \sum_{i =2}^{\infty} \binom{n}{i} (-1)^i \left( \frac{d}{n^{\gamma}} \right)^i \\
	& = 1 - \cO\left(n^{ -\gamma+1} \right).
	\end{aligned} 
	\]
Above, the inequality on the first line follows from the fact that $\ceil{\frac{\gamma \ln(n)}{\tau(n)}} \geq \frac{\gamma \ln(n)}{\tau(n)}$. The equality on the third line follows by applying the binomial series expansion. Analysing now the left-hand factor derived from Lemma \ref{lemma:N_lb}, note that as $\beta(n) = (1+2\epsilon)d \ceil{\frac{\gamma \ln(n)}{\tau(n)}}$ and $\tau(n) = \cO(1)$ then $\beta(n) = \cO(\log(n))$. As a result
	\[
	\begin{aligned}
	\left(1 - n^{-(\mu-1)} \right)^{\beta(n) } & = \sum_{i =0}^{\infty} \binom{\beta(n)}{i} (-1)^i \left( n^{-\mu +1} \right)^i\\
	& = 1 - \frac{\beta(n)}{n^{\mu -1}} + \sum_{i =2}^{\infty} \binom{\beta(n)}{i} (-1)^i \left( n^{-\mu +1} \right)^i\\
	& = 1 - \cO\left(n^{ -\mu+1} \log(n) \right).
	\end{aligned} 
	\]
Therefore we arrive at the asymptotic lower bound
	\[
	\prob \left( \Lambda_{ND-EBF}^* \; | \; \Lambda_0 \right) >  \left(1 - \cO\left(n^{ -\mu+1} \log(n) \right) \right) \left( 1 - \cO\left(n^{ -\gamma+1} \right)\right).
	\]
Given that $\epsilon \leq 1/6$ implies $(1+2 \epsilon)\leq \frac{4}{3}$, $\ceil{\frac{\gamma \ln(n)}{\tau(n)}} \leq \frac{2\gamma \ln(n)}{\tau(n)}$, and $\mu, \frac{n}{k}>1$,
	\[
	\begin{aligned}
	\beta(n) \left(\mu  \frac{n}{k} \ln(n)+1 \right) & =	(1+2\epsilon)d \ceil{\frac{\gamma
	\ln(n)}{\tau(n)}}\left(\mu  \frac{n}{k} \ln(n)+1 \right)\\
	& \leq \gamma\mu \frac{8d}{3 \tau(n)} \frac{n}{k} (\ln^2(n) + \ln(n))\\
	& \leq N
	\end{aligned}
	\]
by construction. Given that $\gamma, \mu>1$ are arbitrary constants, let $\gamma = \mu$ and define $\nu \defeq \gamma^2$, clearly it must also hold that $\nu>1$. It follows from Lemma \ref{lemma:NDEBF_lower_bounds_DEBF} that if $N \geq \nu \frac{8d}{3 \tau(n)} \frac{n}{k} (\ln^2(n) + \ln(n))$ then
\[
\begin{aligned}
	\prob \left( \Lambda_{D-EBF}^*\; | \; \Lambda_0 \right) & \geq \prob \left(\Lambda_{ND-EBF}^{*} \; | \; \Lambda_0 \right)\\
	& >   1 - \cO(n^{-\sqrt{\nu} + 1} \log(n) )
\end{aligned}
\]
as claimed. This concludes the proof of Theorem \ref{theorem:main}.
\end{proof}

\section{Experiments} \label{sec:EBF_numerics}
A key takeaway of Theorem \ref{theorem:main} is that there are parameter regimes in which, at least asymptotically, D-EBF will successfully factorise a matrix $Y$ drawn from the PSB model. In this section we investigate this matter empirically. First, in Section \ref{subsec:deploying_DEBF}, we highlight certain adjustments to D-EBF which enable it to be deployed in practice: in particular, we discuss how to circumvent the issue of not having access to the expansion parameter $\epsilon$, and also how D-EBF can be adapted to be used in an online setting. Second, in Section \ref{subsec:DEBF_numerics}, we conduct experiments demonstrating the efficacy of D-EBF in factorising even mid-sized problems, i.e., $n = 10^3$.
\subsection{Deploying D-EBF in practice} \label{subsec:deploying_DEBF}
As highlighted in Section \ref{subsec:summary_contributions}, in practice the decoder may not have access to the expansion parameter of the encoder matrix. One reason for this, already highlighted in Section \ref{subsec:summary_contributions}, is that computing the expansion parameter is an NP-complete problem. D-EBF is designed for factorisation problems in which $\epsilon \leq 1/6$, we therefore restrict our attention to this setting and consider how the decoder might still be able to succeed in factorising $\mY$ without knowledge of $\epsilon$ in advance. To this end, suppose that $1/6$ is used by D-EBF instead of $\epsilon$: as $2 \epsilon d \leq d/3$ then using this value places a stricter requirement on the frequency with which an entry of the residual must occur in order to be accepted as a candidate singleton value. In addition, as $(1-2\epsilon)d \geq 2d/3$, then all singleton values occurring more than $(1-2 \epsilon)d$ times in a given column are still extracted. Therefore, from the perspective of extracting singleton values and partial supports, the only implication of using $1/6$ instead of $\epsilon$ is that some singleton values and their associated partial supports may be missed. 

In regard to clustering however, as $(1-4 \epsilon)d \geq d/3$,  then using $1/6$ instead of $\epsilon$ loses the guarantees afforded by Lemma \ref{corollary_sort_ps}. To be clear, consider two partial supports $\vw_1$ and $\vw_2$ satisfying $|\supp(\vw_1)|>2d/3$ and $|\supp(\vw_2)|>2d/3$: if $\vw_1^T \vw_2 \geq d/3 \geq 2 \epsilon d$ then we can conclude that these two partial supports originate from the same column, however $\vw_1^T\vw_2 < d/3$ does not guarantee that these partial supports originate from different columns. If D-EBF is deployed as described in Algorithm \ref{alg:DEBF_detail} this may result in duplicate reconstructions of the same column of $\mA$ in $\hat{\mA}$. This issue can be overcome by adding in an additional column merge subroutine at the end of each iteration. An example merge subroutine is provided in Algorithm \ref{alg:merge}. This algorithm merges duplicate columns in $\hat{\mA}$, as well as the corresponding rows of $\hat{\mX}$, by checking the inner product between columns of $\hat{\mA}$. Assuming that $\epsilon \leq 1/6$, then, if any pair of columns has an inner product larger than $d/3$ and given that $d/3 \geq 2 \epsilon d$, we may conclude that these two columns and their respective rows in $\hat{\mX}$ should be combined.

\begin{algorithm}
	\caption{MERGE$(d, \hat{\mA},\hat{\mX})$} \label{alg:merge}
	\begin{algorithmic}[1]
	\STATE $\mG \leftarrow \hat{\mA}^T \hat{\mA}$
	\FOR{$i=1:n$}
	\IF{$|\supp(\hat{\va}_i)| \geq d/3$}
	\FOR{$j=(i+1):n$}
	\IF{$g_{i,j} \geq d/3$}
	\STATE $\hat{\va}_i \leftarrow \sigma(\hat{\va}_{i} + \hat{\va}_{j})$
	\STATE $\hat{\va}_j \leftarrow \textbf{0}_m$
	\STATE $\tilde{\hat{\vx}}_i \leftarrow \tilde{\hat{\vx}}_i + \tilde{\hat{\vx}}_j$
	\STATE $\tilde{\hat{\vx}}_j \leftarrow \textbf{0}_N^T$
	\ENDIF
	\ENDFOR
	\ENDIF
	\ENDFOR
	\STATE Return $\hat{\mA},\hat{\mX}$
	\end{algorithmic}
\end{algorithm}
In summary, as long as $\mA \in \cE_{k, \epsilon, d}^{m \times n}$ with $\epsilon \leq 1/6$, then in practice even without access to the expansion parameter $\epsilon$ D-EBF can still be deployed by using the upper bound 1/6 instead. We demonstrate this empirically in Section \ref{subsec:DEBF_numerics}. Note also that the same reasoning applies to any bound on $\epsilon$ made available to the decoder in advance.

As defined in Algorithm \ref{alg:DEBF_detail}, D-EBF requires knowledge of the column sparsity $d$ and number of columns $n$ of $\mA$. We speculate that it may be possible to estimate or use upper and lower bounds on $d$ and therefore still deploy D-EBF without access to $d$ in advance. We also hypothesise that similar results and algorithms may be derivable when $A$ is not the adjacency matrix of a fixed degree expander, with instead the degree of each node being bounded in some interval with high probability. We leave the study of such questions to future work and proceed under the assumption that the decoder has access to $d$ in advance. With regard to requiring access to $n$, we claim this is an artefact of our problem setup rather than a necessity in practice. Indeed, instead of initialising $\hat{\mA}$ as an $m \times n$ array of zeros, the reconstruction of $\mA$ could instead be kept dynamic, with partial or complete columns of $\mA$ being added to $\hat{\mA}$ as and when they are recovered. Furthermore, D-EBF can be run in advance of receiving all of the columns of $\mY$. Consider the situation in which the columns of $\mY$ are received in sequence: as each new column arrives D-EBF can be ran only on this new column, in a so called turnstile model, or in combination with the residuals of previous columns, using the partial reconstruction of $\mA$ already acquired as an initialisation point. In either case, after the decoder has seen sufficiently many columns to enable it to recover the encoder matrix up to permutation, then future columns can be decoded efficiently as and when they arrive, using a decoding algorithm from the CCS literature.

To summarise, in addition to $\mY$, then in practice D-EBF requires access only to the column sparsity $d$ of the encoder in advance. Furthermore, D-EBF can easily be adapted to process subsets of columns of $\mY$, allowing it to be deployed for example in data streaming contexts.

\subsection{Performance of D-EBF on mid-sized problems} \label{subsec:DEBF_numerics}
As a result of Lemma \ref{lemma:whp_expander} and Theorem \ref{theorem:main}, then for sufficiently large, sparse problems the assumptions upon which D-EBF is based hold with high probability. However, understanding the regimes in which D-EBF will succeed or fail at a practical level is not clear. Indeed, the probability bounds derived in \cite{DBLP:journals/corr/abs-1804-09212} are loose, making it hard to specify exactly how large $n$ needs to be in order for the encoder $A$, sampled from the PSB model, to satisfy $A \in \cE_{k, \epsilon, d}^{m \times n}$ with $\epsilon \leq 1/6$ with high probability. Additionally, even if this assumption holds the probability bound provided in Theorem \ref{theorem:main} is asymptotic and also likely to be loose. The purpose of this section is to provide a preliminary investigation into the empirical performance of D-EBF, testing the parameter settings for which the algorithm is likely to be successful. In particular, as already highlighted our theory suggests that D-EBF will likely be successful on large, sparse problems and unsuccessful on small, dense ones. We therefore focus our attention on mid-sized problems, varying $N$ and the ratio $k/n$, in order to better understand the transition point between success and failure.

The parameter settings for our experiments are as follows: $n=1000$, $m=800$ and $d=10$ are kept fixed, with $k$ varied between $1\%$ and $10\%$ of $n$ and $N \in \{100,200,300\}$. For each of these parameter settings 10 trials where computed: in each case an $A$ and an $X$ were generated as per the PSB model, Definition \ref{def_data_model}, with the coefficients of $X$ being sampled from the uniform distribution over an interval bounded away from 0 (in this case $[0.1,10.1]$). For each trial the D-EBF algorithm was then deployed to recover $A$ and $X$ up to permutation from $Y=AX$ with the $\epsilon$ input parameter set to 1/6: we note here that in our experiments D-EBF, Algorithm \ref{alg:DEBF_detail}, was not amended as per the discussion in Section \ref{subsec:deploying_DEBF} to include an additional merge subroutine. With $\hat{A}$ and $\hat{X}$ denoting the reconstructions returned by D-EBF, then for each parameter setting the Frobenius norm of the residual $||Y - \hat{A} \hat{X}||_F$ was computed as a percentage of $||Y||_F$ and averaged over the 10 trials. In addition, for each parameter setting the number of iterations of the while loop of D-EBF, lines 6-25 of Algorithm \ref{alg:DEBF_detail}, was counted and likewise averaged over the 10 trials. The outcomes of these experiments are shown in Figure \ref{fig:DEBF_numerics}.

\begin{figure}[!htb]
	\centering
	\includegraphics[width=0.9\textwidth]{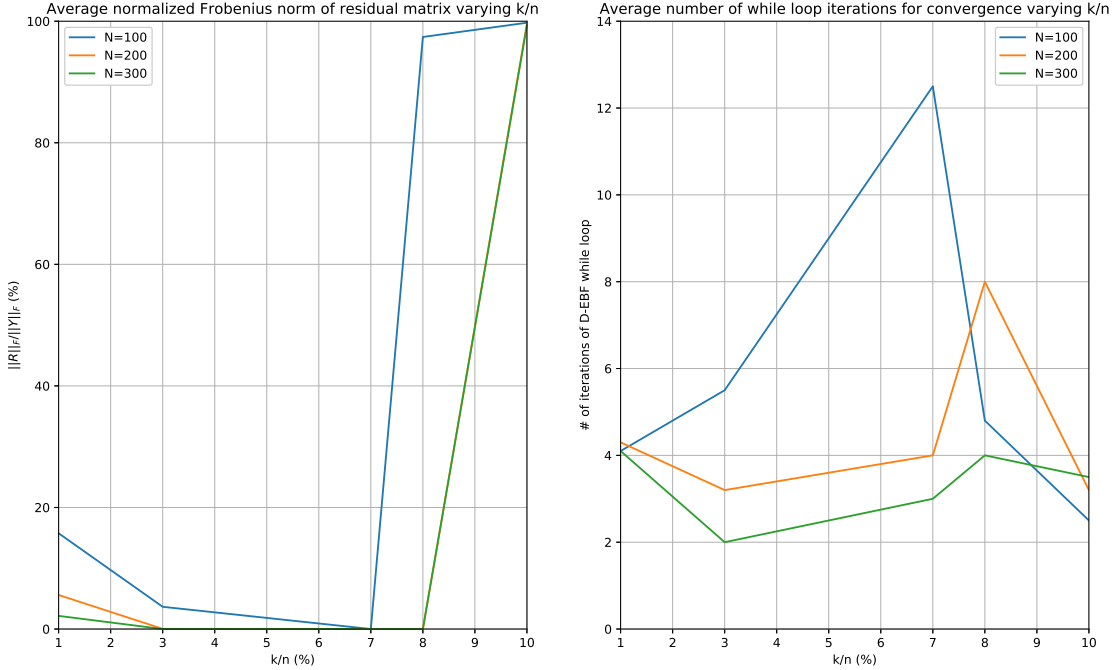}
	\caption{The left-hand plot shows the percentage $||\mY - \hat{\mA} \hat{\mX}||_F/||\mY||_F$, averaged over 10 trials with each trial generated as per the PSB model, Definition \ref{def_data_model}. Here $\hat{\mA}$ and $\hat{\mX}$ denote the reconstructions returned by D-EBF. The right-hand plot shows the average number of iterations of the while loop of D-EBF, lines 6-25, required for convergence over the same trials. The parameters $k$ and $N$ are varied while $n = 1000$, $m = 800$, $d = 10$ are fixed. In all trials the expansion parameter $\epsilon$ of $A$ is not known or computed in advance, D-EBF is deployed using $1/6$ in its stead.}
	\label{fig:DEBF_numerics}
\end{figure} 

In the left-hand plot of Figure \ref{fig:DEBF_numerics} we observe that $Y$ is most likely to be successfully factorized when $N$ is large and $k/n$ is neither too small or too large. Indeed, a larger $N$ will typically imply more partial supports extracted at each iteration of the while loop and therefore a greater chance of recovering $A$ up to permutation. This in turn allows for the recovery of $X$ up to permutation. Similar to the situation in which $N$ is small, when $k/n$ is small fewer partial supports are extracted per column of $Y$ at each iteration of the while loop, making the recovery of $A$ harder. As can be observed in the left-hand plot of Figure \ref{fig:DEBF_numerics}, this results in the incomplete factorisation of $Y$ for $k/n < 3\%$. Note also for $k/n < 3\%$, that as $N$ increases $Y$ on average becomes closer to being fully factorized: this is because the partial supports extracted from the additional columns compensate for the reduced number of partial supports extracted per column. We would expect even for small $k/n$ that as $N$ increases further then $Y$ will converge towards being fully factorized. When $k/n$ is large a similar issue occurs in terms of few or even no partial supports being extracted per column of $Y$. However, this is due to the fact that with $m$ and $n$ fixed then as $k$ increases the probability that $A \in \cE_{k, \epsilon, d}^{m \times n}$ with $\epsilon \leq 1/6$ converges towards $0$. When this assumption is not satisfied it is likely no longer possible to identify singleton values using frequency of occurrence, let alone cluster their associated partial supports. This can be observed in the failure to make any progress in factorising  $Y$ when $k/n = 10 \%$. Furthermore, unlike the former situation, increasing $N$ in this context is unlikely to yield any improvement. The middle ground, $3\% \leq k/n \leq 8\% $, appears ripe for factorisation, striking the right balance between $k$ being large enough so that each column of $A$ contributes to many of the columns of $Y$, while being sufficiently small so that singleton values can still be identified via the frequency with which they appear. 

Turning our attention to the right-hand plot of Figure \ref{fig:DEBF_numerics}, we observe that as $N$ increases then the number of iterations of the while loop of D-EBF generally decreases. This is to be expected as a larger $N$ means more partial supports extracted at each iteration, and therefore we would expect $A$ to be recovered in fewer iterations. Observe for $N =100$ and $k/n = 7\%$ that the iterative approach of D-EBF is invaluable in being able to factorise $Y$. To summarise, these experiments demonstrate that it is possible to practically decode a set of linear measurements without access to the full encoder matrix. While Theorem \ref{theorem:main} provides asymptotic guarantees, these experiments illustrate that D-EBF is successful even on relatively small to medium sized problems. We expect for large problems the performance of D-EBF to improve, allowing for problems with larger $k/n$ ratios to be factorized successfully. We leave a comprehensive empirical study of the D-EBF algorithm to future work.

\section{Concluding remarks and potential extensions}\label{sec:conclusion}
In this paper we studied the feasibility of performing multimeasurement combinatorial compressed sensing in the setting where the decoder does not have access to the encoder matrix. To this end we presented a particular random encoder matrix, see Algorithm \ref{alg:generate_A}, and the Decoder-Expander Based Factorisation algorithm, summarised in Algorithm \ref{alg:DEBF_detail}. We proved that the combination of these two allows for the successful factorisation of a measurement matrix $Y$ drawn from PSB model, thereby recovering both the encoder and sparse coding matrices up to permutation, with high probability and with near optimal sample complexity. In particular, the decoder only needs knowledge of the column sparsity $d$ of the encoder matrix in advance. We further illustrated how this combination also enables recovery of the location information by applying an additional column ordering procedure on the encoder matrix, agreed on in advance, both prior to encoding and after decoding. We then demonstrated that this algorithm performs well in practice even on mid-sized problems. In addition, the per while loop iteration computational complexity of D-EBF is $\cO(k^2(n+N)N)$: this, combined with experimental evidence indicating only a modest number of iterations are required for convergence, as well as ample opportunities for parallelization, suggest that the computational cost of D-EBF at scale is very manageable. Some potential extensions of this work are as follows.

\begin{itemize}
	\item \textbf{Relaxing the $\epsilon \leq 1/6$ condition:} Lemma \ref{corollary_sort_ps} guarantees that partial supports can be trivially clustered if the binary factor matrix $A$ is a $(k,\epsilon,d)$-expander graph with expansion parameter $\epsilon \leq 1/6$.  This expansion parameter is guaranteed over all $\binom{n}{k}$ sets of $k$ columns in $A$.  As $Y$ contains at most $N$ such sets of $k$ columns, and as $N$ would typically be exponentially smaller than this binomial coefficient, it is expected that similar bounds could be derived under a model for which it holds only with high probability that a subset of $k$ columns of $A$ satisfies the expansion bound.
	
	\item \textbf{Relaxing the dissociated condition condition:} similar to the the first point, it may suffice if most if not all possible subsets of the nonzeros of a sparse code are dissociated. However, adversarial choices of nonzero values in $X$ may require combinatorial searches on certain entries of $Y$ and could result in $Y$ no longer having a unique factorisation.
	
	\item \textbf{Stability to noise and projections of arbitrary sparse matrices onto the set of PSB matrices:}  more developed matrix factorizations, such as PCA, subspace clustering, and dictionary learning, are all known to be effective for matrices which only approximately achieve the modelling assumptions in context. An extension of the PSB model would be to show that the factors are stable to additive noise and remain efficient and reliable to compute. Stability to additive noise one might expect to be achieved via related work in combinatorial compressed sensing, such as \cite{8292953}. This could be further augmented by having repeated measurement vectors contained in $Y$. In order for the PSB model to be more generally applicable, approximation rates are required. In particular, a derivation of the distance between an arbitrary sparse matrix $\mY$ and its projection onto the set of PSB matrices, which would presumably show better approximation as the number of columns $n$ of the encoder matrix $\mA$ is increased. To be clear, this would entail bounds of the form $\|\mY-P_{PSB(n)}(\mY)\|\leq f(n)$, where $P_{PSB(n)}(\cdot)$ projects to the nearest PSB matrix with parameters $k,m,n,d,N$ under some matrix norm, and $f(n)$ is a rapidly decreasing function of $n$ and or potentially $d$ and $k$. One would also expect expressivity benefits from relaxing the fixed degree condition, instead allowing the number of nonzeros per column of $A$ to vary between an upper and lower bound. However, this would likely increase the computational complexity of the factorisation task as the number of nonzeros per column would also need to be inferred from $Y$.  
\end{itemize}

	

\section*{Acknowledgements}
This work is supported by the Alan Turing Institute under the EPSRC grant EP/N510129/1 and the Ana Leaf Foundation. The authors would also like to thank Robert Calderbank for his helpful comments and advice.

\clearpage
\bibliographystyle{unsrt}  
\bibliography{references}

\begin{thebibliography}{10}

\bibitem{1580791}
E.~J. {Candes}, J.~{Romberg}, and T.~{Tao}.
\newblock Robust uncertainty principles: exact signal reconstruction from
  highly incomplete frequency information.
\newblock {\em IEEE Transactions on Information Theory}, 52(2):489--509, 2006.

\bibitem{4016283}
E.~J. {Candes} and T.~{Tao}.
\newblock Near-optimal signal recovery from random projections: Universal
  encoding strategies?
\newblock {\em IEEE Transactions on Information Theory}, 52(12):5406--5425,
  2006.

\bibitem{c2005decoding}
E.J. Candes and Terence Tao.
\newblock Decoding by linear programming.
\newblock {\em Information Theory, IEEE Transactions on}, 51:4203 -- 4215, 01
  2006.

\bibitem{quant_uncertainty}
Emmanuel~J. Candes and Justin Romberg.
\newblock Quantitative robust uncertainty principles and optimally sparse
  decompositions.
\newblock {\em Foundations of Computational Mathematics}, 6(2):227--254, 2006.

\bibitem{stable_sig_recovery}
Emmanuel~J. Candès, Justin~K. Romberg, and Terence Tao.
\newblock Stable signal recovery from incomplete and inaccurate measurements.
\newblock {\em Communications on Pure and Applied Mathematics},
  59(8):1207--1223, 2006.

\bibitem{Donoho2004Compressed}
David~L. Donoho.
\newblock Compressed sensing.
\newblock {\em IEEE Transactions on Information Theory}, 52:1289--1306, 2004.

\bibitem{elad_book}
Michael Elad.
\newblock {\em Sparse and Redundant Representations: From Theory to
  Applications in Signal and Image Processing}.
\newblock Springer Publishing Company, Incorporated, 1st edition, 2010.

\bibitem{intro_CS}
Simon Foucart and Holger Rauhut.
\newblock {\em A Mathematical Introduction to Compressive Sensing}.
\newblock Birkh\"{a}user Basel, 2013.

\bibitem{blind_cs_yonina}
Sivan Gleichman and Yonina~C. Eldar.
\newblock Blind compressed sensing.
\newblock {\em IEEE Transactions on Information Theory}, 57(10):6958--6975,
  2011.

\bibitem{4797639}
R.~{Berinde}, A.~C. {Gilbert}, P.~{Indyk}, H.~{Karloff}, and M.~J. {Strauss}.
\newblock Combining geometry and combinatorics: A unified approach to sparse
  signal recovery.
\newblock In {\em 2008 46th Annual Allerton Conference on Communication,
  Control, and Computing}, pages 798--805, Sep. 2008.

\bibitem{ER_paper}
Sina Jafarpour, Weiyu Xu, Babak Hassibi, and A.~Robert Calderbank.
\newblock Efficient and robust compressed sensing using optimized expander
  graphs.
\newblock {\em IEEE Transactions on Information Theory}, 55:4299--4308, 2009.

\bibitem{DBLP:journals/corr/abs-1804-09212}
Bubacarr Bah and Jared Tanner.
\newblock On the construction of sparse matrices from expander graphs.
\newblock {\em Frontiers in Applied Mathematics and Statistics}, 4:39, 2018.

\bibitem{1710377}
M.~Aharon, M.~Elad, and A.~Bruckstein.
\newblock K-\uppercase{SVD}: An algorithm for designing overcomplete
  dictionaries for sparse representation.
\newblock {\em IEEE Transactions on Signal Processing}, 54(11):4311--4322,
  2006.

\bibitem{Lee07efficientsparse}
Honglak Lee, Alexis Battle, Rajat Raina, and Andrew Ng.
\newblock Efficient sparse coding algorithms.
\newblock In {\em Advances in Neural Information Processing Systems},
  volume~19, 2007.

\bibitem{10.5555/2981780.2981909}
Julien Mairal, Francis Bach, Jean Ponce, Guillermo Sapiro, and Andrew
  Zisserman.
\newblock Supervised dictionary learning.
\newblock In {\em Proceedings of the 21st International Conference on Neural
  Information Processing Systems}, NIPS'08, page 1033–1040, 2008.

\bibitem{Olshausen97sparsecoding}
Bruno~A. Olshausen and David~J. Fieldt.
\newblock Sparse coding with an overcomplete basis set: a strategy employed by
  v1.
\newblock {\em Vision Research}, 37:3311--3325, 1997.

\bibitem{2013arXiv1309.1952A}
Alekh Agarwal, Animashree Anandkumar, and Praneeth Netrapalli.
\newblock A clustering approach to learning sparsely used overcomplete
  dictionaries.
\newblock {\em IEEE Transactions on Information Theory}, 63(1):575--592, 2017.

\bibitem{DBLP:journals/corr/AroraBGM14}
Sanjeev Arora, Aditya Bhaskara, Rong Ge, and Tengyu Ma.
\newblock More algorithms for provable dictionary learning.
\newblock {\em CoRR}, abs/1401.0579, 2014.

\bibitem{DBLP:journals/corr/AroraGM13}
Sanjeev Arora, Rong Ge, and Ankur Moitra.
\newblock New algorithms for learning incoherent and overcomplete dictionaries.
\newblock In {\em Proceedings of The 27th Conference on Learning Theory},
  volume~35 of {\em Proceedings of Machine Learning Research}, pages 779--806.
  PMLR, Jun 2014.

\bibitem{DBLP:journals/corr/BarakKS14}
Boaz Barak, Jonathan~A. Kelner, and David Steurer.
\newblock Dictionary learning and tensor decomposition via the sum-of-squares
  method.
\newblock In {\em Proceedings of the Forty-Seventh Annual ACM Symposium on
  Theory of Computing}, STOC ’15, page 143–151, 2015.

\bibitem{DBLP:journals/corr/abs-1206-5882}
Daniel Spielman, Huan Wang, and John Wright.
\newblock Exact recovery of sparsely-used dictionaries.
\newblock {\em IJCAI International Joint Conference on Artificial
  Intelligence}, 23, June 2012.

\bibitem{DBLP:journals/corr/SunQW15b}
J.~{Sun}, Q.~{Qu}, and J.~{Wright}.
\newblock Complete dictionary recovery over the sphere \textit{I}: Overview and
  the geometric picture.
\newblock {\em IEEE Transactions on Information Theory}, 63(2):853--884, 2017.

\bibitem{DBLP:journals/corr/SunQW15c}
J.~{Sun}, Q.~{Qu}, and J.~{Wright}.
\newblock Complete dictionary recovery over the sphere \textit{II}: Recovery by
  riemannian trust-region method.
\newblock {\em IEEE Transactions on Information Theory}, 63(2):885--914, 2017.

\bibitem{best_k_approx}
Albert {Cohen}, Wolfgang {Dahmen}, and Ronald {Devore}.
\newblock {Compressed sensing and best k -term approximation}.
\newblock {\em Journal of the American Mathematical Society}, 22(1):211--231,
  January 2009.

\bibitem{959265}
D.~L. {Donoho} and X.~{Huo}.
\newblock Uncertainty principles and ideal atomic decomposition.
\newblock {\em IEEE Transactions on Information Theory}, 47(7):2845--2862,
  2001.

\bibitem{Donoho2197}
David~L. Donoho and Michael Elad.
\newblock Optimally sparse representation in general (nonorthogonal)
  dictionaries via l1 minimization.
\newblock {\em Proceedings of the National Academy of Sciences},
  100(5):2197--2202, 2003.

\bibitem{simple_proof_RIP}
Richard Baraniuk, Mark Davenport, Ronald DeVore, and Michael Wakin.
\newblock A simple proof of the restricted isometry property for random
  matrices.
\newblock {\em Constructive Approximation}, 28:253--263, 12 2008.

\bibitem{sharp_RIP}
Jeffrey Blanchard, Coralia Cartis, and Jared Tanner.
\newblock Compressed sensing: how sharp is the restricted isometry property?
\newblock {\em SIAM Review}, 53, 04 2010.

\bibitem{6658871}
A.~M. {Tillmann} and M.~E. {Pfetsch}.
\newblock The computational complexity of the restricted isometry property, the
  nullspace property, and related concepts in compressed sensing.
\newblock {\em IEEE Transactions on Information Theory}, 60(2):1248--1259,
  2014.

\bibitem{cs_mri}
Jong~Chul Ye.
\newblock Compressed sensing mri: a review from signal processing perspective.
\newblock {\em BMC Biomedical Engineering}, 1(1):8, 2019.

\bibitem{single_pixel}
M.~F. {Duarte}, M.~A. {Davenport}, D.~{Takhar}, J.~N. {Laska}, T.~{Sun}, K.~F.
  {Kelly}, and R.~G. {Baraniuk}.
\newblock Single-pixel imaging via compressive sampling.
\newblock {\em IEEE Signal Processing Magazine}, 25(2):83--91, 2008.

\bibitem{zbMATH03513685}
L.~A. {Bassalygo} and M.~S. {Pinsker}.
\newblock {On the complexity of an optimal non-blocking commutation scheme
  without reorganization}.
\newblock {\em {Problems Information Transmission}}, 9:64--66, 1973.

\bibitem{capalbo2002randomness}
Michael Capalbo, Omer Reingold, Salil Vadhan, and Avi Wigderson.
\newblock Randomness conductors and constant-degree lossless expanders.
\newblock In {\em Proceedings of the thiry-fourth annual ACM symposium on
  Theory of computing}, pages 659--668. ACM, 2002.

\bibitem{deterministic_construction_expanders}
Venkatesan Guruswami, Christopher Umans, and Salil Vadhan.
\newblock Unbalanced expanders and randomness extractors from parvaresh--vardy
  codes.
\newblock {\em J. ACM}, 56(4), July 2009.

\bibitem{inproceedingsSMP}
R.~{Berinde}, P.~{Indyk}, and M.~{Ruzic}.
\newblock Practical near-optimal sparse recovery in the l1 norm.
\newblock In {\em 2008 46th Annual Allerton Conference on Communication,
  Control, and Computing}, pages 198--205, 2008.

\bibitem{inproceedingsSSMP}
Radu Berinde and Piotr Indyk.
\newblock Sequential sparse matching pursuit.
\newblock In {\em Proceedings of the 47th Annual Allerton Conference on
  Communication, Control, and Computing}, Allerton ’09, page 36–43. IEEE
  Press, 2009.

\bibitem{inproceedingsLDDSR}
W.~{Xu} and B.~{Hassibi}.
\newblock Efficient compressive sensing with deterministic guarantees using
  expander graphs.
\newblock In {\em 2007 IEEE Information Theory Workshop}, pages 414--419, 2007.

\bibitem{mendoza-smith2017a}
Rodrigo Mendoza-Smith and Jared Tanner.
\newblock Expander $\ell_0$-decoding.
\newblock {\em Applied and Computational Harmonic Analysis}, 45(3):642 -- 667,
  2018.

\bibitem{4014378}
J.~{Chen} and X.~{Huo}.
\newblock Theoretical results on sparse representations of multiple-measurement
  vectors.
\newblock {\em IEEE Transactions on Signal Processing}, 54(12):4634--4643,
  2006.

\bibitem{1453780}
S.~F. {Cotter}, B.~D. {Rao}, {Kjersti Engan}, and K.~{Kreutz-Delgado}.
\newblock Sparse solutions to linear inverse problems with multiple measurement
  vectors.
\newblock {\em IEEE Transactions on Signal Processing}, 53(7):2477--2488, 2005.

\bibitem{4553693}
M.~{Mishali} and Y.~C. {Eldar}.
\newblock Reduce and boost: Recovering arbitrary sets of jointly sparse
  vectors.
\newblock {\em IEEE Transactions on Signal Processing}, 56(10):4692--4702,
  2008.

\bibitem{tao_vu_2006}
Terence Tao and Van~H. Vu.
\newblock {\em Additive Combinatorics}.
\newblock Cambridge Studies in Advanced Mathematics. Cambridge University
  Press, 2006.

\bibitem{alon_expanders}
Noga Alon.
\newblock Eigenvalues and expanders.
\newblock {\em Combinatorica}, 6(2):83--96, June 1986.

\bibitem{expansion_bipartite}
Tom H{\o}holdt and Heeralal Janwa.
\newblock Eigenvalues and expansion of bipartite graphs.
\newblock {\em Designs, Codes and Cryptography}, 65(3):259--273, 2012.

\bibitem{5206547}
E.~{Elhamifar} and R.~{Vidal}.
\newblock Sparse subspace clustering.
\newblock In {\em 2009 IEEE Conference on Computer Vision and Pattern
  Recognition}, pages 2790--2797, 2009.

\bibitem{Parsons2004SubspaceCF}
L.~Parsons, E.~Haque, and H.~Liu.
\newblock Subspace clustering for high dimensional data: a review.
\newblock {\em SIGKDD Explorations}, 6:90--105, 2004.

\bibitem{Vidal_atutorial}
René Vidal.
\newblock A tutorial on subspace clustering.

\bibitem{DBLP:journals/corr/AroraBGM13}
Sanjeev Arora, Aditya Bhaskara, Rong Ge, and Tengyu Ma.
\newblock Provable bounds for learning some deep representations.
\newblock In {\em Proceedings of the 31st International Conference on Machine
  Learning}, volume~32 of {\em Proceedings of Machine Learning Research}, pages
  584--592. PMLR, June 2014.

\bibitem{8292953}
R.~{Mendoza-Smith}, J.~W. {Tanner}, and F.~{Wechsung}.
\newblock A robust parallel algorithm for combinatorial compressed sensing.
\newblock {\em IEEE Transactions on Signal Processing}, 66(8):2167--2177, 2018.

\bibitem{article}
Marco {Ferrante} and Alessia {Tagliavini}.
\newblock {On the coupon-collector's problem with several parallel
  collections}.
\newblock {\em arXiv e-prints}, September 2016.

\bibitem{2014arXiv1412.3626D}
Aristides Doumas and Vassilis Papanicolaou.
\newblock The coupon collector’s problem revisited: generalizing the double
  dixie cup problem of \uppercase{N}ewman and \uppercase{S}hepp.
\newblock {\em ESAIM: Probability and Statistics}, 20, June 2016.

\bibitem{coupon_collec_group}
Wolfgang Stadje.
\newblock The collector's problem with group drawings.
\newblock {\em Advances in Applied Probability}, 22, 12 1990.

\bibitem{Doerr2020}
Benjamin Doerr.
\newblock {\em Probabilistic Tools for the Analysis of Randomized Optimization
  Heuristics}, pages 1--87.
\newblock Springer International Publishing, 2020.

\end{thebibliography}
\clearpage
\appendix
\section{Extraction of singleton values and partial supports}\label{appendix:singletons_partials}
In this appendix we provide proofs of the results given in Section \ref{sec:alg} concerning the extraction of singleton values and the clustering of partial support. To quickly recap some key pieces of notation we recall the following function definitions.
\begin{itemize}
    \item Let the function $f: \reals \times \reals^m \rightarrow [m]$ count the number of times a real number $\alpha \in \reals$ appears in some vector $\vr \in \reals^m$, $f(\alpha, \vr) \defeq | \{j \in [m]: r_j = \alpha \}|$.
    \item Let the function $g: \reals \times \reals^m \rightarrow \{0,1 \}^m$ return a binary vector whose nonzeros correspond to the locations in which $\alpha \in \reals$ appears in a vector $\vr \in \reals^m$, i.e., with $\vw = g(\alpha, \vr)$ then $w_j = 1$ iff $r_j = \alpha$ and is $0$ otherwise.
\end{itemize}

We first prove Lemma \ref{lemma:adjmat}, the three properties of expander graph adjacency matrices identified in this lemma will provide the basis for much of what follows.

\textbf{Lemma \ref{lemma:adjmat} (Properties of the adjacency matrix of a $(k, \epsilon, d)$-expander graph)} \textit{ If $\mA \in \cE^{m \times n}_{k, \epsilon, d}$ then any submatrix $\mA_{\cS}$ of $\mA$, where $\cS \in [n]^{(\leq k)}$, satisfies the following.
	\begin{enumerate}
		\item There are more than $(1-\epsilon)d |\cS|$ rows in $\mA_{\cS}$ that have \textbf{at least one} non-zero.
		\item There are more than $(1-2\epsilon)d |\cS|$ rows in $\mA_{\cS}$ that have \textbf{only one} non-zero.
		\item The overlap in support of the columns of $\mA_{\cS}$ is upper bounded as follows,
		\[
		|\bigcap_{l \in \cS} \supp(\va_l)| < 2 \epsilon d.
		\]
	\end{enumerate}}

\begin{proof}
Property 1 follows directly from Definition \ref{def_exp} and property 2 from Theorem \ref{theorem_unp}. To prove property 3, consider the pairwise overlap in support between any two distinct columns $\va_l$ and $\va_h$ of $\mA \in \cE_{k, \epsilon, d}^{m \times n}$. From Definition \ref{def_exp} it follows that
	\[
	|supp(\va_l)\cup supp(\va_h)| > (1-\epsilon)2d.
	\]
	Using the inclusion-exclusion principle then
	\[
	|supp(\va_l)| + |supp(\va_h)| - | supp(\va_l) \cap supp(\va_h)| > (1- \epsilon)2d.
	\]
	Given that $|supp(\va_l)| = |supp(\va_h)| = d $, then a simple rearrangement shows that
	\[
	| supp(\va_l)\cap supp(\va_h) | < 2d - (1- \epsilon)2d = 2 \epsilon d.
	\]
	Therefore, for any $\cS \subseteq [n]$ with $|\cS|\geq 2$ and for any $l,h \in \cS$ such that $l \neq h$, then
	\[
	|\bigcap_{l \in \cS} supp(\va_l)|\leq| supp(\va_l)\cap supp(\va_h) | < 2 \epsilon d.
	\]
\end{proof}

Using Lemma \ref{lemma:adjmat} it is possible to derive the following sufficient condition for identifying singleton values, based on the frequency with which they appear.

\textbf{Corollary \ref{corollary:suff_cond_singletons} (Sufficient condition for identifying singleton values (i))} \textit{Consider a vector $\vr = \mA \vz$ where $\mA \in \cE^{m \times n}_{k, \epsilon, d}$ and $\vz\in \cX^n_k$. With $\alpha \in \reals \backslash \{0\}$ if $f(\alpha, \vr)\geq 2 \epsilon d$ then there exists an $l \in \supp(\vx)$ such that $\alpha = x_l$.}

\begin{proof}
    By construction if $\alpha \in \reals \backslash \{0\}$ and $f(\alpha, \vr)>0$ then there exists a $\cS \subseteq \supp(\vz)$ such that $\alpha = \sum_{l \in \cS} x_l$. Suppose that $|\cS|\geq 2$. Then
    \[
    \begin{aligned}
    f(\alpha, \vr) &= |\{j \in [m]: \cS = \supp(\tilde{\va}_j)\cap \supp(\vz) \} |\\
    & \leq |\{j \in [m]: \cS \subseteq \supp(\tilde{\va}_j)\cap \supp(\vz) \} |\\
    & \leq |\{j \in [m]: \cS \subseteq \supp(\tilde{\va}_j)\} |\\
    & =| \bigcap_{l \in \cS} \supp(a_l)|\\
    & < 2\epsilon d,
    \end{aligned}
    \]
    where the final inequality follows from Lemma \ref{lemma:adjmat}. Therefore if $\alpha \in \reals \backslash \{0\}$ satisfies $f(\alpha, \vr) \geq 2 \epsilon d$ it must follow that $\alpha$ is a singleton value, i.e., $|\cS|=1$, and therefore there exists an $l \in \supp(\vx)$ such that $\alpha = x_l$.
\end{proof}

Corollary \ref{corollary:suff_cond_singletons} provides a sufficient condition for identifying singleton values, however it does not provide any guarantees that there will be any singleton values appearing at least $2 \epsilon d$ in any such $\vr$. This existence of high frequency singleton values is resolved by the following lemma.

\textbf{Lemma \ref{lemma_existence_singletons} (Existence of frequently occurring singleton values, adapted from \cite[Theorem 4.6]{mendoza-smith2017a})} \textit{ Consider a vector $\vr = \mA \vz$ where $\mA \in \cE^{m \times n}_{k, \epsilon, d}$ and $\vz \in \cX^n_k$. For $l \in \supp(\vz)$, let $\Omega_l \defeq \{j \in [m]: r_j= z_l\}$ be the set of row indices $j \in [m]$ of $\vr$ such that $r_j= z_l$. Defining $\cT \defeq \{l \in \supp(\vz): |\Omega_l| > ( 1-2 \epsilon) d \}$ as the set of singleton values which each appear more than $(1-2\epsilon)d$ times in $\vr$, then
	\[
	| \cT | \geq \frac{|\supp(\vz)|}{(1+2\epsilon)d}.
	\]}
	
\begin{proof}
	For typographical ease let $\gamma \defeq \ceil{(1-2\epsilon)d}$ and define $U \defeq \supp(\vz)  \backslash \cT$ as the set of singleton values which appear at most $(1-2\epsilon)d$ times.  Additionally, for this proof we adopt the following graph inspired notation.
	\begin{itemize}
		\item $\cN_1(\vr) \defeq \bigcup_{l \in \supp(\vz)} \Omega_l$ is the set of row indices of $\vr$ corresponding to singleton values, by Theorem \ref{theorem_unp} it holds that $|\cN_1(\vr)| \geq \gamma |\supp(\vz)|$.
		\item $\cN_1^{\cT}(\vr) \defeq \bigcup_{l \in \cT} \Omega_l$ is the set of row indices of $\vr$ corresponding to singleton values that appear more than $(1-2 \epsilon )d$ times. Therefore $ \gamma |\cT| \leq |\cN_1^{\cT}(\vr)| \leq d |\cT|$.
		\item $\cN_1^{\cU}(\vr) \defeq \bigcup_{l \in \cU} \Omega_l$ is the set of row indices of $\vr$ corresponding to singleton values that appear at most $(1-2 \epsilon )d$ times. Note that $ |\cN_1^{\cU}(\vr)| \leq (\gamma - 1) |\cU|$.
	\end{itemize}
	\noindent Clearly $|\cN_1(\vr)| = |\cN_1^{\cT}(\vr)| + |\cN_1^{\cU}(\vr)|$, it therefore follows that
	\[
	\begin{aligned}
	|\cN_1(\vr)| & \geq \gamma |\supp(\vz)|\\
	& =  \gamma(|\cT| + |\cU|)\\
	& =  \gamma|\cT| + |\cU| + (\gamma - 1) |\cU|\\
	& \geq  \gamma |\cT| + |\cU| + |\cN_1^{\cU}(\vr)|.
	\end{aligned}
	\]
	Substituting $|\cN_1(\vr)| = |\cN_1^{\cT}(\vr)| + |\cN_1^{\cU}(\vr)|$ then
	\[
	\begin{aligned}
	|\cN_1^{\cT}(\vr)| &\geq \gamma |\cT| + |\cU| \\
	& = |\cT| + |\cU| + (\gamma -1)|\cT|\\
	& = |\supp(\vz)|+ (\gamma -1)|\cT|.
	\end{aligned}
	\]
	As $|\cN_1^{\cT}(\vr)| \leq d |\cT|$ then $d |\cT| \geq |\supp(\vz)| + (\gamma -1)|\cT|$, rearranging gives
	\[
	| \cT | \geq \frac{|\supp(\vz)|}{d - \gamma + 1}.
	\]
	Finally, as $\gamma \geq d - 2 \epsilon d$ then
	\[
	| \cT | \geq  \frac{|\supp(\vz)|}{1+ 2 \epsilon d}
	\]
	as claimed. Note in addition that so long as $\vr$ is nonzero, meaning  $|\supp(\vz)| \geq 1$, then it is always possible to extract at least one partial support with cardinality greater than $(1-2\epsilon)d$.
\end{proof}

In addition, the sufficient condition concerning the extraction of singleton values and partial supports, Corollary \ref{corollary:suff_cond_singletons}, can be extended to the following case.

\textbf{Lemma \ref{lemma:freq_non_singletons_residual} [Sufficient condition for identifying singleton values (ii)]}\textit{ Consider a vector $\vy = \mA \vx$ where $\mA \in \cE^{m \times n}_{k, \epsilon, d}$ and $\vx \in \cX^n_k$. Let $\hat{\mA} \in \{0,1\}^{m \times n}$ and $\hat{\vx}\in\cX^n_k$ be such that a column $\hat{\va}_l$ of $\hat{\mA}$ is nonzero iff $\hat{x}_l\neq 0$, and there exists a permutation matrix $\mP \in \cP^{n \times n}$ such that $\supp(\hat{\mA}\mP^T) \subseteq \supp(\mA)$, $\supp(\mP\hat{\vx}) \subseteq \supp(\vx)$ and $\hat{x}_{P(l)} = x_{P(l)}$ for all $l \in supp(\hat{\vx})$, where $P:[n] \rightarrow [n]$ denotes the row permutation caused by pre-multiplication with $\mP$. Consider the residual $\vr = \vy - \hat{\mA} \hat{\vx}$, if for some $\alpha \in \reals \backslash \{ 0\} $ it holds that $f(\alpha, \vr) \geq 2 \epsilon d$ is satisfied, then there exists an $l \in [n]$ such that $\alpha = x_l$. Furthermore, with $\vw = g(\alpha, \vr)$ then $\supp(\vw) \subseteq \supp(\va_l)$ is a partial support of $\va_l$.}

\begin{proof}
As in Corollary \ref{corollary:suff_cond_singletons}, then by construction if $\alpha \in \reals \backslash \{0\}$ and $f(\alpha, \vr)>0$ then there exists a $\cS \subseteq \supp(\vz)$ such that $\alpha = \sum_{l \in \cS} x_l$. In order to simplify our notation for what follows, let $\mV \defeq \hat{\mA}\mP^T$ and $\vu \defeq \mP\hat{\vx}$. By construction $\supp(\mV) \subseteq \supp(\mA)$ and $\supp(\vu)\subseteq \supp(\vx)$. In addition, with $\Omega_j \defeq \supp(\tilde{\va}_j) \cap \supp(\vx)$ and $\Gamma_j \defeq \supp(\tilde{\vv}_j) \cap \supp(\vu)$, then $\Gamma_j \subseteq \Omega_j$ for all $j \in [m]$. As $u_l = x_l$ for all $l \in \supp(\vu)$ and $\vr = \vy - \mV \vu$ then
\[
\begin{aligned}
r_j = \sum_{l \in \Omega_j} x_l -  \sum_{h \in \Gamma_j} x_h = \sum_{l \in (\Omega_j \backslash \Gamma_j)} x_l.
\end{aligned}
\]
It therefore follows that each entry $r_j$ is a sum over a subset of the nonzero values of $\vx$ used in the sum of $y_j$. As a result, we may write $\vr \defeq \mB \vx$ where $\mB \in \{0,1\}^{m \times n}$ and $\supp(\mB)\subseteq \supp(\mA)$. Therefore, for any $\cS \subseteq \supp(\vx)$ such that $|\cS| \geq 2$, then from Lemma \ref{lemma:adjmat} it follows that
\[
|\bigcap_{l \in \cS} \supp(\vb_l)| \leq |\bigcap_{l \in \cS} \supp(\va_l)| < 2\epsilon d.
\]
Therefore, if $\alpha = \sum_{l \in \cS}x_l$ with $|\cS| \geq 2$ then $f(\alpha, \vr)< 2\epsilon d$. As a result, for some $\alpha \in \reals$ if $f(\alpha, \vr)\geq 2 \epsilon d$ then $\alpha$ is a singleton and there exists an $l \in [n]$ such that $\alpha=x_l$. With $\vw = g(\alpha, \vr)$, then as $\vx$ is dissociated it follows that $\supp(\vw) \subseteq \supp(\vb_l) \subseteq \supp(\va_l)$ and so $\vw$ is a partial support of $\va_l$.
\end{proof}

The next result provides guarantees with regards to clustering partial supports.

\textbf{Lemma \ref{corollary_sort_ps} (Clustering partial supports)} \textit{ Consider a pair of partial supports $\vw_1$ and $\vw_2$, extracted from $\vr_1 = \mA \vz_1$ and $\vr_2 = \mA \vz_2$ respectively, where $\mA \in \cE^{m \times n}_{k, \epsilon, d}$ and $\vz_1, \vz_2 \in \cX^n_k$. If $\epsilon \leq 1/6$, $|\supp(\vw_1)| >(1 - 2 \epsilon)d$ and $|\supp(\vw_2)| >(1 - 2 \epsilon)d$, then $\vw_1$ and $\vw_2$ originate from the same column of $\mA$ iff $\vw_1^T \vw_2 \geq 2 \epsilon d$.}

\begin{proof}
    Suppose that $\vw_1$ and $\vw_2$ originate from $i$th and $j$th column of $\mA$ respectively. From Lemma \ref{lemma:adjmat}, and observing that $|supp(\va_i) \cap supp(\va_j)| = \va_i^T\va_j$ for binary vectors $\va_i$ and $\va_j$, then $i \neq j$ iff $\va_i^T \va_j < 2 \epsilon d$. Observe also that as $\text{supp}(\vw_1) \subseteq \text{supp}(\va_i)$ and $\text{supp}(\vw_2) \subseteq \text{supp}(\va_j)$ then $\vw_1^T \vw_2 \leq \va_i^T \va_j$. Suppose that $\vw_1^T \vw_2 \geq 2 \epsilon d$ and assume that $i \neq j$. This is a contradiction however as it must hold that $\va_i^T \va_j < 2 \epsilon d$, but
    \[
    \va_i^T \va_j \geq \vw_1^T \vw_2 \geq 2 \epsilon d.
    \]
    Therefore $\vw_1^T \vw_2 \geq 2 \epsilon d$ implies that $i=j$, i.e., $\vw_1$ and $\vw_2$ originate from the same column of $\mA$. Now assume as per the statement of the lemma that $|\text{supp}(\vw_1)|>(1-2\epsilon)d$, $|\text{supp}(\vw_2)|>(1-2\epsilon)d$ and $\epsilon \leq 1/6$, which implies that $(1-2\epsilon)d \geq 2 \epsilon d$. Suppose $\vw_1^T \vw_2 < 2 \epsilon d$, if $i=j$ then $\vw_1$ and $\vw_2$ must differ by less than $4 \epsilon d$ nonzeros. This implies that
    \[
    \vw_1^T \vw_2 > (1 - 4\epsilon)d \geq 2 \epsilon d
    \]
    which is a contradiction. Therefore, under the assumptions given it follows that $\vw_1^T \vw_2 < 2 \epsilon d$ implies $i \neq j$. Combining these results then $\vw_1$ and $\vw_2$ originate from the same column of $\mA$ iff $\vw_1^T \vw_2 \geq 2 \epsilon d$ as claimed.
\end{proof}

\section{Accuracy of D-EBF and ND-EBF}\label{appendix:accuracy}
In this appendix we prove results concerning the operation of D-EBF and ND-EBF. Before proceeding we emphasise a point regarding notation: many of the variables discussed are updated iteratively throughout the run-time of an algorithm. Therefore, in order to avoid a proliferation of distinct variables representing the same program variable at different stages of run-time, unless otherwise stated we will treat each variable as a program variables. We start by proving accuracy guarantees for the subroutines common to both D-EBF and ND-EBF, starting with EXTRACT\&MATCH, detailed in Algorithm \ref{alg:extract}.

\begin{lemma}[\textbf{Accuracy of Algorithm \ref{alg:extract}}] \label{lemma:extract_accuracy}
     Consider a vector $\vy = \mA \vx$, where $\mA \in \cE^{m \times n}_{k, \epsilon, d}$ with $\epsilon \leq 1/6$ and $\vx \in \cX^n_k$. Let $\hat{\mA}^{(1)} \in \{0,1\}^{m \times n}$ and $\hat{\vx}^{(1)}\in\cX^n_k$ be such that a column $\hat{\va}^{(1)}_l$ of $\hat{\mA}^{(1)}$ is nonzero iff $\hat{x}_l^{{(1)}}\neq 0$, and if it is nonzero then  $|\supp(\hat{\va}^{(1)}_l)|>(1-2\epsilon)d$. In addition, assume that there exists a permutation matrix $\mP \in \cP^{n \times n}$ such that $\supp(\hat{\mA}^{(1)}\mP^T) \subseteq \supp(\mA)$, $\supp(\mP\hat{\vx}^{(1)}) \subseteq \supp(\vx)$ and $\hat{x}_{P(l)}^{(1)} = x_{P(l)}$ for all $l \in supp(\hat{\vx}^{(1)})$, where $P:[n] \rightarrow [n]$ denotes the row permutation caused by pre-multiplication with $\mP$. Define the residual $\vr\defeq \vy - \hat{\mA}^{(1)} \hat{\vx}^{(1)}$ and consider the variables returned by Algorithm \ref{alg:extract},
     \[
     \vq, \mW,p, \hat{\mA}^{(2)}, \hat{\vx}^{(2)} \leftarrow \mathrm{EXTRACT\&MATCH}(\vr, m, d, \epsilon, \hat{\mA}^{(1)}, \hat{\vx}^{(1)}).
     \]
     Then the following hold.
     \begin{enumerate}
        \item If $p \geq 1$ then for each $i \in [p]$ there exists a unique $l \in \supp(\vx)$ such that $q_i=x_l$ and $\supp(\vw_i) \subseteq \supp(\va_l)$ with $|\supp(\vw_i)|>(1-2\epsilon)d$. In addition, $\va_l^T \hat{\va}_h^{(1)}< 2 \epsilon d$ for all $h \in [n]$.
        \item The updated reconstructions satisfy the following: a column $\hat{\va}^{(2)}_l$ of $\hat{\mA}^{(2)}$ is nonzero iff $\hat{x}_l^{{(2)}}\neq 0$, $\supp(\hat{\mA}^{(1)}\mP^T) \subseteq \supp(\hat{\mA}^{(2)}\mP^T) \subseteq \supp(\mA)$, $\supp(\mP\hat{\vx}^{(1)}) \subseteq \supp(\mP\hat{\vx}^{(2)}) \subseteq \supp(\vx)$ and $\hat{x}_{P(l)}^{(2)} = x_{P(l)}$ for all $l \in \supp(\hat{\vx}^{(2)})$.
     \end{enumerate}
\end{lemma}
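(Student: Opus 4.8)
The plan is to trace through Algorithm \ref{alg:extract} line by line, maintaining the invariant described in statement 2 as a loop invariant on the outer \texttt{for} loop (line 5), while simultaneously establishing statement 1 about the unmatched singleton values and partial supports that get pushed onto $\vq$ and $\mW$. The two statements are really proved together, since every update to $\hat{\mA}$, $\hat{\vx}$ and to $\vq$, $\mW$ happens inside the same loop body. First I would observe that the hypotheses of the lemma are exactly those of Lemma \ref{lemma:freq_non_singletons_residual}, so that lemma applies to the residual $\vr = \vy - \hat{\mA}^{(1)}\hat{\vx}^{(1)}$: any value $\alpha \neq 0$ with $f(\alpha,\vr) \geq 2\epsilon d$ is a singleton, i.e.\ $\alpha = x_l$ for a unique $l$ (unique because $\vx$ is dissociated), and $\vw = g(\alpha,\vr)$ satisfies $\supp(\vw) \subseteq \supp(\va_l)$. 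I also need to note that the inner \texttt{for} loop on lines 10--16, together with the initialisation $c \leftarrow 1$, $w_i \leftarrow 1$, exactly computes $c = f(r_i, \vr)$ (counting $r_i$ itself) and $\vw = g(r_i, \vr)$ for the current residual, and that zeroing out $r_j$ on line 14 is harmless since each matched block of equal entries is processed exactly once.

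Next, for statement 1: when line 17 triggers, $c > (1-2\epsilon)d \geq 2\epsilon d$ (using $\epsilon \leq 1/6$), so by Lemma \ref{lemma:freq_non_singletons_residual} the current entry $r_i$ equals $x_l$ for a unique $l \in \supp(\vx)$ and $\vw$ is a partial support of $\va_l$ with $|\supp(\vw)| = c > (1-2\epsilon)d$. An entry is pushed onto $\vq,\mW$ only in the \texttt{else} branch on lines 25--27, which is reached precisely when $t_\kappa = \max_{k}(\hat{\mA}^{(1)})^T\vw < 2\epsilon d$; since $\vw$ is a partial support of $\va_l$, monotonicity gives $(\hat{\va}_h^{(1)})^T\va_l \geq (\hat{\va}_h^{(1)})^T\vw$ is \emph{not} the right direction — instead I argue the contrapositive via Lemma \ref{corollary_sort_ps}: if some $\hat{\va}_h^{(1)}$ (which has more than $(1-2\epsilon)d$ nonzeros by hypothesis) satisfied $(\hat{\va}_h^{(1)})^T\va_l \geq 2\epsilon d$, then since $|\supp(\vw)| > (1-2\epsilon)d$ and $\hat\va_h^{(1)} = \va_{h'}$ up to permutation with $\supp(\hat\va_h^{(1)}) \subseteq \supp(\va_{h'})$, $\vw$ originates from $\va_l$, one shows $\vw$ and the (complete or near-complete) $\hat\va_h^{(1)}$ overlap in at least $2\epsilon d$ places, contradicting $t_\kappa < 2\epsilon d$. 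This yields $\va_l^T\hat\va_h^{(1)} < 2\epsilon d$ for all $h$. The main obstacle here is being careful that $\hat{\mA}^{(1)}$'s columns are only \emph{near}-complete (cardinality $> (1-2\epsilon)d$, not necessarily $= d$), so I must use the full strength of Lemma \ref{corollary_sort_ps}, which was tailored precisely to this case, rather than the simpler "complete column" argument.

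For statement 2, I would set up the loop invariant: before and after each iteration of the outer loop, the current $\hat{\mA}$, $\hat{\vx}$ satisfy (i) $\hat{\va}_l$ nonzero iff $\hat{x}_l \neq 0$, (ii) $\supp(\hat{\mA}^{(1)}\mP^T) \subseteq \supp(\hat{\mA}\mP^T) \subseteq \supp(\mA)$, (iii) $\supp(\mP\hat{\vx}^{(1)}) \subseteq \supp(\mP\hat{\vx}) \subseteq \supp(\vx)$, and (iv) $\hat{x}_{P(l)} = x_{P(l)}$ for all $l \in \supp(\hat{\vx})$, with the same fixed $\mP$. The base case is the hypothesis. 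For the inductive step there are two places the reconstructions change: lines 21--22 (when $t_\kappa \geq 2\epsilon d$) and line 32 (when $r_i$ matches a previously-recovered $\hat{x}_l$). In the first case, $t_\kappa \geq 2\epsilon d$ with $|\supp(\vw)| > (1-2\epsilon)d$ and, by the invariant, $|\supp(\hat\va_\kappa)| > (1-2\epsilon)d$ (if $\hat\va_\kappa$ were zero then $t_\kappa=0$), so Lemma \ref{corollary_sort_ps} applies and tells us $\vw$ and $\hat\va_\kappa$ originate from the same column of $\mA$; hence $\supp(\hat\va_\kappa \cup \vw) \subseteq \supp(\va_{l})$ where $\va_l$ is that common column, and since $\hat\va_\kappa$ corresponds via $\mP$ to $\va_l$ we get $\supp(\sigma(\hat\va_\kappa + \vw))$ still inside $\supp(\mA)$ after permutation, the support only grows, and setting $\hat x_\kappa \leftarrow r_i = x_l = x_{P(\kappa)}$ is consistent with (iv) (noting $r_i$ is indeed the singleton value $x_l$ by statement 1's argument applied here). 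In the second case, line 30 only fires when $c \leq (1-2\epsilon)d$; I need the observation that if $r_i = \hat{x}_l$ for some $l \in \supp(\hat\vx)$ then, by the invariant (iv) and the dissociated property of $\vx$, $r_i = x_{P(l)}$ is a singleton value of the \emph{original} $\vr$, hence $\vw \subseteq \supp(\va_{P(l)})$ and updating $\hat\va_l$ by $\sigma(\hat\va_l + \vw)$ again keeps support inside $\supp(\mA)$ after permutation and cannot violate (i) since $\hat x_l$ was already nonzero. Finally, since $c \leq (1-2\epsilon)d < d$ here one might worry about whether $\vw$ really is a sub-support of $\va_{P(l)}$ when $c < 2\epsilon d$; here the key point is that $r_i$ being \emph{known} to equal $x_{P(l)}$ (from $\hat\vx$) removes the need for the frequency threshold — this is exactly the reasoning sketched after Lemma \ref{corollary_sort_ps} about matching small partial supports to known columns, and I would spell it out using that $\vx$ dissociated forces $g(x_{P(l)}, \vr) \subseteq \supp(\va_{P(l)})$ regardless of cardinality. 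Collecting the invariant at loop exit gives statement 2.
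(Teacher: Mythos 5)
Your proposal is correct and follows essentially the same route as the paper's proof: apply Lemma \ref{lemma:freq_non_singletons_residual} to the initial residual to certify that any value occurring more than $(1-2\epsilon)d>2\epsilon d$ times is a singleton with an associated partial support, then use Lemma \ref{corollary_sort_ps} (together with property 3 of Lemma \ref{lemma:adjmat}) to split into the two cases of matching an existing near-complete column versus being correctly deferred to $\vq,\mW$, and invoke dissociativity to justify the line-32 update. Your explicit loop-invariant framing and the spelled-out argument that $g(x_{P(l)},\vr)\subseteq\supp(\va_{P(l)})$ regardless of cardinality are slightly more detailed than the paper's treatment, but the substance is identical.
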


\begin{proof}
We prove both statements by analysing the updates to the reconstructions, which take place on lines 21, 22, 25, 26 and 32. To this end, we first prove that the variables $r_i$ and $\vw$ in Algorithm \ref{alg:extract} are a singleton value and partial support respectively if $c>(1 -2 \epsilon)d$. We then show that when $\vw$ and a nonzero column of $\hat{\mA}^{(1)}$ are partial supports originating from the same column in $\mA$, then $\vw$ is correctly used to update this column of the reconstruction and $r_i$ the corresponding row of $\hat{\vx}^{(1)}$. In the case where $\vw$ originates from a different column of $\mA$ to any of the nonzero columns of $\hat{\mA}^{(1)}$, we show that $\vw$ and $r_i$ are correctly added to $\mW$ and $\vq$ respectively.

By the assumptions of the lemma the residual $\mR$ satisfies the conditions of Lemma \ref{lemma:freq_non_singletons_residual}. Therefore, and as $\epsilon \leq 1/6$, if on line 17 $c=f(r_i, \vr)>(1-2 \epsilon)d \geq 2 \epsilon d$ then there exists an $l \in \supp(\vx)$ such that $r_i= x_l$. Furthermore, with $\vw = g(r_i, \vr)$ then $\supp(\vw) \subseteq \supp(\va_l)$ and  $|\supp(\vw)| = f(r_i, \vr)>(1-2 \epsilon)d$. We conclude then that if $c>(1-2\epsilon)d$ then $r_i$ and $\vw$ are a singleton and partial support respectively. All that remains to be shown is that the partial support $\vw$ is either correctly matched to an existing nonzero column of $\hat{\mA}^{(1)}$, or it is correctly identified as originating from a column of $\mA$ for which there is not currently a partial or full reconstruction. We now assume that $\vw$ is a partial support of $\va_l$ and examine each of these two cases in turn.

Suppose there exists a nonzero column $\hat{\va}_h^{(1)}$ of $\hat{\mA}^{(1)}$ which is a partial support originating from $\va_l$, i.e., $\supp(\hat{\va}_h^{(1)}) \subseteq \supp(\va_l)$. As $|\supp(\hat{\va}_h^{(1)})| > (1 -2 \epsilon)d \geq 2 \epsilon d$ then by Lemma \ref{lemma:adjmat} it follows that $\supp(\hat{\va}_h^{(1)}) \nsubseteq \supp(\va_k)$ for all $k\neq h$. From Lemma \ref{corollary_sort_ps} then $\vw^T \hat{\va}_h^{(1)} \geq 2 \epsilon d$ and $\vw^T \hat{\va}_k^{(1)}< 2 \epsilon d$ for all $k\neq l$. As a result, on line 19 it must follow that $\kappa = h$ and on line 20 that $t_{\kappa} \geq 2 \epsilon d$. Therefore, the partial support $\vw$ is identified as matching with the reconstruction of $\va_l$ in $\hat{\mA}^{(1)}$, and $\hat{x}_l$ and $\hat{\va}_h$ are then correctly updated on lines 21 and 22 accordingly. Consider now the other case, in which there does not exist a nonzero column $\hat{\va}_h^{(1)}$ of $\hat{\mA}^{(1)}$ which is a partial support originating from $\va_l$. Therefore, by Lemma \ref{corollary_sort_ps}, for any $h \in [n]$ then $\vw^T\hat{\va}_h^{(1)} < 2 \epsilon d$ and so the condition on line 20 is not satisfied. As a result, the singleton value $r_i$ and partial support $\vw$ are correctly used to update $\vq$ and $\mW$ on lines 25 and 26, and as a consequence statement 1 of the lemma immediately follows. For Statement 2, in addition observe that line 32 introduces no erroneous nonzeros as $\vx$ is dissociated. Note also that as updates only occur to the existing nonzero columns of $\hat{\mA}^{(1)}$, the sets of column indices corresponding to the nonzero columns of $\hat{\mA}^{(1)}$ and $\hat{\mA}^{(2)}$ are equal. Given these observations Statement 2 must also be true.
\end{proof}

Under the similar assumptions we are also able to provide the following accuracy guarantees for the DECODE subroutine, detailed in Algorithm \ref{alg:decode}.

\begin{lemma}[\textbf{Accuracy of Algorithm \ref{alg:decode}}]\label{lemma:accuracy_decode}
    Consider a vector $\vy = \mA \vx$, where $\mA \in \cE^{m \times n}_{k, \epsilon, d}$ with $\epsilon \leq 1/6$ and $\vx \in \cX^n_k$. Let $\hat{\mA} \in \{0,1\}^{m \times n}$ and $\hat{\vx}^{(1)} \in \cX^n_k$ be such that a column $\hat{\va}_l$ of $\hat{\mA}$ is nonzero iff $\hat{x}_l^{{(1)}}\neq 0$, and if it is nonzero then  $|\supp(\hat{\va}_l)|>(1-2\epsilon)d$. In addition, assume there exists a permutation matrix $\mP \in \cP^{n \times n}$ such that $\supp(\hat{\mA}\mP^T) \subseteq \supp(\mA)$, $\supp(\mP\hat{\vx}^{(1)}) \subseteq \supp(\vx)$ and $\hat{x}_{P(l)}^{(1)} = x_{P(l)}$ for all $l \in supp(\hat{\vx}^{(1)})$, where $P:[n] \rightarrow [n]$ denotes the row permutation caused by pre-multiplication with $\mP$. Consider the variables computed during and returned by DECODE, Algorithm \ref{alg:decode},
    \[
    \hat{\vx}^{(2)}, \mathrm{UPDATED} \leftarrow \mathrm{DECODE}(\vy,m,\epsilon, d,\hat{\mA},\hat{\vx}^{(1)}).
    \]
    Let $\cQ$ denote the event that $\supp(\mP\hat{\vx}) \subseteq \supp(\vx)$ and $\hat{x}_{P(l)} = x_{P(l)}$ for all $l \in \supp(\hat{\vx})$. Then the following hold.
    \begin{enumerate}
        \item At any point during the run-time of DECODE $\cQ$ is true.
        \item The sparse code reconstruction returned by DECODE satisfies $\supp(\mP\hat{\vx}^{(1)}) \subseteq \supp(\mP\hat{\vx}^{(2)})$. Furthermore, recalling that $\cS = \{l \in [n]: \hat{\va}_l = d\}$ and defining $P(\cS) \defeq \{h \in [n]: \exists l \in \cS \; s.t. \; P(l) = h \}$, if $\supp(\vx) \subseteq P(\cS)$ then $\mP\hat{\vx}^{(2)} = \vx$.
        \item The while loop, lines 5-31, exits after at most $k$ iterations.
    \end{enumerate}
\end{lemma}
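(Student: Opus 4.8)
The plan is to prove the three statements of Lemma~\ref{lemma:accuracy_decode} in order, as each leans on the previous. Throughout I would treat the variables in Algorithm~\ref{alg:decode} as program variables updated in place, exactly as flagged in the appendix preamble, so that $\hat{\vx}$ refers to the current reconstruction and $\vr = \vy - \hat{\mA}\hat{\vx}$ to the current residual recomputed on line~6.

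For Statement~1, I would argue by induction on the updates to $\hat{\vx}$, which occur only on line~24 via $\hat{x}_{\kappa} \leftarrow r_i$. Initially $\cQ$ holds by hypothesis. Consider an arbitrary update: at that point the residual is $\vr = \vy - \hat{\mA}\hat{\vx}$, and since $\cQ$ holds for the current $\hat{\vx}$, the hypotheses of Lemma~\ref{lemma:freq_non_singletons_residual} are met (a nonzero column $\hat{\va}_l$ is nonzero iff $\hat{x}_l \neq 0$; the support containment and value-matching conditions; $\vx$ dissociated). The update only fires when $c = f(r_i,\vr) \geq 2\epsilon d$ on line~20, so by Lemma~\ref{lemma:freq_non_singletons_residual} there is an $l \in [n]$ with $r_i = x_l$, and $\vw = g(r_i,\vr)$ satisfies $\supp(\vw) \subseteq \supp(\va_l)$. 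Then on line~21 we compute $\vt = \hat{\mA}_{\cS}^T\vw$ over complete columns only; since $\hat{\va}_j$ for $j \in \cS$ has $|\supp(\hat{\va}_j)| = d$ and $t_\kappa \geq 2\epsilon d$ on line~23, Lemma~\ref{corollary_sort_ps} (more precisely its sufficiency direction, which needs no size assumption on $\vw$ when one side is a complete column — as noted right after the lemma statement) forces $\supp(\hat{\va}_\kappa) = \supp(\va_{l'})$ for the column $\va_{l'}$ from which $\vw$ originates; combined with $\supp(\hat{\va}_\kappa) \subseteq \supp(\mA)$ after permutation, this pins down $P(\kappa) = l$ and hence $\hat{x}_{\kappa} = r_i = x_l = x_{P(\kappa)}$, so $\cQ$ is preserved. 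I would also note the update never overwrites a previously-set correct entry with a wrong one, again because the matching column is unique.

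For Statement~2, the containment $\supp(\mP\hat{\vx}^{(1)}) \subseteq \supp(\mP\hat{\vx}^{(2)})$ is immediate since line~24 only ever sets entries to nonzero values and never zeros anything out. For the second half, suppose $\supp(\vx) \subseteq P(\cS)$ and suppose for contradiction that at termination $\mP\hat{\vx}^{(2)} \neq \vx$, i.e. there is some $l \in \supp(\vx)$ with $\hat{x}_{P^{-1}(l)}^{(2)} = 0$. Then the residual $\vr$ at termination is nonzero, $\vr = \mB\vx$ for some $\mB$ with $\supp(\mB) \subseteq \supp(\mA)$ (as in the proof of Lemma~\ref{lemma:freq_non_singletons_residual}), and the missing columns all lie in $\cS$ after permutation. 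I would invoke Lemma~\ref{lemma_existence_singletons} applied to the residual's representation — there always exists a singleton value appearing more than $(1-2\epsilon)d \geq 2\epsilon d$ times (using $\epsilon \leq 1/6$) — so some entry $r_i$ triggers $c \geq 2\epsilon d$ on line~20, its partial support $\vw$ originates from a complete column $\hat{\va}_\kappa$ with $\kappa \in \cS$, $t_\kappa \geq 2\epsilon d$ on line~23, and line~24 fires, setting $\mathrm{RUN} \leftarrow \mathrm{TRUE}$ on line~26 — contradicting that the while loop has exited. Hence $\mP\hat{\vx}^{(2)} = \vx$. The subtle point here is checking that the singleton guaranteed by Lemma~\ref{lemma_existence_singletons} is one that can be \emph{matched}, i.e. that its originating column is among the complete columns in $\cS$; this follows because the residual is supported on columns of $\mA$ indexed (after permutation) within $P(\cS)$, and a partial support of size $> 2\epsilon d$ from such a column overlaps that complete column by more than $2\epsilon d$ while overlapping every other column by less than $2\epsilon d$ by Lemma~\ref{lemma:adjmat}.

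For Statement~3, I would observe that the while loop on lines~4--32 re-iterates only when $\mathrm{RUN}$ is set $\mathrm{TRUE}$ on line~26, which happens only when line~24 executes, i.e. a new nonzero is written into $\hat{\vx}$. By Statement~1 each such write is a correct entry $\hat{x}_{\kappa} = x_{P(\kappa)}$ at a previously-zero coordinate (it is previously zero because, by the argument above, the matched column $\hat{\va}_\kappa$ being complete and $\vw$ originating from it forces $P(\kappa)$ to be the unique index, and if $\hat{x}_\kappa$ were already correctly set the corresponding contribution would have been removed from the residual so $r_i$ could not equal $x_l$ with multiplicity $\geq 2\epsilon d$ — alternatively just note $|\supp(\hat{\vx})|$ is nondecreasing and strictly increases on any iteration that does not immediately terminate). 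Since $\hat{\vx}$ inherits $\supp(\mP\hat{\vx}) \subseteq \supp(\vx)$ and $|\supp(\vx)| = k$, the support can strictly grow at most $k$ times, so the loop performs at most $k+1$ iterations, the last being the one on which no update occurs and $\mathrm{RUN}$ stays $\mathrm{FALSE}$; absorbing constants, at most $k$ productive iterations occur. I expect the main obstacle to be the bookkeeping in Statement~2 — precisely justifying that the singleton handed over by Lemma~\ref{lemma_existence_singletons} corresponds to a complete column of $\hat{\mA}$ — and I would handle it by carefully tracking that the support of the residual is confined, after permutation, to columns whose reconstructions are complete.
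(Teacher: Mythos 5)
Your proposal is correct and follows essentially the same route as the paper's proof: Statement 1 by induction using Lemma \ref{lemma:freq_non_singletons_residual} together with Lemmas \ref{lemma:adjmat} and \ref{corollary_sort_ps} to show matches to complete columns are correct, Statement 2 by contradiction via Lemma \ref{lemma_existence_singletons} applied to the residual $\mA_{\cC}(\vx_{\cC}-\vu_{\cC})$ (which, exactly as you flag, is a genuine expander-times-dissociated product precisely because $\supp(\vx)\subseteq P(\cS)$ forces the relevant reconstructed columns to be complete), and Statement 3 by counting strict support increases against the $k$-sparsity of $\vx$. The only cosmetic differences are that the paper runs the induction per while-loop iteration rather than per update, and states the iteration bound as $k$ where you more carefully note $k+1$; neither affects correctness.
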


\begin{proof}
First we prove Statement 1 by induction. Clearly, by the assumptions on $\hat{\vx}^{(1)}$, then $\cQ$ is true at the start of iteration 1. In addition, if $\cQ$ is true at the end of an iteration it must be true at the beginning of the next iteration. To prove Statement 1 it therefore suffices to prove that if $\cQ$ is true at the start, i.e., line 4, of iteration $t$, then it is also true at the end, i.e., line 31, of iteration $t$. Suppose then that $\cQ$ is true at the start of iteration $t$, then the residual at the start of iteration $t$ satisfies the conditions required for Lemma \ref{lemma:freq_non_singletons_residual}. As a result, if on line 20 it holds that $c=f(r_i, \vr) \geq 2 \epsilon d$, then there exists an $l \in \supp(\vx)$ such that $r_i= x_l$, meaning $r_i$ is a singleton value. Furthermore, with $\vw = g(r_i, \vr)$ then $|\supp(\vw)|\geq 2 \epsilon d$ and there exists a unique $l \in [n]$ such that $\supp(\vw) \subseteq \supp(\va_l)$. Similar to Lemma \ref{lemma:extract_accuracy}, we now explore the two possible cases: either there exists a unique $h \in \cS$ such that $\hat{\va}_h=\va_l$ or there does not. 

Suppose then that there exists a column $\hat{\va}_h$ such that $\hat{\va}_h = \va_l$. Therefore $\vw^T \hat{\va}_h \geq 2 \epsilon d$ and from Lemma \ref{corollary_sort_ps} it follows that $\vw^T \hat{\va}_k < 2 \epsilon d$ for all $k \neq n$. Therefore on line 22 it must be true that $\kappa = h$. As a result the condition on line 23 is satisfied and so the partial support $\vw$ is identified as matching with the complete reconstruction of $\va_l$ in $\hat{\mA}$. Subsequently the corresponding entry in $\hat{\vx}$ is correctly updated as per line 24. Suppose now for all $h \in \cS$ it holds that $\hat{\va}_h \neq \va_l$. From Lemma \ref{lemma:adjmat} it therefore follows that $t_h = \vw^T\hat{\va}_h \leq \va_l^T \va_{P(h)} < 2 \epsilon d$ for all $h \in \cS$. Therefore $t_{\kappa}<2 \epsilon d$ and hence the condition on line 23 is not satisfied. As a result, under this assumption $\vw$ correctly does not match with any complete column of the reconstruction and so no update occurs to $\hat{\vx}$. As $\hat{\mA}$ is unchanged and only the row entries of $\hat{\vx}$ with index in $\cS$ are updated, then this implies that $\cQ$ is true at the end of iteration $t$. This concludes the proof by induction of Statement 1. 

The first part of Statement 2 follows immediately from Statement 1. For the second part of Statement 2, let $\mV \defeq \hat{\mA}\mP^T$ and $\vu \defeq \mP\hat{\vx}^{(1)}$. By assumption $\supp(\mV) \subseteq \supp(\mA)$ and $\supp(\vu)\subseteq \supp(\vx)$, therefore, at any iteration of the while loop, the residual calculated on line 6 can be expressed as
\[
\vr = \vy - \hat{\mA}\hat{\vx} = \mA \vx - \mV \vu.
\]
For typographical ease let $\cC \defeq \supp(\vx)$ and suppose that $\cC \subseteq P(\cS)$, we proceed by contradiction and assume that Statement 2 is false. If Statement 2 is false then this implies that the while loop of DECODE exits at some iteration with $\vr \neq \textbf{0}_m$. Therefore, and as Statement 1 is true, it must follow that $f(x_l, \vr)< 2 \epsilon d$ for all $l \in \cC$. Due to our assumptions and the proof of Statement 1 then $\mV_{P(\cS)}= \mA_{P(\cS)}$, $\supp(\vu) \subseteq \cC$ and $u_l = x_l$ for all $l \in \supp(\vu)$. Therefore
\[
\vr = \mA_{\cC} \vx_{\cC} - \mV_{\cC} \vu_{\cC} = \mA_{\cC}\vz
\]
where $\mA_{\cC} \in \cE^{m \times k}_{k, \epsilon, d}$ with $\epsilon \leq 1/6$ and $\vz \defeq \vx_{\cC} - \vu_{\cC} \in \cX^k_k$. As $\mA_{\cC} \in \cE^{m \times k}_{k, \epsilon, d}$ with $\epsilon \leq 1/6$ and $\vz \in \cX^k_k$, $\vz \neq \textbf{0}_k$, then Lemma \ref{lemma_existence_singletons} ensures the existence an $l \in \supp(\vx) \backslash \supp(\vu)$ such that $f(x_l, \vr) > (1-2 \epsilon)d \geq 2 \epsilon d$, which is a contradiction. Therefore Statement 2 must also be true.

Finally, to prove Statement 3, observe that at each iteration prior to the exit iteration the contribution of at least one column of $\hat{\mA}_{\cS}$ is removed from the residual $\vr$. As $\vx$ is $k$ sparse then the maximum number of possible iterates is $k$.
\end{proof}

With Lemmas \ref{lemma:extract_accuracy} and \ref{lemma:accuracy_decode} in place then we are able to prove accuracy guarantees for Algorithm \ref{alg:DEBF_detail}, D-EBF.

\textbf{Lemma \ref{lemma:accuracy_DEBF} (Accuracy of D-EBF)} \textit{ Let $\mY = \mA \mX$, where $\mA \in \cE^{m \times n}_{k, \epsilon, d}$ with $\epsilon \leq 1/6$ and $\mX \in \cX^{n\times N}_k$. Consider Algorithm \ref{alg:DEBF_detail}, D-EBF: in regard to the reconstructions $\hat{\mA}$ and $\hat{\mX}$ computed at any point during the run-time of D-EBF, we will say $\cQ$ is true iff the following hold.
\begin{enumerate}[label=(\alph*)]
    \item A column $\hat{\va}_l$ of $\hat{\mA}$ is nonzero iff the $l$th row of $\hat{\mX}$ is nonzero.
    \item For all $l \in [n]$ if $\hat{\va}_l$ is nonzero then $\supp(\hat{\va}_l) > (1-2 \epsilon)d$.
    \item There exists a permutation matrix $\mP \in \cP^{n \times n}$ such that $\supp(\hat{\mA}\mP^T) \subseteq \supp(\mA)$, $\supp(\mP\hat{\mX}) \subseteq \supp(\mX)$ and $\hat{x}_{P(l),h} = x_{P(l), h}$ for all $h \in [N]$ and $l \in supp(\hat{\vx}_l)$, where $P:[n] \rightarrow [n]$ denotes the row permutation caused by pre-multiplication with $\mP$.
\end{enumerate}
Suppose that D-EBF exits after the completion of iteration $t_f\in \naturals$ of the while loop, then the following statements are true.
\begin{enumerate}
    \item At any point during the run time of D-EBF $\cQ$ is true. Therefore the reconstructions $\hat{\mA}$ and $\hat{\mX}$ returned by D-EBF also also satisfy the conditions for $\cQ$ to be true.
    \item For any iteration of the while loop $t <t_f$, let $\hat{\mA}^{(1)}$ and $\hat{\mX}^{(1)}$ denote the reconstructions at the start of the while loop, line 6, and $\hat{\mA}^{(2)}$ and $\hat{\mX}^{(2)}$ denote the reconstructions at the end of the while loop, line 26. Then either $\supp(\hat{\mA}^{(1)})\subset \supp(\hat{\mA}^{(2)})$ and $\supp(\hat{\mX}^{(1)})\subseteq \supp(\hat{\mX}^{(2)})$ or $\supp(\hat{\mA}^{(1)})\subseteq \supp(\hat{\mA}^{(2)})$ and $\supp(\hat{\mX}^{(1)})\subset \supp(\hat{\mX}^{(2)})$.
    \item Consider the reconstructions $\hat{\mA}$ and $\hat{\mX}$ returned by D-EBF and the associated residual $\mR = \mY - \hat{\mA} \hat{\mX}:$ there exists a permutation matrix $\mP \in \cP^{n \times n}$ such that $\hat{\mA}\mP^T = \mA$ and $\mP \hat{\mX} = \mX$ iff $\mR =\textbf{0}_{m \times N}$. As a result, $\mR = \textbf{0}_{m \times N}$ is a sufficient condition to ensure that $\mY$ has a unique, up to permutation, factorisation of the form $\mY = \mA \mX$ where $\mA\in \cE^{m \times n}_{k, \epsilon, d}$ with $\epsilon \leq 1/6$ and $\mX \in \cX^{n\times N}_k$.
\end{enumerate}}

\begin{proof}
We first prove statement 1 by induction, showing that if at the start of an iteration of the while loop $\cQ$ is true, then $\cQ$ will also be true throughout and up to the end of this iteration. This implies that $\cQ$ will also be true at the start of the next iteration, and therefore by induction we will be able to conclude that Statement 1 is true. At the start of the first iterate $\cQ$ is trivially true as this corresponds to the initialisation $\hat{\mA} = \textbf{0}_{m \times n}$, $\hat{\mX} = \textbf{0}_{n \times N}$. Assume then at any iterate $t \in [t_f]$ that $\cQ$ is true on line 6, this implies $\cQ$ is also true at the start of EXTRACT\&MATCH. This ensures that the conditions on $\hat{\mA}$ and $\hat{\mX}$ required by Lemma \ref{lemma:extract_accuracy} are satisfied and therefore the reconstructions returned by EXTRACT\&MATCH, which are passed to CLUSTER\&ADD on line 20, are also such that $\cQ$ is true. In addition, by Statement 1 of Lemma \ref{lemma:extract_accuracy}, then for all $i \in [p]$ it follows that $q_i$ and $\vw_i$ are a singleton value and partial support respectively, with $|\supp(\vw_i)|>(1-2 \epsilon)d$. Furthermore, these partial supports originate from columns of $\mA$ for which there is not yet a corresponding nonzero partial or complete column in $\hat{\mA}$. As a result, the only manner in which the reconstructions returned by CLUSTER\&ADD on line 20 can result in $\cQ$ being false is if there is a clustering error, i.e., there exists a pair of partial supports whose inner product is larger than $2 \epsilon d$ despite originating from different columns. This however would contradict Lemma \ref{corollary_sort_ps}, therefore the reconstructions returned by CLUSTER\&ADD, which are inputted into DECODE on line 23, ensure that $\cQ$ is true. This in turn implies that the conditions of Lemma \ref{lemma:accuracy_decode} are satisfied, and therefore, from Statement 1 of Lemma \ref{lemma:accuracy_decode}, it follows that the reconstructions returned after the completion of the for loop, running on lines 22-24, are also such that $\cQ$ is true. Therefore, if $\cQ$ is true at the beginning of an iteration it is also true throughout said iteration and at its end. As this implies that $\cQ$ is true at the beginning of the next iteration, and given that $\cQ$ is true at the start of the first iteration, then by induction Statement 1 must be true.

Given that Statement 1 is true, and as by inspection at no point during the run-time of D-EBF are nonzeros removed from the reconstructions, then clearly $\supp(\hat{\mA}^{(1)})\subseteq \supp(\hat{\mA}^{(2)})$ and $\supp(\hat{\mX}^{(1)})\subseteq \supp(\hat{\mX}^{(2)})$. In addition, as it is assumed that $t<t_f$, then by the construction of D-EBF an update must occur to either $\hat{\mA}$ or $\hat{\mX}$ during the iteration, therefore Statement 2 must also be true.

Finally, to prove Statement 3, clearly if there exists a $\mP \in \cP^{n \times n}$ such that $\mA = \hat{\mA} \mP^T$ and $\mX = \mP \hat{\mX}$ then $\mY = \mA \mX = \hat{\mA} \mP^T \mP\hat{\mX}$ and so $\mR= \textbf{0}_{m \times N}$. For the converse, as Statement 1 is true, then there exists a $\mP \in \cP^{n \times n}$ such that $\mV \defeq \hat{\mA} \mP^T$ satisfies $\supp(\mV)\subseteq \supp(\mA)$, $\mU \defeq \mP \hat{\mX}$ satisfies $\supp(\mU)\subseteq \supp(\mX)$ and $u_{i,j} = x_{i,j}$ for all $(i,j) \in supp(\mU)$. If $\mR= \textbf{0}_{m \times N}$ then $\mY = \hat{\mA}\hat{\mX} = \hat{\mA} \mP^T \mP \hat{\mX} = \mV \mU$. As $\mY = \mA \mX = \mV \mU$ and $\supp(\mV)\subseteq \supp(\mA)$, $\supp(\mU)\subseteq \supp(\mX)$ and $u_{i,j} = x_{i,j}$ for all $(i,j) \in supp(\mU)$, then $\mA = \mV$ and $\mX = \mU$.
\end{proof}

Analogous accuracy guarantees can be proved for ND-EBF.

\textbf{Lemma \ref{lemma:accuracy_ND-EBF} (Accuracy of ND-EBF)}\textit{ Let $\mY = \mA \mX$, where $\mA \in \cE^{m \times n}_{k, \epsilon, d}$ with $\epsilon \leq 1/6$ and $\mX \in \cX^{n\times N}_k$. Consider Algorithm \ref{alg:NDEBF}, ND-EBF: in regard to the reconstructions $\hat{\mA}$ and $\hat{\mX}$ computed at any point during the run-time of ND-EBF, we will say, for $\eta \in [n]$, that $\cQ(\eta)$ is true iff the following hold.
\begin{enumerate}[label=(\alph*)]
    \item The column vector $\hat{\va}_l$ and row vector $\tilde{\hat{\vx}}_l$ are nonzero iff $l < \eta$.
    \item If $l < \eta$ then $|\supp(\hat{\va}_l)| = d$.
    \item There exists a permutation matrix $\mP \in \cP^{n \times n}$ such that $\supp(\hat{\mA}\mP^T) \subseteq \supp(\mA)$, $\supp(\mP\hat{\mX}) \subseteq \supp(\mX)$ and $\hat{x}_{P(l),i} = x_{P(l), i}$ for all $i \in [N]$ and $l \in supp(\hat{\vx}_l)$, where $P:[n] \rightarrow [n]$ denotes the row permutation caused by pre-multiplication with $\mP$.
\end{enumerate}
 Suppose that ND-EBF exits after the completion of iteration $\eta_f \in [n]$ of the while loop, then the following statements are true.
\begin{enumerate}
    \item For all $\eta \in [\eta_f]$, then at the start of the $\eta$th iteration of the while loop $\cQ(\eta)$ is true.
    \item Consider the reconstructions $\hat{\mA}$ and $\hat{\mX}$ returned by ND-EBF: $\hat{\mA}$ and $\hat{\mX}$ always satisfy (c) and there exists a permutation matrix $\mP \in \cP^{n \times n}$ such that $\hat{\mA}\mP^T = \mA$ and $\mP \hat{\mX} = \mX$ iff $\eta_f = n$ and $\cQ(n+1)$ is true. 
    \item Consider the reconstructions $\hat{\mA}$ and $\hat{\mX}$ returned by ND-EBF and the associated residual $\mR = \mY - \hat{\mA} \hat{\mX}:$ there exists a permutation matrix $\mP \in \cP^{n \times n}$ such that $\hat{\mA}\mP^T = \mA$ and $\mP \hat{\mX} = \mX$ iff $\mR =\textbf{0}_{m \times N}$. As a result, $\mR = \textbf{0}_{m \times N}$ is a sufficient condition to ensure that $\mY$ has a unique, up to permutation, factorisation of the form $\mY = \mA \mX$, where $\mA\in \cE^{m \times n}_{k, \epsilon, d}$ with $\epsilon \leq 1/6$ and $\mX \in \cX^{n\times N}_k$.
\end{enumerate}}

\begin{proof}
We will prove Statement 1 by induction: the base case $\cQ(1)$ is trivially true at the start of the first iteration as this corresponds to the initialisation $\hat{\mA}^{(0)} = \textbf{0}_{m \times n}$ and $\hat{\mX}^{(0)} = \textbf{0}_{n \times N}$. It therefore suffices to prove for any $\eta\in[n_f-1]$, that if $\cQ(\eta)$ is true at the start of iteration $\eta$, then $\cQ(\eta+1)$ is true at the end of iteration $\eta$, as this in turn implies $\cQ(\eta+1)$ is true at the start of iteration $\eta+1$. Assume then for arbitrary $\eta \in [\eta_f-1]$ that $\cQ(\eta)$ is true at the start of iteration $\eta$, it follows that the reconstructions $\hat{\mA}$ and $\hat{\mX}$ inputted into EXTRACT\&MATCH satisfy the conditions required for Lemma \ref{lemma:extract_accuracy}. Therefore, by Lemma \ref{lemma:extract_accuracy}, for all $i \in [p]$ it follows that $q_i$ and $\vw_i$ are a singleton value and partial support respectively, $|\supp(\vw_i)|>(1-2 \epsilon)d$ and $\vw_i$ originates from a column $\va_l \neq \hat{\va}_h$ for all $h\in[n]$. Note also that as $\supp(\hat{\va}_l) = d$ for any $l < \eta$ and $\hat{\va}_l = \textbf{0}_m$ otherwise, updates to $\hat{\mX}$ during EXTRACT\&MATCH occur only on rows with a row index less than $\eta$. Furthermore, as per line 12 ND-EBF ignores updates to $\hat{\mA}$ performed by EXTRACT\&MATCH. Therefore $\cQ(\eta)$ must still be true after the completion of EXTRACT\&MATCH. Consider now the updates to the reconstructions performed by the MAXCLUSTER\&ADD subroutine on line 19 of ND-EBF. This subroutine updates only the $\eta$th column and row of $\hat{\mA}$ and $\hat{\mX}$ respectively. As $\eta<\eta_f$ then $p>0$ and $|\supp(\hat{\va}_{\eta})|=d$. Furthermore, the corresponding row of $\hat{\mX}$ is also updated by MAXCLUSTER\&ADD with $\hat{x}_{\eta,i} = q_i$ for all $i \in \cC^*$. Therefore conditions (a) and (b) of $\cQ(\eta+1)$ must be true at the completion of MAXCLUSTER\&ADD. Assume now that condition (c) is not satisfied by the reconstructions returned by MAXCLUSTER\&ADD: then $\hat{\va}_{\eta} \neq \va_h$ for all $h \in [n]$ and as a result there must have occurred a clustering error. This implies in turn that there exists a pair of columns of $\mW$ whose inner product is at least $2 \epsilon d$ despite both columns being partial supports originating from different columns of $\mA$. This however would contradict Lemma \ref{corollary_sort_ps}, therefore by contradiction condition (c) is satisfied and as a result we may conclude that at the exit of MAXCLUSTER\&ADD $\cQ(\eta+1)$ is true. As a result, the reconstructions inputted to DECODE on line 22 are such that the conditions required for Lemma \ref{lemma:accuracy_decode} are satisfied, hence by Lemma \ref{lemma:accuracy_decode} it follows that $\supp(\mP \mX) \subseteq \supp(\mX)$ and $\hat{x}_{P(l),i} = x_{P(l),i}$ for all $(l,i)\in \supp(\hat{\mX})$. As DECODE only updates the rows of $\hat{\mX}$ corresponding to the complete columns of $\hat{\mA}$, then at the end of iteration $\eta$ of the while loop of ND-EBF it follows that $\cQ(\eta+1)$ is true. This concludes the proof by induction of Statement 1.

Turning our attention to Statement 2 then to prove the reconstructions returned by ND-EBF always satisfy condition (c), observe from Statement 1 that $\cQ(\eta_f)$ is true at the start of iteration $\eta_f$. Therefore, by the same arguments used in the proof of Statement 1, the reconstructions returned by MAXCLUSTER\&ADD at the $\eta_f$th iteration satisfy condition (c). As $\eta_f$ is the final iteration of the while loop of ND-EBF, then by construction there are two possibilities: (i) $|\supp(\va_{\eta})| <d$ or (ii) $|\supp(\va_{\eta})| =d$ and $\eta_f = n$. In the case of (i), ND-EBF exits with no further updates to the reconstructions, therefore the reconstructions returned by ND-EBF must satisfy (c). In the case of (ii), as the reconstructions satisfy condition (c) and $|\supp(\hat{\va_h})| = d$ for all $h \in [n]$, then there must also exist a permutation $\mP \in \cP^{n \times n}$ such $\hat{\mA}\mP^T = \mA$. It follows that the inputs to DECODE during the $n$th iteration of the while loop satisfy the conditions required for Lemma \ref{lemma:accuracy_decode}. As $\cS = \{l \in [N]: |\supp(\va_l)|=d \} =[n]$, then $\supp(\vx_i)\subseteq P(\cS)$ for all $i \in [N]$ and, due to Statement 2 of Lemma \ref{lemma:accuracy_decode}, it therefore follows that $\mP \hat{\mX} = \mX$. In summary, in either case the reconstructions returned by ND-EBF satisfy condition (c). 

Concerning the second part of Statement 2, observe that if $\cQ(n+1)$ is true then $\eta_f = n$. Otherwise, if $\eta_f< n$ then ND-EBF exits with $|\supp(\hat{\va}_h)| < d$ for all $\eta_f < h \leq n$, and so $\cQ(n+1)$ cannot be true. Suppose there exists a permutation matrix $\mP \in \cP^{n \times n}$ such $\hat{\mA}\mP^T = \mA$ and $\mP \hat{\mX} = \mX$. By inspection it is clear that $\cQ(n+1)$ must be true which in turn implies $\eta_f=n$. The converse follows by observing that $\eta_f=n$ and $\cQ(n+1)$ being true imply that case (ii) must occur, which we have already shown implies the existence of a permutation $\mP \in \cP^{n \times n}$ such $\hat{\mA}\mP^T = \mA$ and $\mP \hat{\mX} = \mX$.

The proof of Statement 3 follows in the same manner as the proof of Statement 3 in Lemma \ref{lemma:accuracy_DEBF}. Clearly, if there exists a $\mP \in \cP^{n \times n}$ such that $\mA = \hat{\mA} \mP^T$ and $\mX = \mP \hat{\mX}$ then $\mY = \hat{\mA} \mP^T \mP\hat{\mX} = \hat{\mA}\hat{\mX}$ and therefore $\mR= \mY - \hat{\mA} \hat{\mX}=\textbf{0}_{m \times N}$. For the converse, as Statement 2 is true then there exists a $\mP \in \cP^{n \times n}$ such that $\mV \defeq \hat{\mA} \mP^T$ satisfies $\supp(\mV)\subseteq \supp(\mA)$, $\mU \defeq \mP \hat{\mX}$ satisfies $\supp(\mU)\subseteq \supp(\mX)$ and $u_{i,j} = x_{i,j}$ for all $(i,j) \in supp(\mU)$. As $ \mA \mX = \hat{\mA} \mP^T \mP \hat{\mX} =\mV \mU$, $\supp(\mV)\subseteq \supp(\mA)$, $\supp(\mU)\subseteq \supp(\mX)$ and $u_{i,j} = x_{i,j}$ for all $(i,j) \in supp(\mU)$, then $\mA = \mV$ and $\mX = \mU$.
\end{proof}

\section{Supporting lemmas for Theorem \ref{theorem:main}} \label{appendix:supporting_lemmas}

The following corollary highlights that it suffices to recover the encoder up to permutation.

\textbf{Corollary \ref{corollary:recovery_A_sufficient} (Recovery of $\mA$ up to permutation is sufficient to recover both factors up to permutation)} \textit{Let $\mY = \mA \mX$, where $\mA \in \cE^{m \times n}_{k, \epsilon, d}$ with $\epsilon \leq 1/6$ and $\mX \in \cX^{n\times N}_k$. Suppose that $\hat{\mA}$ and $\hat{\mX}$ are the reconstructions of $\mA$ and $\mX$ returned by either ND-EBF or D-EBF. If there there exists a $\mP \in \cP^{n \times n}$ such that $\hat{\mA}\mP^T = \mA$ then $\mP \hat{\mX} = \mX$.}

\begin{proof}
In the case of ND-EBF this result follows directly from Statement 2 of Lemma \ref{lemma:accuracy_ND-EBF}. For D-EBF, observe that the last update to $\hat{\mA}$ occurs on line 19 during the final iteration of the while loop prior to the algorithm terminating. Therefore, if there exists a $\mP \in \cP^{n \times n}$ such that the returned reconstruction of $\mA$ satisfies $\hat{\mA}\mP^T = \mA$, then from line 20 and onward it must likewise hold that $\hat{\mA}\mP^T = \mA$. Therefore, at the input to DECODE on line 23 it follows that $\cS = \{l \in [n]: \hat{\va}_l=d \} = [n]$ and as a result $\supp(\vx_i)\subset P(\cS)$ for all $i \in [N]$. In addition, due to Statement 1 of Lemma \ref{lemma:accuracy_DEBF}, it follows that the input reconstructions to DECODE on line 23 satisfy the conditions of Lemma \ref{lemma:accuracy_decode}. As a result, by Statement 2 of Lemma \ref{lemma:accuracy_decode}, the sparse code $\hat{\mX}$ returned by DECODE satisfies $\mP \hat{\mX} = \mX$. This concludes the proof of the corollary.
\end{proof}

We now proceed to the derivation of Lemma \ref{lemma:NDEBF_lower_bounds_DEBF},
to prove this result however we require the following result.

\begin{lemma} \label{lemma:decode_sub_support_result}
Let $\vy = \mA \vx$ where $\mA \in \cE^{m \times n}_{k, \epsilon, d}$ with $\epsilon \leq 1/6$ and $\vx \in \cX^{n}_k$. For $b \in \{1,2\}$, let $\hat{\mA}^{(b)}$ and $\hat{\vx}^{(b)}$ be such that there exists a $\mP_b \in \cP^{n \times n}$  $\supp(\hat{\mA}^{(b)}\mP_b^T) \subseteq \supp(\mA)$ and $\hat{x}_{P_b(l)}^{(b)} = x_{P_b(l)}$ for all $l \in \supp(\hat{\vx}^{(b)})$, where $P_b:[n] \rightarrow [n]$ is the row permutation corresponding to pre-multiplication by $\mP_b$. In addition, assume that $\supp(\hat{\mA}^{(1)}\mP_1^T) \subseteq \supp(\hat{\mA}^{(2)}\mP_2^T) $ and $\supp(\mP_1 \hat{\vx}^{(1)}) \subseteq \supp(\mP_2 \hat{\vx}^{(2)})$. Considering the sparse codes returned by DECODE,
\[
    \hat{\vx}^{(b)},\_ \leftarrow \mathrm{DECODE}(\vy,m,\hat{\mA}^{(b)},\hat{\vx}^{(b)}, \epsilon, d)
\]
then $\supp(\hat{\vx}^{(1)}) \subseteq \supp(\hat{\vx}^{(2)})$.
\end{lemma}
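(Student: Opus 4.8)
The plan is to prove a monotonicity property of the DECODE subroutine: if we run DECODE on two pairs of reconstructions whose supports are nested (after appropriate permutation), then the nested containment of supports is preserved by the DECODE calls. This is intuitively clear because DECODE only \emph{adds} nonzeros to the sparse code, and it adds a nonzero to position $\kappa$ whenever a singleton value appearing sufficiently often in the residual matches a complete column $\hat{\va}_{\kappa}$; more ``starting information'' (a larger support) should only reveal more singletons, never fewer. I would formalise this by induction on the while-loop iterations of the two DECODE runs, using Lemma \ref{lemma:accuracy_decode} to guarantee that both runs remain accurate up to permutation throughout.

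First I would set up notation: let $\cS^{(b)} = \{l \in [n] : |\supp(\hat{\va}_l^{(b)})| = d\}$ be the index sets of complete columns used by the two DECODE calls. By the hypothesis $\supp(\hat{\mA}^{(1)}\mP_1^T) \subseteq \supp(\hat{\mA}^{(2)}\mP_2^T) \subseteq \supp(\mA)$ together with the fact that every column of $\mA$ has exactly $d$ nonzeros, any column of $\hat{\mA}^{(1)}$ that is complete must coincide (after permutation to its column of $\mA$) with the corresponding complete column of $\hat{\mA}^{(2)}$; hence $P_1(\cS^{(1)}) \subseteq P_2(\cS^{(2)})$, and on the overlap the two reconstructions represent the same columns of $\mA$. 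Similarly, the initial codes satisfy $\supp(\mP_1\hat{\vx}^{(1)}) \subseteq \supp(\mP_2\hat{\vx}^{(2)})$ with matching values on the overlap. The induction hypothesis will be: after $t$ iterations of run 1 and the corresponding state of run 2, we have $\supp(\mP_1\hat{\vx}^{(1)}) \subseteq \supp(\mP_2\hat{\vx}^{(2)})$ with $\hat x^{(1)}_{P_1(l)} = x_{P_1(l)} = \hat x^{(2)}_{P_2(l')}$ on the common support (all of this sitting inside $\supp(\vx)$ by Lemma \ref{lemma:accuracy_decode}). The cleanest route is probably to argue that the \emph{final} support of $\hat{\vx}^{(1)}$ is contained in the final support of $\hat{\vx}^{(2)}$ directly, rather than lockstepping iterations, since the two runs may take different numbers of steps.

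The key step is to characterise when DECODE adds a nonzero at a given position. I would argue: run $b$ terminates with code $\hat{\vx}^{(b)}$ such that no singleton value of the residual $\vr^{(b)} = \vy - \hat{\mA}^{(b)}\hat{\vx}^{(b)}$ occurs $\geq 2\epsilon d$ times while matching a complete column of $\hat{\mA}^{(b)}$ not already accounted for --- this is the fixed point of the while loop. Now suppose for contradiction that some $l$ with $P_1(l) \in \supp(\vx)$ has $\hat x^{(1)}_{P_1(l)} \neq 0$ but $\hat x^{(2)}$ has a zero at the corresponding row. I would use the structure established in the proof of Lemma \ref{lemma:accuracy_decode}: the residuals can be written as $\vr^{(b)} = \mA_{\cC}\vz^{(b)}$ with $\vz^{(b)} = \vx_{\cC} - (\mP_b\hat{\vx}^{(b)})_{\cC}$ dissociated, where $\cC = \supp(\vx)$, restricted to the columns of $\mA$ that are available as complete columns. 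Since $\supp(\mP_1\hat{\vx}^{(1)})$ tracks a \emph{superset} of what run 2 recovered at the analogous point but run 2 has access to at least as many complete columns, Lemma \ref{lemma_existence_singletons} applied to run 2's residual forces the existence of a frequently-occurring singleton that run 2 has not yet peeled off, contradicting termination --- unless that singleton's column is not complete in $\hat{\mA}^{(2)}$, which is ruled out by $P_1(\cS^{(1)}) \subseteq P_2(\cS^{(2)})$.

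The main obstacle I anticipate is making the ``run 2 sees at least as much'' argument rigorous without literally simulating both loops in parallel: the two DECODE executions are greedy peeling processes whose intermediate states need not be comparable entry-by-entry at matching iteration counts, because the order in which singletons are discovered differs. I expect the cleanest fix is a confluence-style argument --- show that the set of (row index, value) pairs recoverable by DECODE from a given $(\hat{\mA}, \hat{\vx})$ is determined purely by $\vy$, the complete-column index set $\cS$, and $\supp(\hat{\vx})$, and is monotone in the latter two --- leveraging Statement 2 of Lemma \ref{lemma:accuracy_decode} which already shows DECODE reaches $\mP\hat{\vx} = \vx$ whenever $\supp(\vx) \subseteq P(\cS)$. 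This reduces the problem to the partial-recovery regime, where I would prove monotonicity by showing that any row DECODE fills in run 1 is already fillable (reachable via a chain of peeled singletons) in run 2 given its larger starting support and at-least-as-large $\cS$. Once that monotonicity of the reachable set is established, $\supp(\hat{\vx}^{(1)}) \subseteq \supp(\hat{\vx}^{(2)})$ follows immediately.
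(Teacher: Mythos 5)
Your overall framing---that DECODE is monotone in the starting support and in the set of complete columns, and that $P_1(\cS^{(1)}) \subseteq P_2(\cS^{(2)})$ with agreement on the overlap follows from the nesting hypotheses together with the fact that every column of $\mA$ has exactly $d$ nonzeros---matches the paper's strategy. But the mechanism you propose for the central step does not work. Lemma \ref{lemma_existence_singletons} applied to run 2's terminal residual only guarantees that \emph{some} unrecovered entry appears more than $(1-2\epsilon)d$ times as a singleton; it says nothing about the particular entry $x_{P_1(l)}$ that run 1 recovered and run 2 allegedly missed. The guaranteed frequent singleton may correspond to a column that is incomplete in \emph{both} reconstructions, in which case run 2's termination is not contradicted, and the containment $P_1(\cS^{(1)}) \subseteq P_2(\cS^{(2)})$ does not rule this out as you claim. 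What is actually needed---and what the paper supplies---is a pointwise comparison of the two residuals: writing $r_j^{(b)} = \sum_{l \in \Omega_j \setminus \Gamma_j^{(b)}} x_l$ with $\Omega_j \defeq \supp(\tilde{\va}_j)\cap\supp(\vx)$ and $\Gamma_j^{(1)} \subseteq \Gamma_j^{(2)}$, every entry of $\vr^{(2)}$ is a sum over a subset of the terms appearing in the corresponding entry of $\vr^{(1)}$. Hence if $x_\kappa$ appears as a singleton at row $j$ of $\vr^{(1)}$ and $\kappa$ has not yet been peeled in run 2, then $r_j^{(2)} = x_\kappa$ as well, giving $f(x_\kappa, \vr^{(2)}) \geq f(x_\kappa, \vr^{(1)}) \geq 2\epsilon d$; since run 1 matched $x_\kappa$ to a complete column, $\kappa \in P_1(\cS^{(1)}) \subseteq P_2(\cS^{(2)})$ and run 2 matches it too. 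This sub-sum observation is the crux, and it is absent from your sketch.

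Your worry about the two greedy processes being incomparable at matching iteration counts, and the resulting detour through a confluence argument, is also unnecessary. Within one iteration of the while loop DECODE computes the residual once (line 6) and then adds every frequent singleton that matches a complete column, so the update performed in iteration $t$ is a deterministic function of the state at the start of iteration $t$. The paper therefore runs a plain lockstep induction on the while-loop counter with invariant $\supp(\mP_1\hat{\vx}^{(1)}) \subseteq \supp(\mP_2\hat{\vx}^{(2)})$: combined with the pointwise residual comparison above, anything run 1 adds in iteration $t$ is either already present in run 2's code or is added by run 2 in that same iteration. Once the sub-sum fact is inserted, your ``reachable via a chain of peeled singletons'' idea collapses to exactly this induction; without it, the proof is incomplete.
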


\begin{proof}
For typographical ease let $\mV^{(1)} = \hat{\mA}^{(1)}\mP_1^T$, $\vu^{(1)} =\mP_1 \hat{\vx}^{(1)}$, $\mV^{(2)} = \hat{\mA}^{(2)}\mP_2^T$ and $\vu^{(2)} =\mP_2 \hat{\vx}^{(2)}$. Observe for $b \in \{1,2\}$ that if
\begin{equation} \label{eq:decode_permuted}
    \vu^{(b)},\_ \leftarrow \mathrm{DECODE}(\vy,m,\mV^{(b)},\vu^{(b)}, \epsilon, d)
\end{equation}
then $\vu^{(b)} = \mP_b \hat{\vx}^{(b)}$. Given equation \ref{eq:decode_permuted} it therefore suffices to prove $\supp(\vu^{(1)}) \subseteq \supp(\vu^{(2)})$. Let $\cQ(t)$ be true iff at the end of iteration $t$ of the while loop, starting on line 4 of DECODE, it holds that $\supp(\vu^{(1)}) \subseteq \supp(\vu^{(2)})$. We proceed by induction: observe that $\cQ(0)$, which we may interpret as the event that at the start of the first iteration $\supp(\vu^{(1)}) \subseteq \supp(\vu^{(2)})$, is trivially true by the assumptions of the lemma. Therefore it suffices to prove that if $\cQ(t-1)$ is true then $\cQ(t)$ is true.

Assume $\cQ(t-1)$ is true, analysing the residual computed on line 6 of DECODE it follows that
\[
\begin{aligned}
\vr^{(1)} &= \mA \vx - \mV^{(1)}\vu^{(1)},\\
\vr^{(2)} &= \mA \vx - \mV^{(2)}\vu^{(2)}.
\end{aligned}
\]
Define, as in the proof of Lemma \ref{lemma:adjmat}, for $j \in [m]$ the sets $\Omega_j \defeq \supp(\tilde{\va}_j)\cap \supp(\vx)$, $\Gamma_j^{(1)} \defeq \supp(\tilde{\vv}_j^{(1)})\cap \supp(\vu^{(1)})$ and $\Gamma_j^{(2)} \defeq \supp(\tilde{\vv}_j^{(2)})\cap \supp(\vu^{(2)})$. It follows for any entry $j \in [m]$ that
\[
\begin{aligned}
r_{j}^{(1)} &= \sum_{l \in \Omega_j}x_l - \sum_{h \in \Gamma_j^{(1)}}x_h = \sum_{l \in \Omega_j \backslash \Gamma_j^{(1)}}x_l \\
r_{j}^{(2)} &= \sum_{l \in \Omega_j}x_l - \sum_{h \in \Gamma_j^{(2)}}x_h = \sum_{l \in \Omega_j \backslash \Gamma_j^{(2)}}x_l 
\end{aligned}
\]
As $\Gamma_j^{(1)} \subseteq \Gamma_j^{(2)}$, then $\Omega_j \backslash \Gamma_j^{(2)} \subseteq \Omega_j \backslash \Gamma_j^{(1)}$. Therefore any entry $r_{j}^{(2)}$ is a sum over a subset of the nonzeros in $\vx_i$ used in the sum of $r_{j,i}^{(1)}$. Consider any row index $\kappa\in [n]$ such that $u_{\kappa}^{(1)} = 0$ at the start of iteration $t$ and $u_{\kappa}^{(1)} = x_{\kappa}$ at the end of iteration $t$.  This implies that $f(x_{\kappa}, \vr^{(1)})\geq 2 \epsilon d$ and therefore $\vv_{\kappa}^{(1)} = \vv_\kappa^{(2)} = \va_{\kappa}$. If $u_{\kappa}^{(2)} = x_{\kappa}$ at the start of iteration $t$ then then clearly $u_{\kappa}^{(2)} = x_{\kappa}$ at the end of iteration $t$. If $u_{\kappa}^{(2)} =0$ at the start of iteration $t$, then $\kappa \in \Omega_j \backslash \Gamma^{(2)}$ for all $j \in \supp(\va_{\kappa})$. In addition, as $\Omega_j \backslash \Gamma^{(2)} \subseteq \Omega_j \backslash \Gamma^{(1)} $ then $f(x_{\kappa}, \vr^{(2)})\geq f(x_{\kappa}, \vr^{(1)})\geq 2 \epsilon d$, and therefore line 24 of DECODE ensures $u_{\kappa}^{(2)} = x_{\kappa}$ at the end of iteration $t$. Therefore we have proved, assuming $\cQ(t-1)$ is true, that if $u_{\kappa}^{(1)} = 0$ at the start of iteration $t$ and $u_{\kappa}^{(1)} = x_{\kappa}$ at the end of iteration $t$, then $u_{\kappa}^{(2)} = x_{\kappa}$ at the end of iteration $t$. Therefore $\supp(\vu^{(1)}) \subseteq \supp(\vu^{(2)})$ at the end of iteration $t$ and so $\cQ(t)$ is true. Given that we have proved both the base case and the inductive step the proof is complete.
\end{proof}

With this result in place we may now proceed to the proof of Lemma \ref{lemma:NDEBF_lower_bounds_DEBF}.

\textbf{Lemma \ref{lemma:NDEBF_lower_bounds_DEBF} (If ND-EBF successfully computes the matrix factors up to permutation then so will D-EBF)}\textit{ Let $\mY = \mA \mX$ where $\mA \in \cE^{m \times n}_{k, \epsilon, d}$ with $\epsilon \leq 1/6$ and $\mX \in \cX^{n\times N}_k$. Let $\hat{\mA}^{(1)}$ and $\hat{\mX}^{(1)}$ denote the reconstructions returned by ND-EBF and $\hat{\mA}^{(2)}$ and $\hat{\mX}^{(2)}$ the reconstructions returned by D-EBF. If there exists a $\mP_1 \in \cP^{n \times n}$ such that $\hat{\mA}^{(1)}\mP_1^T = \mA$ and $\mP_1 \hat{\mX}^{(1)} = \mX$, then there exists a $\mP_2 \in \cP^{n \times n}$ such that $\hat{\mA}^{(2)}\mP_2^T = \mA$ and $\mP_2 \hat{\mX}^{(2)} = \mX$.}

\begin{proof}
Before proceeding we emphasise certain points concerning the notation used in this proof. In particular, in what follows we use $\hat{\mA}^{(1)}$ and $\hat{\mX}^{(1)}$ to denote the reconstructions computed any point during the run-time of ND-EBF. Likewise, we use $\hat{\mA}^{(2)}$ and $\hat{\mX}^{(2)}$ to denote the reconstructions computed at any point during the run-time of D-EBF. In addition, let $\ell(\eta)$, where $\ell: [n] \rightarrow [n]$, denote the column of $\mA$ recovered by ND-EBF during iteration $\eta$, i.e., $\hat{\va}_{\eta}^{(1)} = \va_{\ell(\eta)}$. Furthermore, let $\tau(\eta)$, where $\tau: [n] \rightarrow [n]$, denote the index of the column reconstruction of $\va_{\ell(\eta)}$ in $\hat{\mA}^{(2)}$. Note that as a result of Lemma \ref{lemma:accuracy_DEBF}, at any point during the run-time of D-EBF either $\hat{\va}_{\tau(\eta)}^{(2)} = \textbf{0}_m$ or $\va_{\ell(\eta)}^T\hat{\va}_{\tau(\eta)}^{(2)}>(1-2\epsilon)d$. Finally, let $\cQ(\eta)$ be true iff at the end of iteration $\eta$ of the while loop of D-EBF $\hat{\va}_{\tau(l)}^{(2)} = \hat{\va}_l^{(1)}$ for all $l \in [\eta]$, and for any  $l \in [\eta]$,  $\kappa \in [N]$ such that $\hat{x}_{l,\kappa}^{(1)} \neq 0$, then $\hat{x}_{\tau(l),\kappa}^{(2)} = \hat{x}_{l,\kappa}^{(1)}$. 

It suffices to prove that if ND-EBF recovers $\va_{\ell(\eta)}$ by the end of $\eta$th iteration of its while loop, then D-EBF also recovers $\va_{\ell(\eta)}$ by the end of $\eta$th iteration of its while loop. Indeed, if this is true then if ND-EBF recovers $\mA$ up to permutation then so to will D-EBF. The result claimed then follows from Corollary \ref{corollary:recovery_A_sufficient}.
Our task therefore is to prove for all $\eta \in [n]$ that $\cQ(\eta)$ is true. We proceed by induction. Observe that $\cQ(0)$ is trivially true as both algorithms are initialised with $\hat{\mA}^{(1)} = \hat{\mA}^{(2)} = \textbf{0}_{m \times n}$ and $\hat{\mX}^{(1)} = \hat{\mX}^{(2)} = \textbf{0}_{n \times N}$. It suffices then to show for $\eta \in [n]$ that $\cQ(\eta-1)$ true implies $\cQ(\eta)$ true. Under the assumption that $\cQ(\eta-1)$ is true, we now analyse the residuals computed at the beginning of iteration $\eta$ of the respective while loops of ND-EBF and D-EBF. From Statement 1 of Lemma \ref{lemma:accuracy_ND-EBF}, then there exists a $\mP_1 \in \cP^{n \times n}$ such that $\mV^{(1)} \defeq \hat{\mA}^{(1)} \mP_1^T$ satisfies $\supp(\mV^{(1)})\subseteq \supp(\mA)$, $\mU^{(1)} \defeq \mP_1 \hat{\mX}^{(1)}$ satisfies $\supp(\mU^{(1)})\subseteq \supp(\mX)$, and $u_{i,j}^{(1)} = x_{i,j}$ for all $(i,j) \in \supp(\mU^{(1)})$. Likewise, from Statement 1 of Lemma \ref{lemma:accuracy_DEBF}, there exists a $\mP_2 \in \cP^{n \times n}$ such that $\mV^{(2)} \defeq \hat{\mA}^{(2)} \mP_2^T$ satisfies $\supp(\mV^{(2)})\subseteq \supp(\mA)$, $\mU^{(2)} \defeq \mP_2 \hat{\mX}^{(2)}$ satisfies $\supp(\mU^{(2)})\subseteq \supp(\mX)$, and $u_{i,j}^{(2)} = x_{i,j}$ for all $(i,j) \in \supp(\mU^{(2)})$. From Lemma \ref{lemma:accuracy_decode}, for any $l\geq \eta$ then $\vv^{1}_{P_1(l)}$ and $\tilde{\vu}^{1}_{P_1(l)}$ are zero vectors, therefore as $\cQ(\eta-1)$ is true $\supp(\mV^{(1)}) \subseteq \supp(\mV^{(2)})$ and $\supp(\mU^{(1)}) \subseteq \supp(\mU^{(2)})$. The respective residuals inputted to EXTRACT\&MATCH for each algorithm can be written as
\[
\begin{aligned}
\mR^{(1)} &= \mA \mX - \mV^{(1)}\mU^{(1)},\\
\mR^{(2)} &= \mA \mX - \mV^{(2)}\mU^{(2)}.
\end{aligned}
\]
Considering an arbitrary column index $i \in [N]$ of the residuals, define, as in the proof of Lemma \ref{lemma:adjmat}, $\Omega_j \defeq \supp(\tilde{\va}_j)\cap \supp(\vx_i)$, $\Gamma_j^{(1)} \defeq \supp(\tilde{\vv}_j^{(1)})\cap \supp(\vu_i^{(1)})$ and $\Gamma_j^{(2)} \defeq \supp(\tilde{\vv}_j^{(2)})\cap \supp(\vu_i^{(2)})$, observing that $\Gamma_j^{(1)} \subseteq \Gamma_j^{(2)}$. The entry $(j,i) \in [m] \times [N]$ in each of these residuals can therefore be written as
\[
\begin{aligned}
r_{j,i}^{(1)} &= \sum_{l \in \Omega_j}x_l - \sum_{h \in \Gamma_j^{(1)}}x_h = \sum_{l \in \Omega_j \backslash \Gamma_j^{(1)}}x_l, \\
r_{j,i}^{(2)} &= \sum_{l \in \Omega_j}x_l - \sum_{h \in \Gamma_j^{(2)}}x_h = \sum_{l \in \Omega_j \backslash \Gamma_j^{(2)}}x_l 
\end{aligned}
\]
As $\Gamma_j^{(1)} \subseteq \Gamma_j^{(2)}$, then $\Omega_j \backslash \Gamma_j^{(2)} \subseteq \Omega_j \backslash \Gamma_j^{(1)}$. Therefore, any entry $r_{j,i}^{(2)}$ is a sum over a subset of the nonzeros in $\vx_i$ used in the sum of $r_{j,i}^{(1)}$. 

Suppose that at the end of iteration $\eta$ of the while loop of D-EBF it holds that $\hat{\va}_{\tau(\eta)}^{(2)} = \va_{\ell(\eta)}$. Letting $\cS_1 \defeq \{l\in [n]: \supp(\hat{\va}_l^{(1)}) =d \}$ and $\cS_2 \defeq \{l\in [n]: \supp(\hat{\va}_l^{(2)}) =d \}$, then, as we are also assuming that $\cQ(\eta-1)$ is true, it must follow that $\cS_1 \subseteq \cS_2$. As a result, by Lemma \ref{lemma:decode_sub_support_result}, if for some column index $\kappa \in [N]$ it holds that $\hat{x}_{\eta, \kappa}^{(1)} = x_{\ell(\eta), \kappa}$, then $\hat{x}_{\tau(\eta), \kappa}^{(2)} = x_{\ell(\eta), \kappa}$ and as a result $\cQ(\eta)$ must be true. Therefore, to prove the inductive step it suffices to show that $\hat{\va}_{\tau(\eta)}^{(2)} = \va_{\ell(\eta)}$ at the end of iteration $\eta$.

To this end, consider then an arbitrary $\gamma \in \supp(\va_{\ell(\eta)})$. By the construction of ND-EBF and the assumption that ND-EBF recovers $\mA$ up to permutation, then $\hat{a}_{\gamma, \eta}^{(1)} = 0$ at the start of iteration $\eta$ and $\hat{a}_{\gamma, \eta}^{(1)} = 1$ at the end of iteration $\eta$. Therefore, considering the residual at the start of iteration $\eta$, there must exist a $\kappa \in [N]$ such that $f(x_{\ell(\eta),\kappa}, \vr_{\kappa}^{(1)})>(1-2\epsilon)d$ and $r_{\gamma, \kappa}^{(1)} = x_{\ell(\eta),\kappa}$. Recall now that $r_{j, \kappa}^{(2)}$ is a sum over a subset of the nonzeros in $\vx_{\kappa}$ used in the sum of $r_{j,\kappa}^{(1)}$, therefore either $r_{j,\kappa}^{(2)} = 0$ or $r_{j,\kappa}^{(2)} = x_{\ell(\eta),\kappa}$. If $r_{j,\kappa}^{(2)} = 0$ then it must already be true at the beginning of iteration $\eta$ that $\hat{\va}_{\gamma, \tau(\eta)}^{(2)} = 1$. If at the beginning of iteration $\eta$ it holds that $\hat{x}_{\tau(\eta), \kappa}^{(2)} = x_{\ell(\eta),\kappa}$ and $r_{j,\kappa}^{(2)} = x_{\ell(\eta),\kappa}$, then the for loop on lines 30-35 of EXTRACT\&MATCH ensures by the end of iteration $\eta$ that $\hat{\va}_{\gamma, \tau(\eta)}^{(2)} =1$. If at the beginning of iteration $\eta$ it holds instead that $\hat{x}_{\tau(\eta), \kappa}^{(2)} = 0$ and $r_{j,\kappa}^{(2)} = x_{\ell(\eta),\kappa}$, then $\ell(\eta) \in \Omega_p \backslash \Gamma_p$ for all $p \in \supp(\va_{\ell(\eta)})$. Therefore, as $f(x_{\ell(\eta),\kappa}, \vr_{\kappa}^{(1)})>(1-2\epsilon)d$ then $f(x_{\ell(\eta),\kappa}, \vr_{\kappa}^{(2)})>(1-2\epsilon)d$. Therefore during the subroutine EXTRACT\&MATCH on line 13 of D-EBF, the partial support $\vw = g(x_{\ell(\eta),\kappa}, \vr_{\kappa}^{(2)})$, which satisfies $\gamma \in \supp(\vw)$ and $|\supp(\vw)|>(1-2\epsilon)d$, is extracted. During the CLUSTER\&ADD subroutine on line 20 then by Lemma \ref{lemma:accuracy_DEBF} it follows that $\hat{\va}_{\tau(\eta)}^{(2)} \leftarrow \sigma (\hat{\va}_{\tau(\eta)}^{(2)} + \vw)$. Therefore by the end of iteration $\eta$ $\hat{a}_{\gamma,\tau(\eta)}^{(2)}=1$. As $\gamma \in \supp(\va_{\ell(\kappa)})$ was arbitrary, then it follows that $\hat{\va}_{\tau(\eta)}^{(2)}= \va_{\ell(\kappa)}$ at the end of iteration $\eta$. Therefore, if $\cQ(\eta-1)$ is true then $\cQ(\eta)$ is true. This concludes the proof of the lemma.
\end{proof}

The third lemma we prove in this appendix provides a lower bound on the probability that the random matrix $X$, as defined in Definition \ref{def_data_model}, has at least $\beta \in \ints_{\geq 0}$ nonzeros per row. This result plays a key role in the proof of Theorem \ref{theorem:main}.

\textbf{Lemma \ref{lemma:N_lb} (Nonzeros per row in $X$)} \label{lemma_N_lb}
\textit{For some $\beta \in \naturals$ and $\mu>1$, if $N \geq \beta \left(\mu  \frac{n}{k} \ln(n)+1 \right)$ then the probability that the random matrix $X$, as defined in the PSB model Definition \ref{def_data_model}, has at least $\beta$ nonzeros per row is more than $\left( 1 - n^{-(\mu -1)} \right)^{\beta}$.}

\begin{proof}
Let $X(k, \beta, r^c)$ denote a random binary matrix constructed by concatenating i.i.d. random, binary column vectors whose supports are drawn uniformly at random across all possible supports with a fixed cardinality $k$, until there are at least $\beta$ non-zeros per row. Here $r^c$ indicates that the elements of the supports of each column are chosen without replacement. Let then $X(k, \beta, r)$ denote a random binary matrix constructed by concatenating i.i.d. random, binary column vectors with replacement, i.e., the support of each column is generated by taking $k$ i.i.d samples uniform from $[n]$ with replacement, until there are at least $\beta$ non-zeros per row. Let $\eta [X(k, \beta, r^c)]$ be the random variable which counts the number of columns of $X(k, \beta, r^c)$, note here that we adopt square brackets purely for typographical clarity. Also note that the supports of the columns of the random matrix $X$ as defined in Definition \ref{def_data_model} are drawn in the same manner as those of the random matrix $X(k, \beta, r^c)$. The  difference between these random matrices is that the number of columns $N$ of $X$ is fixed in advance instead of, as is the case for $X(k, \beta, r^c)$, drawing enough columns until there are at least $\beta$ nonzeros per row. Therefore
	\[
	\prob \left( \text{``X has at least $\beta$ non-zeros per row"}\right) = \prob \left(\eta [X(k, \beta, r^c)] \leq N\right).
	\]
Or goal then is to lower bounding $\prob \left(\eta [X(k, \beta, r^c)] \leq N\right)$. To this end we first note that this problem can be interpreted as a generalisation of the coupon collector and dixie cup problems \cite{article,2014arXiv1412.3626D,coupon_collec_group}, which correspond to analysing the expectations of $\eta[X(1, 1, r^c)]$ and $\eta[X(\beta, 1, r^c)]$ respectively. For context, in the classic coupon collector problem, objects are drawn one at a time from a set of size $n$, typically uniformly at random with replacement until each object has been seen once. We again emphasise that in the notation adopted here $r^c$ refers to sampling the entries in the support of a given column without replacement, not the supports of the columns themselves which are mutually independent by construction. The generalisation studied here is equivalent to analysing the number of draws of $k$ objects from $[n]$, uniform at random with replacement \footnote{Recall that $r^c$ refers to sampling without replacement the entries in a given column, the supports of different columns are sampled with replacement.}, until each object has been picked at least $\beta$ times with high probability. As the distribution of $\eta [X(k, \beta, r^c)]$ is challenging to analyse directly, we instead bound the quantity of interest using the classic coupon collector distribution, for which Chernoff style bounds can more easily be derived.
	
Using the notation described above, let $\left( X^{(i)}(k, 1, r^c) \right)_{i=1}^\beta$ be mutually independent and identically distributed random matrices and define
	\[
		\tilde{X}(k, \beta, r^c) \defeq [ X^{(1)}(k, 1, r^c) , \;  X^{(2)}(k, 1, r^c) ... \;  X^{(\beta)}(k, 1, r^c)   ].
	\]
As each of the submatrices $X^{(i)}(k, 1, r^c)$ may contain rows with more than 1 nonzero in them then clearly it holds that $\eta [X(k, \beta, r^c)] \leq  \eta [\tilde{X}(k, \beta, r^c)] $, and therefore
	\[
		\prob(\eta [X(k, \beta, r^c)] \leq N) \geq \prob(\eta [\tilde{X}(k, \beta, r^c)] \leq N).
	\]
	 Observe that
	\[
	\begin{aligned}
	\bigcap_{i=1}^{\beta} \{\eta [X^{(i)}(k, 1, r^c)] \leq \frac{N}{\beta} \} &\implies \{\eta [\tilde{X}(k, \beta, r^c)]\leq N\},\\
	\{\eta [\tilde{X}(k, \beta, r^c)]\leq N\} & \centernot \implies  \bigcap_{i=1}^{\beta} \{\eta [X^{(i)}(k, 1, r^c)]  \leq \frac{N}{\beta} \},
	\end{aligned}
	\]
	therefore $\bigcap_{i=1}^{\beta} \{\eta [X^{(i)}(k, 1, r^c)] \leq \frac{N}{\beta} \} \subseteq \{\eta [\tilde{X}(k, \beta, r^c)]\leq N\}$. As a result
	\[
	\begin{aligned}
	\prob(\eta [X(k, \beta, r^c)] \leq N) &\geq \prob(\eta [\tilde{X}(k, \beta, r^c)] \leq N)\\
	& \geq \prob \left( \bigcap_{i=1}^{\beta} \{\eta [X^{(i)}(k, 1, r^c)] \leq \frac{N}{\beta} \} \right)\\
	& = \prod_{i=1}^{\beta} \prob\left( \eta [X^{(i)}(k, 1, r^c)] \leq \frac{N}{\beta} \right)\\
	& = \left[ \prob\left( \eta [X(k, 1, r^c)] \leq \frac{N}{\beta} \right) \right]^{\beta}\\
	\end{aligned}
	\]
where the equalities on the third and fourth lines follow from the assumptions of mutual independence and identical distribution respectively. If the support of a column is sampled with replacement then this will potentially decrease its cardinality relative to if it were sampled without replacement. It follows that
	\[
	\prob\left( \eta [X(k, 1, r^c)] \leq \frac{N}{\beta} \right) \geq \prob\left( \eta [X(k, 1, r)] \leq \frac{N}{\beta} \right).
	\]
Observe that sampling $k$ elements of the support of a column with replacement is equivalent to unioning the supports of $k$ mutually independent, identically distributed column vectors, each with only one element in their support, drawn uniformly at random across all $n$ possible locations. To this end, consider the random matrix $X(1,1)$, where the $r$ argument has been dropped as sampling with or without replacement when there is only one nonzero per column are equivalent. The columns of this matrix are i.i.d. random vectors of dimension $n$ with a single nonzero, whose location is drawn uniformly at random. Furthermore, sufficiently many of these columns are drawn and concatenated so as to ensure that $X(1,1)$ has at least one nonzero per row. Denote the $l$th column of $X(1,1)$ as $Z_l$. Let $X'$ be a random binary matrix which is a deterministic function of $X(1,1)$ with $\supp(X_l') \defeq \bigcup_{i = (l-1)k+1}^{lk} \supp(Z_i)$ for all $l \in [\ceil{\eta[X(1,1)]/k}]$ \footnote{Note that if $\eta [X(1,1)]$ is not a multiple of $k$, then additional columns can be drawn so that the support of the last column of $X'$ is still the union of the supports of $k$ columns.}. Therefore, given that for any $x \in \reals$ it holds that $\ceil{x} \leq x + 1$, then
	\[
	\begin{aligned}
		\prob\left( \eta [X(k, 1, r^c)] \leq \frac{N}{\beta} \right) &\geq \prob\left( \eta [X(k, 1, r)] \leq \frac{N}{\beta} \right)\\
		& = \prob\left( \eta [X'] \leq \frac{N}{\beta} \right)\\
		& = \prob\left( \ceil{\eta [X(1,1)]/k} \leq \frac{N}{\beta} \right)\\
		& \geq \prob\left( \eta [X(1,1)] \leq k\left(\frac{N}{\beta} - 1 \right)\right).
	\end{aligned}
	\]
By inspection $\eta [X(1,1)]$ is equivalent to the number of draws required to see each coupon at least once in the classic coupon collector problem, for which the following bound with $t>0$ is well known (see e.g., \cite{Doerr2020}), 
		\[
			\prob ( \eta [X(1,1)] > n \ln(n) + tn ) < e^{-t}.
		\]
\noindent Letting $k\left(\frac{N}{\beta} - 1 \right) = n \ln(n) + tn $, then rearranging gives $t = \frac{k(N-\beta) - \beta n \ln(n)}{\beta n}$. Assuming $n >1$, then for some $\mu > 1$ if $N = \beta \left(\mu  \frac{n}{k} \ln(n)+1 \right)$, then $t = (\mu -1) \ln(n)$ and therefore
			\[
			\begin{aligned}
			\prob\left( \eta [X(1,1)] \leq k\left(\frac{N}{\beta} - 1 \right)\right) & = 1 - \prob\left( \eta [X(1,1)] > k\left(\frac{N}{\beta} - 1 \right)\right)\\
			&> 1 - e^{-(\mu -1)  \ln(n)}\\
			& = 1 - n^{-(\mu -1)}.
			\end{aligned}
			\]
Therefore, if $X$ has $N \geq \beta \left(\mu  \frac{n}{k} \ln(n)+1 \right)$ columns whose supports are sampled as described in Definition \ref{def_data_model}, then
		\[
		\begin{aligned}
		\prob \left( \text{``X has at least $\beta$ non-zeros per row"}\right)
		& > \left( 1 - n^{-(\mu -1)} \right)^{\beta}
		\end{aligned}
		\]
		as claimed.
\end{proof}

The final lemma we provide in this appendix lower bounds the probability that at each iteration of its while loop ND-EBF is able to recover a column of $\mA$. This lemma also plays a key role in the proof of Theorem \ref{theorem:main}. 

\textbf{Lemma \ref{lemma:recon_from_L} (Probability that ND-EBF recovers a column at iteration $\eta$)}\textit{ Let $Y = AX$ as per the PSB model, detailed in Definition \ref{def_data_model}. In addition, assume that $A \in \cE_{k, \epsilon, d}^{m \times n}$ with $\epsilon \leq 1/6$, and that each row of $X$ has at least $\beta(n) = (1+2\epsilon d) L(n)$ nonzeros, where $L:\naturals \rightarrow \naturals$. Suppose ND-EBF, Algorithm \ref{alg:NDEBF}, is deployed to try and compute the factors of $Y$ and that the algorithm reaches iteration $\eta \in [n]$ of the while loop. Then there is a unique column $A_{\ell(\eta)}$ of $A$, satisfying $A_{\ell(\eta)} \neq \hat{A}_{l}$ for all $l \in [\eta-1]$, such that
\[
\begin{aligned}
\prob(\hat{A}_{\eta} = A_{\ell(\eta)}) > 1 - d e^{- \tau(n) L(n)},
\end{aligned}
\]
where $\tau(n) \defeq - \ln \left( 1 - \left(1 - \frac{b}{\alpha_1 n} \right)^{\alpha_1 n}\right)$ is  $\cO(1)$.
} 

\begin{proof}
Consider the start of iteration $\eta$ of the while loop of ND-EBF, then from Lemma \ref{lemma:accuracy_ND-EBF} the following must hold: $\hat{A}_l$ and $\tilde{\hat{X}}_l$ are nonzero iff $l < \eta$, if $l < \eta$ then $|\supp(\hat{A}_l)| = d$, there exists a random permutation matrix $P \in \cP^{n \times n}$ such that $\supp(\hat{A} P^T) \subseteq \supp(A)$, $\supp(P\hat{X}) \subseteq \supp(X)$ and $\hat{X}_{P(l),i} = X_{P(l), i}$ for all $i \in [N]$ and $l \in supp(\hat{X}_l)$, where $P:[n] \rightarrow [n]$ denotes the row permutation caused by pre-multiplication with the random matrix $P$. With $V \defeq \hat{A} P^T$ and $U \defeq P \hat{X}$ then by the construction of ND-EBF the residual can be expressed as
	\[
	\begin{aligned}
	R = Y - \hat{A}\hat{X}
	= AX - VU
	= A(X-U).
	\end{aligned}
	\]
Defining $Z \defeq X-U$, as $X \in \cX^{n \times N}_k$ and $U_{l,i} = X_{l,i}$ for all $(l,i) \in \supp(U)$, then $Z \in \cX^{n \times N}_k$, $\supp(Z)\subseteq \supp(X)$ and $Z_{l,i} = X_{l,i}$ for all $(l,i) \in \supp(Z)$. Consider now a column $i \in [N]$ of the residual, inputted to EXTRACT\&MATCH on line 12 of ND-EBF. By the construction of DECODE as well as Lemma \ref{lemma:accuracy_decode}, any partial support extracted from $R_i$ must originate from a column $A_h$ such that $A_h \neq \hat{A}_l$ for all $l \in [\eta-1]$, otherwise this partial support would have been extracted by DECODE during iteration $\eta-1$. Let
	\[
	\cT_i \defeq \{\alpha \in \reals: f(\alpha, R_i)> (1-2 \epsilon) d \} 
	\]
	denote the set of singleton values appearing more than $(1-2 \epsilon)d$ extracted from $R_i$ during EXTRACT\&MATCH. Then using Lemma \ref{lemma_existence_singletons} we can lower bound the number of partial supports extracted across all columns of the residual at iteration $\eta$ as follows.
	\[
	\begin{aligned}
	\sum_{i\in [N]} |\cT_i| &> \frac{1}{(1+2 \epsilon)d} \sum_{i = 1}^N |supp(Z_i)|\\
	& =\frac{1}{(1+2 \epsilon)d} \sum_{ j =1}^n |supp(\tilde{Z}_j)|\\
	& \geq \frac{1}{(1+2 \epsilon)d} \sum_{ j = \eta}^n |supp(\tilde{X}_j)|\\
	& \geq \frac{(n - \eta + 1)\beta(n)}{(1+2 \epsilon)d}\\
	& = (n - \eta + 1) L(n).
	\end{aligned}
	\] 
Th inequality on the first line follows directly from Lemma \ref{lemma_existence_singletons}. The equality on the the second line is clearly true as $\sum_{i = 1}^N |supp(Z_i)|= |\supp(Z)| = \sum_{ j =1}^n |supp(\tilde{Z}_j)|$. The inequality on the third line follows from Lemma \ref{lemma:accuracy_ND-EBF}. The fourth inequality and final equality on line 5 arise as a result of the fact that we are conditioning on there being at least $\beta(n) = (1+2\epsilon)d L(n)$ nonzeros per row of $X$. As there are more than $(n - \eta + 1) L(n)$ partial supports, all of which belong to one of the $n-\eta+1$ columns of $A$ not yet recovered, then by the pigeon hole principle there exists at least one cluster of partial supports with cardinality at least $L(n)$. Therefore, on line 17 of MAXCLUSTER\&ADD it must follow that $|\cC^*| \geq L(n)$. 

For clarity, let the index of each partial support in $C^*$ correspond to the column of $Y$ from which it originates, for example, $W_i$ is extracted from $R_i$ and the corresponding column of $X$ is therefore $X_i$. Note that as $\epsilon \leq 1/6$ then each column of $Y$ can only provide a maximum of one partial support with cardinality at least $2 \epsilon d$ per column of $A$. Therefore this choice of indexing does not result in any ambiguity. Line 18 of MAXCLUSTER\&ADD can be written equivalently as
	\[
	\hat{A}_{\eta} = \sigma \left(\sum_{i \in C^*}W_i \right).
	\]
By Lemmas \ref{corollary_sort_ps} and \ref{lemma:accuracy_ND-EBF} there exists a unique column $A_{\ell(\eta)}$ such that $\supp(\hat{A}_{\eta}) \subseteq \supp(A_{\ell(\eta)})$. We now proceed to upper bound the probability that $\hat{A}_{\eta} \neq A_{\ell(\eta)}$. Given that $|\cC^*| \geq L(n)$ then
	\[
	\begin{aligned}
	\prob(\hat{A}_{\eta} \neq A_{\ell(\eta)}) &=  \prob \left(  \left( \bigcup_{i =1}^{|\cC^*|} \supp(W_i)  \right)\neq  \supp(A_{\ell(\eta)}) \right)\\
	& \leq  \prob \left(  \left( \bigcup_{i =1}^{L(n)} \supp(W_i)  \right)\neq  \supp(A_{\ell(\eta)}) \right)\\
	& = \prob \left( \bigcup_{j \in \supp(A_{\ell(\eta)}) } \left(  j \notin \left(\bigcup_{i =1}^{L(n)} \supp(W_i) \right)\right) \right)
	\end{aligned}
	\]
	Given that $|\supp(A_l)| = d$, then applying the union bound it follows that
	\[
	\begin{aligned}
	\prob(\hat{A}_{\eta} \neq A_{\ell(\eta)})  & \leq d\prob \left( j \notin \left( \bigcup_{i =1}^{L(n)} \supp(W_i) \right) \; | \; j \in \supp(A_{\ell(\eta)}) \right)\\
	& =  d \prob \left( \bigcap_{i =1}^{L(n)} \left(j \notin \supp(W_i)\right) \; | \; j \in \supp(A_{\ell(\eta)}) \right).
	\end{aligned}
	\]
	If $j \notin \supp(W_i)$ then there exists a $h \in \supp(Z_i) \backslash \{\eta\}$ such that $j \in \supp(A_h)$. To be clear, if $j \in \supp(A_{\ell(\eta)})$ but $j \notin \supp(W_i)$, then there must exist a column $A_h$ in the submatrix $A_{\supp(Z_i)\backslash\{\ell(\eta)\}}$ which has a 1 in its $j$th entry or row, thereby resulting in $R_{j,i}$ not being a singleton value. Letting $\zeta \defeq \ceil{n/\floor{m/d}}$ then
	\[
	\begin{aligned}
	\prob \left( \bigcap_{i =1}^{L(n)} \left(j \notin \supp(W_i)\right) \; | \; j \in \supp(A_{\ell(\eta)}) \right)
	& \leq \prob \left( \bigcap_{i =1}^{L(n)} \left(j \in \left( \bigcup_{h \in \supp(Z_i) \backslash\{ \ell(\eta)\}} \supp(A_h) \right) \right)\right)\\
	& \leq \prob \left( \bigcap_{i =1}^{L(n)} \left( j \in \left( \bigcup_{h \in \supp(Z_i) \backslash\{\ell(\eta)\}} \supp(A_h) \right)\right) \; \big| \; | \supp(\tilde{A}_j)| = \zeta\right) \\
	& \leq \prob \left( \bigcap_{i =1}^{L(n)} \left( j \in \left( \bigcup_{h \in \supp(X_i) \backslash\{ \ell(\eta)\}} \supp(A_h) \right)\right) \; \big| \; | \supp(\tilde{A}_j)| = \zeta\right) \\
	& = \prod_{i=1}^{L(n)} \prob \left(   j \in \left(  \bigcup_{h \in \supp(X_i) \backslash \{\ell(\eta)\} } \supp(A_h) \right)  \; \big| \; | \supp(\tilde{A}_j)| = \zeta \right).
	\end{aligned}
	\]
The inequality on the second line follows from  the number of nonzeros per row of $A$ being bounded by construction, therefore the events $j \in \supp(A_{\ell(\eta)})$ and $j \in \supp(A_h)$ for $h \neq \ell(\eta)$ are negatively correlated. The inequality on the third line of the above follows by considering the $j$th row to achieve the upper bound $\zeta$ of nonzeros per row. The inequality on the fourth line follows from the fact that $\supp(Z_i) \subseteq \supp(X_i)$ for all $i \in [N]$. Lastly, the equality on the fifth line follows from the mutual independence of the supports of the columns of $X$. Observe that
\[
j \in \left(\bigcup_{h \in supp(X_i) \backslash \{\ell(\eta)\}} \supp(A_h) \right) \iff (\supp(X_i) \cap \supp(\tilde{A}_j)) \backslash \{\ell(\eta)\} \neq \emptyset.
\]
Furthermore, note that by the construction of $X$ in the PSB model, Definition \ref{def_data_model}, that the probability that none of the supports of the columns of $A$ with index in $\supp(X_i) \backslash \{\ell(\eta)\}$ contain $j$ is simply a count of the number of draws of $\supp(X_i) \backslash \{\ell(\eta)\}$ that contain none of the column indices in $\supp(\tilde{A}_j) \backslash\{\ell(\eta)\}$, divided by the total number of possible draws of $\supp(X_i)\backslash \{\ell(\eta)\}$. As a result
\[
\begin{aligned}
\prob \left(j \in  \left( \bigcup_{r \in \supp(X_i) \backslash \{\ell(\eta)\}} \supp(A_r)\right) \; \big| \; | \supp(\tilde{A}_j)| = \zeta\right) = 1 - \binom{n - \zeta}{k-1} \binom{n-1}{k-1}^{-1},
\end{aligned}
\]
from which it follows that
\begin{equation} \label{eq:prob_bound_complicated}
\begin{aligned}
\prob(\hat{A}_l \neq A_l) & \leq  d \left( 1 - \binom{n - \zeta}{k-1} \binom{n-1}{k-1}^{-1} \right)^{L(n)}.
\end{aligned} 
\end{equation}
To simplify equation \ref{eq:prob_bound_complicated}, we bound the binomial coefficients as follows:
\[
	\begin{aligned}
	 \binom{n - \zeta}{k-1} \binom{n-1}{k-1}^{-1}
	& = \frac{(n - \zeta)! (n-k)!}{(n-\zeta-k+1)! (n-1)! }\\
	& = \frac{(n-\zeta)(n - \zeta -1)...(n - \zeta - k + 2)}{(n-1)(n-2)...(n-k+1)}\\
	& \geq  \left(\frac{n-\zeta-k+2}{n-k+1}\right)^{k-1}.
	\end{aligned}
\]
Therefore
\begin{equation} \label{eq:prob_bound_less_complicated}
\begin{aligned}
		\prob(\hat{A}_l \neq A_l) & \leq  d \left( 1 - \left(\frac{n-\zeta-k+2}{n-k+1}\right)^{k-1} \right)^{L(n)}.
\end{aligned} 
\end{equation}
Recalling from Definition \ref{def_data_model} that $k = \alpha_1 n$, $m = \alpha_2 n$ and $\zeta = \ceil{n/ \floor{m/d}} \leq \frac{nd}{m+d}+1$, then
\[
\begin{aligned}
 \left(\frac{n-\zeta-k+2}{n-k+1}\right)^{k-1} & \geq \left( 1 - \frac{nd}{(m+d)(n-k+1)} \right)^{\alpha_1 n-1}\\
 &> \left( 1 - \frac{nd}{m(n-k)} \right)^{\alpha_1 n}\\
 &=\left( 1 - \frac{d}{\alpha_2(1 - \alpha_1)n } \right)^{\alpha_1 n}\\
 & =  \left( 1 - \frac{b}{\alpha_1n}\right)^{\alpha_1n}
\end{aligned}
\]
where $b \defeq \frac{d \alpha_1}{\alpha_2(1-\alpha_1)}$. As a result
\[
\begin{aligned}
\prob(\hat{A}_{\eta} \neq A_{\ell(\eta)}) &\leq  d \left( 1 - \left(\frac{n-\zeta-k+2}{n-k+1}\right)^{k-1} \right)^{L(n)}\\
& \leq d \left( 1 - \left( 1 - \frac{b}{\alpha_1n}\right)^{\alpha_1n}\right)^{L(n)}\\
&=d e^{- \tau(n) L(n)}
\end{aligned}
\] 
where the final equality is derived by taking the exponential of the logarithm, and $\tau(n) \defeq - \ln \left( 1 - \left(1 - \frac{b}{\alpha_1 n} \right)^{\alpha_1 n}\right)$. Note that $\tau(x)$ for $x \in \reals_{>0}$ is continuous and $\lim_{n \rightarrow \infty} \tau(n)= - \ln(1 - e^{-b})$, therefore $\tau(n) = \cO(1)$ as claimed.
\end{proof}


\end{document}